\newcommand{\R}{\ensuremath{\mathbb R}}
\newcommand{\F}{\ensuremath{\mathbb F}}
\newcommand{\Filter}{{\mathbf F}}
\newcommand{\prob}[1]{\ensuremath{\text{{\bf Pr}$\left[#1\right]$}}}
\newcommand{\probb}[2]{\ensuremath{\text{{\bf Pr}$_{#1}\left[#2\right]$}}}
\newcommand{\expct}[1]{\ensuremath{\text{{\bf E}$\left[#1\right]$}}}
\newcommand{\expect}[2]{\ensuremath{\text{{\bf E}$_{#1}\left[#2\right]$}}}
\newcommand{\size}[1]{\ensuremath{\left|#1\right|}}
\newcommand{\poly}{\operatorname{poly}}
\newcommand{\ul}{\underline}
\newcommand{\p}{{\vec{p}}}
\newcommand{\silent}[1]{}
\newcommand{\DB}{{\mathbf D}}
\newcommand{\Ball}{{\mathbf B}}
\newcommand{\T}{{\mathcal T}}
\newcommand{\diff}{\textsf{diff}}
\newcommand{\inv}[1]{\frac{1}{#1}}
\newcommand{\abs}[1]{\left\lvert#1\right\rvert}
\newcommand{\twonorm}[1]{\left\lVert#1\right\rVert_2}
\newcommand{\norm}[1]{\left\lVert#1\right\rVert}
\newcommand{\ldot}{\ldots}
\newcommand{\score}{\textsf{score}}
\newcommand{\rscore}{\textsf{score}}
\def\reals{{\mathbb R}}
\def\E{{\mathcal E}}
\newenvironment{proofof}[1]{{\it Proof}{ of #1}.\hskip10pt}
{\hfill\rule{2mm}{2mm}\vskip5pt}
\begin{document}

\title{Learning Balanced Mixtures of Discrete Distributions \ \\ with Small Sample}
\author{\name Shuheng Zhou \email szhou@cs.cmu.edu \\
       \addr Computer Science Department\\
       Carnegie Mellon University\\
       Pittsburgh, PA 15213, USA}
\maketitle

\begin{abstract}
We study the problem of partitioning a small sample of $n$ individuals 
from a mixture of $k$ product distributions over a Boolean cube $\{0, 1\}^K$ 
according to their distributions. Each distribution is described by a 
vector of allele frequencies in $\R^K$.
Given two distributions, we use $\gamma$ to denote the average $\ell_2^2$ 
distance in frequencies across $K$ dimensions, 
which measures the statistical divergence 
between them. We study the case assuming that bits are independently distributed 
across $K$ dimensions.
This work demonstrates that, for a balanced input instance for $k = 2$,
a certain graph-based optimization function returns the correct partition with 
high probability, where a weighted graph $G$ is formed over $n$ individuals, 
whose pairwise hamming distances between their corresponding bit vectors
define the edge weights, so long as $K = \Omega\left(\ln n/\gamma\right)$ and 
$Kn = \tilde\Omega\left(\ln n/\gamma^2 \right)$.
The function computes a maximum-weight balanced cut of $G$, 
where the weight of a cut is the sum of the weights across all edges in the cut.
This result demonstrates a nice property in the high-dimensional feature
space: one can trade off the number of features that are required with the size 
of the sample to accomplish certain tasks like clustering.
\end{abstract}

\begin{keywords}
  Mixture of Discrete Distributions, Graph-based Clustering, Max-Cut
\end{keywords}
\section{Introduction}
\label{sec:intro}
We explore a type of classification problem that arises in the context
of computational biology.  The problem is that we are given a small
sample of size $n$, e.g., DNA of $n$ individuals, each described by the values 
of $K$ {\em features} or {\em markers}, e.g., SNPs (Single Nucleotide Polymorphisms), 
where $n \ll K$.
Features have slightly different frequencies depending on which population the 
individual belongs to, and are assumed to be independent of each other.
Given the population of origin of an individual, the genotype (represented as
a bit vector in this paper) can be reasonably assumed to be generated by drawing 
alleles independently from the appropriate distribution.
The objective we consider is to minimize the number of features $K$, and thus
total data size $D = nK$, to correctly 
classify the individuals in the sample according to their population of origin, 
given any $n$. We describe $K$ and $nK$ as a function of the ``average quality'' 
$\gamma$ of the features. Throughout the paper, we use $p_i^j$ and $x_i^j$ as 
shorthands for $p_i^{(j)}$ and $x_i^{(j)}$ respectively. We first describe a general
mixture model that we use in this paper. The same model was previously used 
in~\citet{Zhou06} and ~\citet{BCFZ07}.

\medskip
\noindent{\bf Statistical Model:} We have 
$k$ probability spaces $\Omega_1,\ldots,\Omega_k$ over the set $\{0,1\}^K$.
Further, the components ({\em features}) of $z \in \Omega_t$ are independent and
$\probb{\Omega_t}{z_i=1}=p_t^i$ 
($1\leq t\leq k$, $1\leq i\leq K$).
Hence, the probability spaces $\Omega_1,\ldots,\Omega_k$ comprise
the distribution of the features for each of the $k$ populations.
Moreover, the input of the algorithm consists of  a
collection ({\em mixture}) of $n=\sum_{t=1}^kN_t$ unlabeled samples, $N_t$ points 
from $\Omega_t$, and the algorithm is to determine for each data point
from which of $\Omega_1,\ldots,\Omega_k$ it was chosen.
In general we do \emph{not} assume that $N_1,\ldots,N_t$ are revealed to
the algorithm; but we do require some bounds on their relative sizes.
An important parameter of the probability ensemble $\Omega_1,\ldots,\Omega_k$
is the {\em measure of divergence}
\begin{gather}
\label{eq:gamma}
\gamma = \min_{1\leq s<t\leq k}\frac{\sum_{i=1}^K(p^i_s - p^i_t)^2}{K}
\end{gather}
between any two distributions. 
Note that $\sqrt{K \gamma}$ provides a lower bound on the Euclidean distance 
between the means of any two distributions and represents their separation. 

Further, let $N=n/k$ (so if the populations were balanced we would have $N$ of
each type).
This paper proves the following theorem which gives a sufficient condition for a 
balanced ($N_1=N_2$) input instance when $k=2$.
\begin{theorem}\textnormal{\citep[Chapter~9]{Zhou06}}
\label{thm:intro-product-max-cut}
Assume $N_1=N_2=N$. If
$K = \Omega(\frac{\ln N} {\gamma})$ and
$KN = \Omega(\frac{\ln N \log \log N}{\gamma^2})$ then
with probability $1 - 1/\poly(N)$,
among all balanced cuts in the complete graph formed among $2N$ sample points, 
the maximum weight cut corresponds to the partition of the $2N$ points according to 
their distributions. Here the weight of a cut is the sum of weights across all 
edges in the cut, and the edge weight equals the Hamming distance between the bit 
vectors of the two endpoints.
\end{theorem}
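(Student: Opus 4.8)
The plan is to show that, with probability $1-1/\poly(N)$, the true partition --- the one separating the sample points of $\Omega_1$ from those of $\Omega_2$, call it $(A^{*},B^{*})$ --- has strictly larger weight than every other balanced cut, arguing by a pairwise comparison organised by how far a cut lies from the truth. Fix a balanced cut $(A,B)$ with $|A|=|B|=N$ that is not the true one, let $m$ be the number of $\Omega_1$-points placed in $B$ --- which, since $|A|=|B|$, equals the number of $\Omega_2$-points placed in $A$ --- and, interchanging $A$ and $B$ if needed, assume $1\le m\le N/2$. Split the sample into the four blocks $P_{1}=A\cap\Omega_{1}$, $P_{2}=A\cap\Omega_{2}$, $P_{3}=B\cap\Omega_{1}$, $P_{4}=B\cap\Omega_{2}$, of sizes $N-m,\,m,\,m,\,N-m$, and set $\sigma_{j}=\sum_{v\in P_{j}}x_{v}\in\R^{K}$. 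A pair contributes to $W(A^{*},B^{*})$ exactly when its two endpoints lie in different populations, and to $W(A,B)$ exactly when they lie on different sides of $(A,B)$; hence the gap $\Delta:=W(A^{*},B^{*})-W(A,B)$ retains $+d(u,v)$ for the cross-population pairs not cut by $(A,B)$ (those in $P_{1}\times P_{2}$ and $P_{3}\times P_{4}$) and $-d(u,v)$ for the same-population pairs that $(A,B)$ does cut (those in $P_{1}\times P_{3}$ and $P_{2}\times P_{4}$). Expanding every Hamming distance coordinatewise through $|a-b|=a+b-2ab$ on bits, the linear contributions cancel block by block, leaving the clean identity
\[
\Delta \;=\; 2\,\langle\, \sigma_{1}-\sigma_{4},\ \sigma_{3}-\sigma_{2}\,\rangle .
\]
Since $\sigma_{1}-\sigma_{4}$ depends only on the bits of $P_{1}\cup P_{4}$ and $\sigma_{3}-\sigma_{2}$ only on those of $P_{2}\cup P_{3}$, the two vectors are \emph{independent}; we must show $\Delta>0$ for all such $(A,B)$ simultaneously.

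Write $\mu_{t}=(p_{t}^{1},\dots,p_{t}^{K})$ for the mean vector of $\Omega_{t}$ and $\delta=\mu_{1}-\mu_{2}$, so $\|\delta\|_{2}^{2}=K\gamma$. Centring, $\sigma_{1}-\sigma_{4}=(N-m)\delta+W$ and $\sigma_{3}-\sigma_{2}=m\delta+Z$, with $W,Z$ independent and mean zero, and $W_{i}$ (resp. $Z_{i}$) a difference of two centred binomials on $N-m$ (resp. $m$) trials, hence sub-Gaussian with variance proxy $O(N-m)$ (resp. $O(m)$). Substituting,
\[
\Delta/2 \;=\; \underbrace{m(N-m)K\gamma}_{\text{main term}} \;+\; \underbrace{(N-m)\langle\delta,Z\rangle+m\langle\delta,W\rangle}_{\text{linear terms}} \;+\; \underbrace{\langle W,Z\rangle}_{\text{bilinear term}} .
\]
The main term is positive and of order $mNK\gamma$ (using $m\le N/2$); it is exactly $E[\Delta/2]$, since the other three terms have mean zero by independence. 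Fixing a small constant $\eta$, the aim is to show that, outside an event of probability $N^{-\Omega(m)}$, each of the three remaining terms is at most $\eta\,m(N-m)K\gamma$ in absolute value, whence $\Delta\ge 2(1-3\eta)\,m(N-m)K\gamma>0$.

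For the linear terms, $\langle\delta,Z\rangle$ is a linear form in the independent sub-Gaussian coordinates $Z_{i}$ with coefficients $\delta_{i}$, hence sub-Gaussian with variance proxy $O(m\|\delta\|_{2}^{2})=O(mK\gamma)$, and a Hoeffding/Chernoff bound gives $\Pr[(N-m)|\langle\delta,Z\rangle|\ge \eta\,m(N-m)K\gamma]\le\exp(-\Omega(mK\gamma))$; the term $m\langle\delta,W\rangle$ is handled the same way after noting $N-m\ge m$. Since $K=\Omega(\ln N/\gamma)$ means $K\gamma=\Omega(\ln N)$, both are $N^{-\Omega(m)}$, with an arbitrarily large hidden exponent once the constant in that hypothesis is taken large. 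The bilinear term $\langle W,Z\rangle=\sum_{i=1}^{K}W_{i}Z_{i}$ is a sum of $K$ \emph{independent} mean-zero sub-exponential variables with $\|W_{i}Z_{i}\|_{\psi_{1}}=O(\sqrt{(N-m)m})$ and $\sum_{i}\operatorname{Var}(W_{i}Z_{i})=O((N-m)mK)$, so Bernstein's inequality yields
\[
\Pr\bigl[\,|\langle W,Z\rangle|\ge \eta\,m(N-m)K\gamma\,\bigr]\ \le\ \exp\!\bigl(-\Omega\bigl(\min\{\,m(N-m)K\gamma^{2},\ \sqrt{m(N-m)}\,K\gamma\,\}\bigr)\bigr).
\]
Using $m\le N-m$, both branches are $\Omega(m\ln N)$: the first since $m(N-m)K\gamma^{2}\ge\tfrac12 mNK\gamma^{2}=\tilde\Omega(m\ln N)$ by $KN=\tilde\Omega(\ln N/\gamma^{2})$, the second since $\sqrt{m(N-m)}\ge m$ and $K\gamma=\Omega(\ln N)$. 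Finally, there are $\binom{N}{m}^{2}\le(eN/m)^{2m}\le N^{2m}$ balanced cuts with exactly $m$ misclassified points of each type; a union bound over these, followed by summing the resulting geometric series over $1\le m\le N/2$, bounds the total failure probability by $1/\poly(N)$.

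The one real obstacle is the bilinear term $\langle W,Z\rangle$: the crude estimate $|\langle W,Z\rangle|\le\|W\|_{1}\|Z\|_{\infty}$ only gives a tail of $N^{-\Omega(1)}$, which cannot survive the $\approx N^{2m}$ union bound once $m$ exceeds a constant. One must treat it as the decoupled bilinear chaos it is --- with sub-exponential scale $\sqrt{m(N-m)}$ rather than $m(N-m)$, and variance only $O(m(N-m)K)$ --- and it is precisely this Bernstein trade-off between the two parameters that produces the sample-size bound $KN=\tilde\Omega(\ln N/\gamma^{2})$, the feature bound $K=\Omega(\ln N/\gamma)$ being what the linear terms (and the second Bernstein branch) require; the $\log\log N$ buried in the $\tilde\Omega$ is the slack paid in this chaos estimate. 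I expect essentially all of the technical work to lie in making the constants in these concentration bounds explicit enough to beat the $N^{2m}$ union bound uniformly in $m$.
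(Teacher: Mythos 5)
Your proposal is correct, and it takes a genuinely different route from the paper's. The paper works through the inner-product score $\langle\vec{x},\vec{y}\rangle$ (the Hamming max-cut statement is derived as a corollary) and argues via a conditioned martingale: for each candidate cut it fixes the bits of the $2L$ swapped nodes, excludes two layers of bad events on those bits --- the per-node deviation events $\E_1^N$ and a ``simultaneously large deviation'' event $\E_2^L$ on the dimension-wise sums $f_2^k$ --- and then applies Azuma's inequality as the bits of the unswapped nodes are revealed one by one. The probability of $\E_2^L$ is controlled by discretising the deviation vector $(t_1,\dots,t_K)$ into $(\log N)^K$ dyadic boxes and union-bounding over them; it is precisely that $(\log N)^K$ factor which forces the threshold $\Delta$ in Definition~\ref{def:big-delta} to carry a $K\ln 2\cdot\log\log N$ term, and hence the $\log\log N$ into the hypotheses. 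Your identity $\Delta=2\langle\sigma_1-\sigma_4,\sigma_3-\sigma_2\rangle$ (which is right: with block sizes $N-m,m,m,N-m$ the linear parts of $|a-b|=a+b-2ab$ do cancel) isolates the same structure explicitly as a deterministic main term, two linear Hoeffding terms, and a decoupled bilinear chaos $\langle W,Z\rangle$, the last handled by Bernstein for independent sub-exponential summands, so the martingale bookkeeping and the dyadic discretisation disappear entirely. One consequence worth noting: as your Bernstein computation stands, both branches already give $N^{-\Omega(m)}$ from $K\gamma=\Omega(\ln N)$ and $NK\gamma^2=\Omega(\ln N)$ alone (the constants just need to be large), so --- contrary to the remark at the end of your proposal --- the chaos estimate does not actually need the $\log\log N$. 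Your argument proves the conclusion under the weaker hypotheses $K=\Omega(\ln N/\gamma)$ and $KN=\Omega(\ln N/\gamma^2)$; the $\log\log N$ is an artifact of the paper's discretisation step, not an intrinsic cost of the bilinear term.
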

Variants of the above theorem, based on a model that allows two random
draws at each dimension for all points, are given in~\citet[Theorem~3.1]{CHRZ07} and
~\citet[Chapter~8]{Zhou06}.
The cleverness there is the construction of a diploid score at each dimension, 
given any {\it pair of individuals}, under the assumption that two random bits can be 
drawn from the same distribution at each dimension.
In expectation, diploid scores are higher among pairs from different groups 
than for pairs in the same group across all $K$ dimensions.
In addition,~\citet[Lemma~2.2]{CHRZ07} shows that when $K > \Omega(\ln n/\gamma^2)$, 
given two bits from each dimension, one can always classify for any size of $n$, 
for unbalanced cases with any number of mixtures, using essentially connected 
component based algorithms, given the weighted graph as in described in 
Theorem~\ref{thm:intro-product-max-cut}.

The key contribution of this paper is to show new ideas that we use to 
accomplish the goal of clustering with the same amount of features, 
while requiring only one random bit at each dimension. 
While some ideas and proofs for Theorem~\ref{thm:intro-product-max-cut} 
in Section~\ref{sec:events} have appeared in~\citet{CHRZ07}, modifications for 
handling a single bit at each dimension are ubiquitous throughout the proof. 
Hence we contain the complete proof in this paper nonetheless to give a 
complete exposition.

Finding a max-cut is computationally intractable; 
a hill-climbing algorithm was given in~\citet{CHRZ07} to partition a balanced 
mixture, with a stronger requirement on $K$, given any $n$, 
as the middle green curve in Figure~\ref{fig:curves} shows.
Two simpler algorithms using spectral techniques were constructed 
in~\citet{BCFZ07}, attempting to reproduce conditions above.
Both spectral algorithms in~\citet{BCFZ07} achieve the bound established by
Theorem~\ref{thm:intro-product-max-cut} without requiring the input instances
being balanced, and work for cases when $k \geq 2$ is a constant;
However, they require $n = \Omega(1/\gamma)$, even when $k=2$ and the input instance 
is balanced, as the vertical line in Figure~\ref{fig:curves} shows.
Note that when $N = \tilde\Omega(1/\gamma)$, i.e., when we have enough sample from 
each distribution, $K = \Omega(\frac{\ln N} {\gamma})$ becomes the only requirement 
in Theorem~\ref{thm:intro-product-max-cut}.
Exploring the tradeoffs between $n$ and $K$, when $n$ is small, as in 
Theorem~\ref{thm:intro-product-max-cut} in algorithmic design is both of 
theoretical interests and practical value.
\begin{figure}[htb]
\begin{center}
\includegraphics[width=0.50\textwidth,angle=-90]{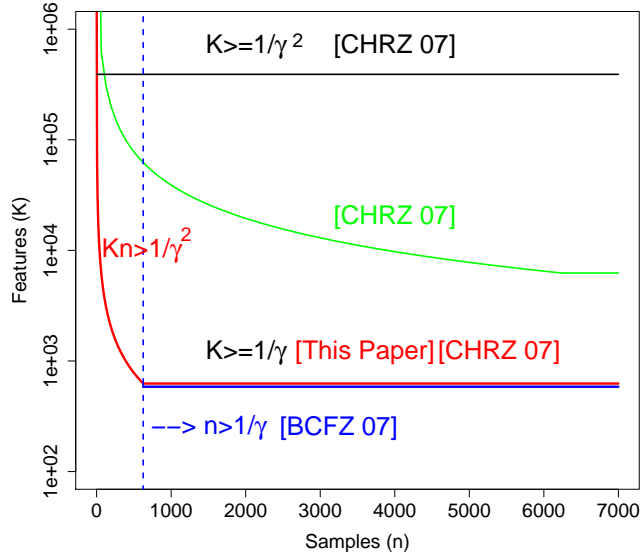}
\caption{This figure illustrates results from three papers. Top and middle curves
are algorithmic results from~\citet{CHRZ07}. Bottom red curve are non-algorithm
results from this paper with single random draw 
and~\citet{CHRZ07} with two random draws at each dimension. 
For $n > \Omega(1/\gamma)$, to the right of the vertical dashed 
line, spectral algorithms~\citet{BCFZ07} achieve bounds given in the red curve.
The curves are generated using a biased distribution in terms of the $\ell_1$ 
distances in allele frequencies: for $9/10$ of features, 
$\abs{p_1^i - p_2^i}= 10^{-5}$; and for the rest, it is $0.1265$; for this mixture,
$\gamma = 0.0016$.}
\label{fig:curves}
\end{center}
\end{figure}

\subsection{Related Work}
\label{sec:related}
In a seminal paper, ~\citet{PSD00} presented a model-based clustering method to 
separate populations using genotype data. 
They assume that observations from each cluster are random
from some parametric model. Inference for the parameters corresponding
to each population is done jointly with inference for the cluster membership
of each individual, and $k$ in the mixture, using Bayesian methods.

Applying spectral techniques by~\citet{Mcs01} on graph partitioning, 
and an extension due to~\citet{Coj06} from their original setting
on graphs to the asymmetric $n\times K$ matrix of individuals/features 
yields a polynomial time algorithm for this problem when $k$ is given as a 
constant, as analyzed by~\citet{BCFZ07}.
For $k=2$, an extremely simple algorithm based on examining values in the 
top two left singular vectors of the random matrix can cluster samples 
efficiently. However, spectral techniques require a lower bound on the 
sample size $n$ to be at least $1/\gamma$ as shown in 
Figure~\ref{fig:match-score}.

There are two streams of related work in the learning community.
The first stream is the recent progress in learning from the point of view
of clustering: given samples drawn from a mixture of
well-separated Gaussians (component distributions), one aims to
classify each sample according to which component distribution it comes
from, as studied in~\citet{Das99,DS00,AK01,VW02,AM05,KSV05,DHKS05}.
This framework has been extended to more general
distributions such as log-concave distributions
by~\citet{AM05,KSV05}, and heavy-tailed distributions by~\citet{DHKS05}, 
as well as to more than two populations.
These results focus mainly on reducing the requirement on the
separations between any two centers $P_1$ and $P_2$. In
contrast, we focus on the sample size $D$. This is motivated
by previous results~\citep{CHRZ07,Zhou06} stating that by acquiring enough
attributes along the same set of dimensions from each component distribution,
with high probability, we can correctly classify every individual.

While our aim is different from those results, where $n > K$ is almost
universal and we focus on cases $K > n$, we do have one common
axis for comparison, the $\ell_2$-distance between any two centers of the
distributions. In earlier works of~\citet{DS00,AK01}, the separation requirement 
depended on the number of dimensions of each distribution; this has recently 
been reduced to be independent of $K$, the dimensionality of 
the distribution for certain classes of distributions in~\citet{AM05,KSV05}.
This is comparable to our 
requirement in Theorem~\ref{thm:intro-product-max-cut} 
and that of~\citet{BCFZ07} for discrete distributions. 
For example, according to Theorem~$7$ in~\citet{AM05},
in order to separate the mixture of two Gaussians,
$\twonorm{P_1 - P_2} =
\Omega\left(\frac{\sigma}{\sqrt{\omega}} + \sigma \sqrt{\log n} \right)$
is required. 

Besides Gaussian and Logconcave, a general theorem 
in~\citet[Theorem~6]{AM05} is derived that in principle also applies to mixtures of
discrete distributions. 
The key difficulty of applying their theorem directly
to our scenario is that it relies on a concentration
property of the distribution~\citep[Eq (10)]{AM05} that need not hold in our case.
In addition, once the distance between any two centers is fixed, that is, once 
$\gamma$ is fixed in the discrete distribution, the sample size $n$
in their algorithms is always larger than
$\Omega\left(\frac{K}{\omega} \log^5 K\right)$~\citep{AM05,KSV05}
for log-concave distributions (in fact, in Theorem~$3$ of~\citet{KSV05},
they discard at least this many individuals in order to correctly classify
the rest in the sample), and larger than $\Omega(\frac{K}{\omega})$
for Gaussians~\citep{AM05}, whereas $n < K$ always holds when $n < \inv{\gamma}$
in the present paper.

The second stream of work is under the PAC-learning framework, where
given a sample generated from some target distribution $Z$, the goal
is to output a distribution $Z_1$ that is close to $Z$ in Kullback-Leibler 
divergence: $KL(Z||Z_1)$, where $Z$ is a mixture of product distributions 
over discrete domains or 
Gaussians~\citep{KMRRSS94,FM99,Cry99,CGG02,MR05,FOS05,FOS06}.
They do not require a minimal distance between any two
distributions, but they do not aim to classify every sample point
correctly either, and in general require much more data.

\section{Preliminaries and Definitions}
\label{product-intro}
Let us first formally define a product distribution over a Boolean cube
$\{0, 1\}^K$.
\begin{definition}
A product distribution $\DB_m, \forall m =1, 2$, over a Boolean cube 
$\{0, 1\}^K$ is characterized by its expected value 
$\p_m = (p^1_m, \ldots, p^K_m) \in [0, 1]^K$, which we refer to
as the center of $\DB_m$.
\end{definition}
We then restate our problem as a fundamental problem of learning 
mixtures of two product distributions over discrete domains, in particular,
over the $K$-dimensional Boolean cube $\{0, 1\}^K$, where $K$ is 
a variable whose value we need to resolve.
We use $X = \vec{x} = (x^1, x^2, \ldots, x^K)$ to represent a
random $K$-bit vector, given a set of $K$ attributes.
Sometimes we also use $x^i_j$ to represent the $i^{th}$ coordinate
of point $X_j$.
\begin{definition}
A random vector $\vec{x}$ from the distribution $\DB_m$, which
we denote as $\vec{x} \sim \DB_m$ or $\vec{x} \sim \p_m$, 
where $\p_m$ is the center of $\DB_m$, is generated by independently 
selecting each coordinate $x^i$ to be $1$ with probability 
$p^i_m$ and thus $\forall i, \forall m$,
$\expect{\vec{x} \sim \DB_m}{\vec{x}} = \vec{p}_m.$
\end{definition}
We next use the inner-product of two $K$-dimensional vectors
$\vec{x}$ and $\vec{y}$ as the $\score$ between $X$ and $Y$, as 
in Definition~\ref{def:product-score}, and define a complete graph,
where nodes are sample points and each edge weight is the $\score$ 
between the two endpoints. 
\begin{definition}
\label{def:product-score}
$\score(X, Y) = <\vec{x}, \vec{y}> = \sum_{i=1}^K x^i y^i$.
\end{definition}

\begin{definition}
\label{def:diff-x}
Let $X$ be a sample point from distribution $\DB_1$ 
and $Y$ be a sample point from $\DB_2$. Let $X'$, $Y'$ 
be points randomly drawn from $\DB_1$ and $\DB_2$ respectively,
\begin{eqnarray*}
\diff(X) & = & 
\expect{\vec{x'} \sim \vec{p}_1}{\rscore(X, X')} - 
\expect{\vec{y'} \sim \vec{p}_2}{\rscore(X, Y')}, \\
\diff(Y) & = & 
\expect{\vec{y'}\sim \vec{p}_2}{\rscore(Y, Y')} - 
\expect{\vec{x'} \sim \vec{p}_1}{\rscore(Y, X')},
\end{eqnarray*}
where expectations are taken over all possible 
realizations of $X'$, $Y'$ respectively.
\end{definition}

\section{The Approach}
\label{sec:approach}
Our goal is to show that the perfect partition  $\T = (P_1, P_2)$ is the 
minimum cut (min-cut) in terms of $\score$ 
among all balanced cut $(S, \bar{S})$, 
both in expectation and with high probability. 
Let us first define these objects formally. 
In this complete graph, let $P_1$ represent the set of points 
$X_1, X_2, \ldot, X_N$ from a product distribution $\DB_1$, 
and $P_2$ represent the set of points $Y_1, Y_2, \ldot, Y_N$  
from a product distribution $\DB_2$.
\begin{definition}
Consider a balanced cut $(S,\bar{S})$, as in Figure~\ref{fig:match-score}, where
$L \in [1, N/2]$ is the number of nodes that have been swapped from one 
side of $\T$ to the other, let
$S = \{X_i \in P_1, i = 1, \ldot, N-L, V_j \in P_2, j = 1, \ldot, L\}$, and
$\bar{S} = \{Y_i \in P_2, i= 1, \ldot, N-L, U_j \in P_1, j=1,\ldot, L\}$.
Let
$\rscore(S, \bar{S}) = \sum_{i=1}^{N-L} \sum_{j=1}^{N-L} \rscore(X_i, Y_j) + $ \\\
$\sum_{i=1}^L \sum_{j=1}^{L} \rscore(U_i, V_j) + 
\sum_{i=1}^{N-L} \sum_{j=1}^{L}\rscore(X_i, U_j)+ \rscore(Y_i, V_j),$
which defines $\rscore(\T)$ when $L = 0$, i.e.,
$\rscore(\T) = \sum_{i=1}^N \sum_{j=1}^{N} \rscore(X_i, Y_j).$
\end{definition}
It is easy to verify that in expectation, the perfect partition has
the minimum  $\rscore$, i.e., $\forall$ balanced $(S, \bar{S})$ other than
$\T$, that is, $\expct{\rscore(\T)}  < \expct{\rscore(S, \bar{S})}$). 
The following theorem says that this is true with high probability, given
a large enough $K$.
\begin{theorem}
\label{thm:product-min-cut}
For a balanced mixture of two distributions, with probability $1 - 1/\poly(N)$, 
$\rscore(\T) < \rscore(S, \bar{S})$, for all other balanced cut $(S, \bar{S})$, 
given $K = \Omega(\frac{\ln N} {\gamma})$ and 
$KN = \Omega(\frac{\ln N \log \log N}{\gamma^2})$, and $N \geq 4$.
\end{theorem}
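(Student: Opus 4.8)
The plan is to show that for every balanced cut $(S,\bar S)$ obtained from $\T$ by swapping $L$ nodes on each side, the score difference $\rscore(S,\bar S) - \rscore(\T)$ is positive with high probability, and then union-bound over all choices of the swapped sets. First I would compute the expectation $\mu_L := \expct{\rscore(S,\bar S) - \rscore(\T)}$ as a function of $L$. Using independence across the $K$ coordinates and $\expect{\vec x\sim\DB_m}{\vec x}=\p_m$, each moved pair contributes a term governed by $\sum_i (p_1^i-p_2^i)^2 = K\gamma$; the bookkeeping (grouping edges into $X$--$Y$, $U$--$V$, $X$--$U$, $Y$--$V$ blocks as in the definition) should yield something of the form $\mu_L = c\,L(N-L)\,K\gamma$ for an absolute constant $c>0$, which is minimized over $L\in[1,N/2]$ at the endpoints and is at least $\Omega(L N K\gamma)$ for all admissible $L$. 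The quantities $\diff(X),\diff(Y)$ from Definition~\ref{def:diff-x} are exactly the per-node marginal gains that make this accounting transparent.

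Next I would prove concentration of $\rscore(S,\bar S)-\rscore(\T)$ around $\mu_L$. The random variable is a function of the $2NK$ independent bits $x_j^i$; a single bit flip changes the score by at most $O(N)$ (it participates in $O(N)$ edges, each of weight change $\le 1$), so McDiarmid's inequality gives a deviation bound of the shape $\exp(-t^2 / (C N^2 \cdot NK)) = \exp(-t^2/(CN^3K))$. To beat the union bound I need $\mu_L \gg$ the deviation at scale $\mu_L$, i.e. $\mu_L^2 / (N^3 K) \gtrsim L\log N$ (the number of $(S,\bar S)$ with swap size $L$ is $\binom{N}{L}^2 \le \exp(2L\log N)$). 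Plugging $\mu_L = \Theta(LNK\gamma)$ turns this into $L^2 N^2 K^2\gamma^2 / (N^3 K) \gtrsim L\log N$, i.e. $L K\gamma^2 \gtrsim (\log N)/N \cdot \ldots$ — after simplification the binding constraints become $K = \Omega(\ln N/\gamma)$ (needed already at $L=1$ so that the expected gap from moving one vertex dominates its own fluctuation) together with $KN = \Omega(\ln N\log\log N/\gamma^2)$ (needed to control the full range of $L$ simultaneously; the $\log\log N$ slack absorbs the sum over $L=1,\dots,N/2$ in the union bound, e.g. via $\sum_L e^{-\Omega(L\log N)}$ style terms balanced against the worst-case $L$). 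I would carry the two regimes — small $L$ (say $L\le \polylog N$) and large $L$ — separately if the single McDiarmid bound is not tight enough, using a sharper Bernstein/Chernoff bound for the small-$L$ case where the variance, not the worst-case range, controls the tail.

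The main obstacle I expect is the concentration step for small $L$, especially $L=1$: there the expected gap is only $\Theta(NK\gamma)$ while the naive Lipschitz constant $O(N)$ per bit across $NK$ bits gives fluctuations of order $N\sqrt{NK}=N^{3/2}\sqrt K$, so a crude McDiarmid bound would demand $NK\gamma \gg N^{3/2}\sqrt K\sqrt{\log N}$, i.e. $K\gamma^2 \gg N\log N$, which is far too strong. The fix is to observe that a single swapped vertex only changes edges incident to that vertex, so the relevant random variable depends on only $O(NK)$ bits each with Lipschitz constant $O(1)$ in the score of a \emph{fixed} comparison, and more importantly its variance is $O(NK)$ rather than $O(N^3K)$; a Bernstein-type bound then needs only $NK\gamma \gg \sqrt{NK\log N}$, i.e. $NK\gamma^2 \gg \log N$, matching the hypothesis. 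So the real work is to set up, for each $L$, the correct "local" decomposition of $\rscore(S,\bar S)-\rscore(\T)$ as a sum over the $2L$ swapped vertices of their individual $\diff(\cdot)$-type contributions plus a lower-order cross term, bound each piece by the appropriate Bernstein inequality, and verify that the two stated conditions on $K$ and $KN$ are exactly what make all $O(N)$ union-bound terms summably small. I would handle this in Section~\ref{sec:events} by first establishing the per-vertex concentration events and then assembling them.
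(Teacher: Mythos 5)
Your high-level plan (compute $\expct{\diff(\T,(S,\bar S),L)}=L(N-L)K\gamma$, show concentration, union bound over $\binom{N}{L}^2$ cuts) matches the paper, and you correctly diagnose the central obstacle: a naive McDiarmid bound on all $2NK$ bits is far too weak because the Lipschitz constants of the swapped-node bits scale with $N-L$, and at $L=1$ this forces $K\gamma^2\gtrsim\log N$ rather than the two weaker stated conditions. Your variance heuristic for $L=1$ (variance $O(NK)$, yielding $NK\gamma^2\gtrsim\log N$) is also essentially right. Where your proposal diverges from the paper, and where I think it is incomplete, is in how the concentration is actually executed. The paper does not apply Bernstein directly. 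It observes that $\diff(\T,(S,\bar S),L)=\sum_k\bigl(-f_2^k(\ul h)\bigr)\sum_i(y_i^k-x_i^k)$ is, for each $k$, a \emph{product} of two independent sums, and the issue with a one-shot Bernstein bound is that these summands are not nicely bounded (each is $O\bigl(L(N-L)\bigr)$) and the relevant variance depends on the realized values of the $f_2^k$. The paper's device is to \emph{condition first}: reveal the $2KL$ bits $\ul h$ on the swapped nodes, exclude a ``bad node event'' $\E^N_1$ so that $\expect{\ul h}{\diff}\ge (N-L)LK\gamma/2$ (Theorem~\ref{thm:exp-h-p}), additionally exclude a ``bad deviation event'' $\E^L_2$ so that $\sum_k t_k^2\le\Delta$ controls the realized magnitudes $|f_2^k(\ul h)|\le|L(p_1^k-p_2^k)|+|t_k\sqrt L|$, and \emph{then} apply Azuma (Theorem~\ref{thm:diff-dev-p}) to the remaining $2K(N-L)$ future bits, which is now a weighted linear sum with controlled coefficients, giving $\sigma^2\le 4(N-L)L^2K\gamma+4(N-L)L\Delta$. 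Your ``per-vertex Bernstein'' sketch never sets up this conditioning, and without it you would still be handling a sum of products whose second moment fluctuates with $L$ in a way your $L=1$ calculation doesn't anticipate.

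Two more concrete gaps. First, your claim that the relevant bits have ``Lipschitz constant $O(1)$'' after localizing to the swapped vertex is not right even for $L=1$: a single bit $v_1^k$ or $u_1^k$ touches $2(N-1)$ edges, so its Lipschitz constant is $\Theta(N)$; the saving is entirely in the variance, not in the Lipschitz constants, and to exploit that you need a bound that sees the variance, which is precisely what conditioning on $\ul h$ buys you in the paper. Second, your sketch never explains where the $\log\log N$ in $KN=\Omega(\ln N\log\log N/\gamma^2)$ comes from; in the paper it arises because $\Pr[\E^L_2]$ is bounded by a union over $(\log N)^K$ dyadic blocks for the vector $(t_1,\dots,t_K)$ (Lemma~\ref{lemma:rho-2-p}), which forces $\Delta\gtrsim K\log\log N$. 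If your approach truly bypassed $\E^L_2$ you might not see this factor, but then you would need to show that the product-structured Bernstein bound closes for all $L$ simultaneously under just $K=\Omega(\ln N/\gamma)$ and $KN=\Omega(\ln N/\gamma^2)$, which you have not done. So your proposal is a genuinely different route in spirit, but as written it is missing the martingale/conditioning machinery of Sections~\ref{sec:cond-exp}--\ref{sec:bounded-diff} that makes the paper's argument go through, and it is not clear the gap can be filled by Bernstein alone.
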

\begin{corollary}
Following steps in Theorem~\ref{thm:product-min-cut}, one can show
that if scores are replaced with pairwise Hamming distances, i.e., 
$\forall X, Y,$ $H(\vec{x}, \vec{y}) = \sum_{i=1}^K x^i \oplus y^i$,
the max-cut will identify the perfect partition with high probability,
given the same order of number of attributes as stated in 
Theorem~\ref{thm:intro-product-max-cut}.
\end{corollary}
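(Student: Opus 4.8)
The plan is to reduce the Hamming-distance max-cut statement to the $\score$ min-cut statement of Theorem~\ref{thm:product-min-cut} via an exact algebraic identity that holds deterministically for every configuration of the $2N$ bit vectors; no new probabilistic analysis is required. The starting point is the single-coordinate identity $x^i\oplus y^i = x^i + y^i - 2x^i y^i$, valid for all $x^i,y^i\in\{0,1\}$. Summing over the $K$ coordinates yields
\[
H(\vec{x},\vec{y}) \;=\; \size{\vec{x}} + \size{\vec{y}} - 2\,\score(X,Y),
\]
where $\size{\vec{x}}:=\sum_{i=1}^K x^i$ denotes the Hamming weight of $X$, and $\score$ is the inner product of Definition~\ref{def:product-score}.

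Next I would sum this identity over all pairs $(X,Y)$ with $X\in S$ and $Y\in\bar{S}$, for an arbitrary balanced cut $(S,\bar{S})$ with $\size{S}=\size{\bar{S}}=N$. Because each sample point lies on exactly one side of the cut, the weight terms collapse to a partition-independent constant: $\sum_{X\in S}\sum_{Y\in\bar{S}}\bigl(\size{\vec{x}}+\size{\vec{y}}\bigr) = N\bigl(\sum_{X\in S}\size{\vec{x}}+\sum_{Y\in\bar{S}}\size{\vec{y}}\bigr) = N W$, where $W:=\sum_{Z\in P_1\cup P_2}\size{\vec{z}}$. Hence, for every balanced cut,
\[
H(S,\bar{S}) \;=\; N W - 2\,\score(S,\bar{S}),
\]
and in particular $H(\T) = N W - 2\,\score(\T)$.

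Since $N W$ is a fixed constant over the family of balanced cuts, the map $s\mapsto NW - 2s$ is order-reversing and injective; therefore the balanced cut of maximum Hamming weight is exactly the balanced cut of minimum $\score$, and a strict minimum of $\score$ transfers to a strict maximum of $H$. By Theorem~\ref{thm:product-min-cut}, under $K=\Omega(\ln N/\gamma)$ and $KN=\Omega(\ln N\log\log N/\gamma^2)$ with $N\geq 4$, with probability $1-1/\poly(N)$ the perfect partition $\T$ is the unique minimizer of $\score$ among all balanced cuts; on the same event $\T$ is the unique maximizer of $H$. This is precisely the conclusion of Theorem~\ref{thm:intro-product-max-cut}, with the same order of attributes.

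No substantive obstacle arises here: all the probabilistic work is inherited from Theorem~\ref{thm:product-min-cut}, and the only points needing (trivial) verification are that $W$ does not depend on the chosen cut and that the affine relation between $H$ and $\score$ preserves strict inequalities. Accordingly, the requirements on $K$ and $KN$ carry over verbatim, matching the order of attributes stated in Theorem~\ref{thm:intro-product-max-cut}.
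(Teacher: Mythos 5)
Your proof is correct and takes a genuinely cleaner route than what the paper gestures at. The paper's corollary says ``following steps in Theorem~\ref{thm:product-min-cut}'' and gives no actual proof, implicitly suggesting one should rerun the concentration machinery (bad-node events, $\E^L_2$, Azuma) with Hamming weights in place of inner-product scores. You instead establish the deterministic, sample-path identity
$H(\vec{x},\vec{y}) = \size{\vec{x}}+\size{\vec{y}} - 2\,\score(X,Y)$,
sum it over the cut, and observe that because the cut is balanced ($\size{S}=\size{\bar{S}}=N$) the weight contribution collapses to the cut-independent constant $NW$ with $W=\sum_{Z}\size{\vec{z}}$. Hence $H(S,\bar{S}) = NW - 2\,\score(S,\bar{S})$ for every balanced cut, so the arg-max of $H$ is literally the arg-min of $\score$, and Theorem~\ref{thm:product-min-cut} transfers with no further probability. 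This buys an exact equivalence on every outcome, not just a parallel whp statement, and makes the translation between the two objective functions transparent. One point worth flagging explicitly, which your write-up uses but could stress more: the balance of the cut is essential to this reduction. For an unbalanced cut the coefficients in front of $\sum_{X\in S}\size{\vec x}$ and $\sum_{Y\in\bar S}\size{\vec y}$ would be $\size{\bar S}$ and $\size{S}$ respectively, so the affine offset would depend on the cut and the order-reversing correspondence would break. Since the theorem is stated for balanced cuts only, this is not a gap, but it explains why the corollary is tied to the balanced setting in a way the ``follow the steps'' approach is not.
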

\def\sleft{\hskip-5pt}
\begin{figure}
\begin{center}
\begin{tabular}{cc}
\sleft
\includegraphics[width=2in, height=2in,angle=-0]{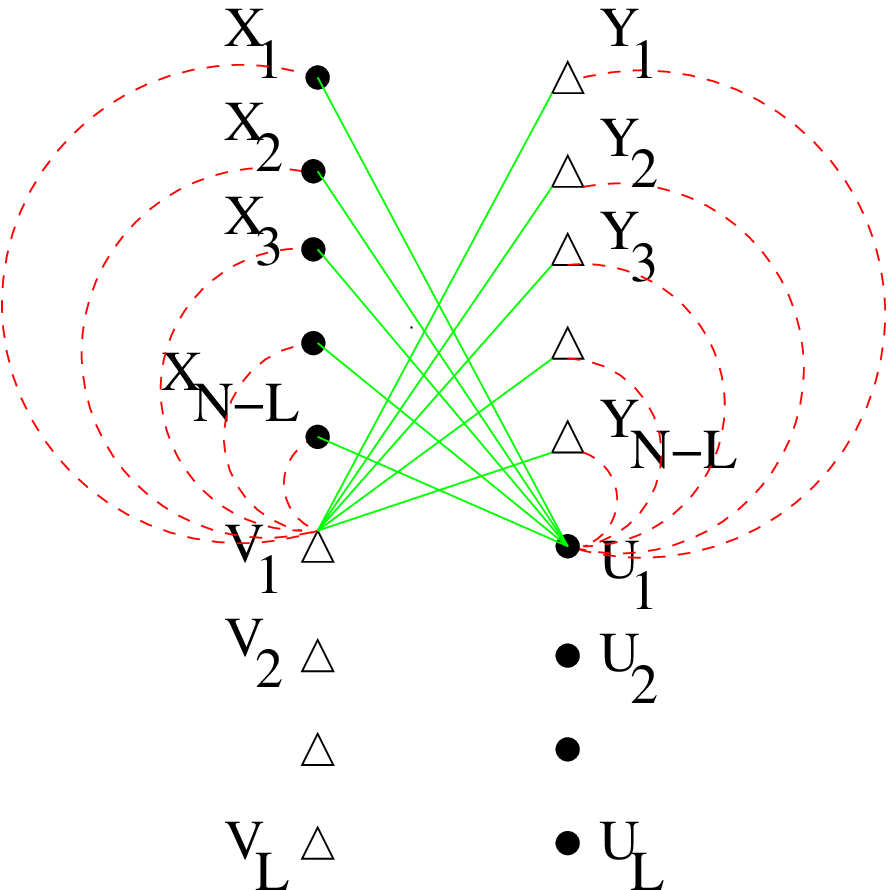} &
\includegraphics[width=2in, height=2in,angle=-0]{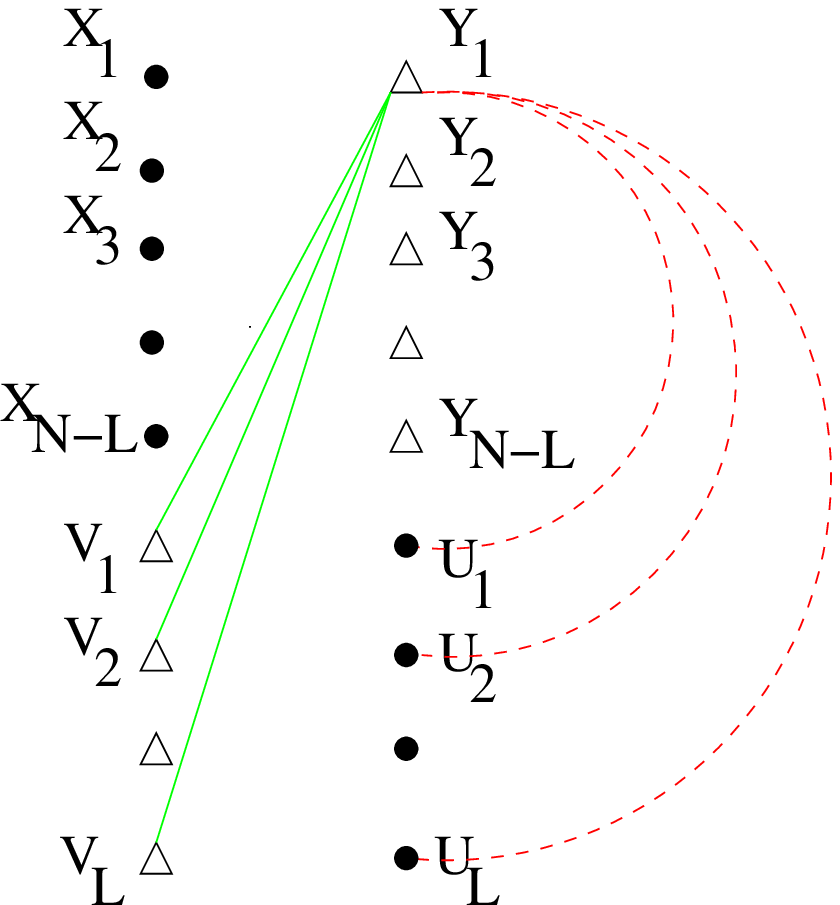}\\
\end{tabular}
\caption{Edges that are different between a perfect partition
$\T$ and another balanced partition $(S, \bar{S})$, seen only from
$U_1 \sim \p_1$ and $V_1 \sim \p_2$, and $Y_1 \sim \p_2$, 
red dotted edges are in $\T$ and green solid edges are in $(S, \bar{S})$.
In more detail, we refer to $X_i$ and $Y_i$, $\forall i \in [1, N-L]$ 
as  {\em unswapped} nodes, as the majority type in their side; we denote 
$V_j \in (S \cap P_2), U_j \in (\bar{S} \cap P_1), \forall j \in [1, L]$ 
as {\em swapped} nodes as the minority on their new side.
In particular, for $(S, \bar{S})$, original cut (red dotted) edges 
that belong to $\T$ are replaced with (green solid) edges,
which are the {\em new edges} that appear in $(S, \bar{S})$;
the set of common edges that belong to $\T \cap (S, \bar{S})$ are not
shown.}
\label{fig:match-score}
\end{center}
\end{figure}
The key technicality in this paper and~\citet{CHRZ07} is that,
instead of showing that each balanced cut $(S, \bar{S})$ has score
that is close to its expected value, we show that, for each balanced cut 
$(S, \bar{S})$, the following random variable $\diff(\T, (S,\overline{S}), L)$ 
as in~(\ref{eq:diff-score}), which captures the difference between the present 
cut and the unique perfect partition $\T$, stays close to its expected value, 
which is a positive number, given a large enough $K$. 
Note that for a particular balanced cut $(S, \bar{S})$, 
$\diff(\T, (S,\overline{S}), L) > 0$ immediately implies that 
$\rscore(\T) < \rscore(S, \bar{S})$.
Figure~\ref{fig:match-score} shows the edges whose weight contribute to:
\begin{eqnarray}
\label{eq:diff-score}
\lefteqn{\diff(\T, (S,\overline{S}), L) = \score(S, \bar{S}) - \score(\T) = } \\ \nonumber
& & \sum_{j=1}^L \sum_{i=1}^{N-L} \score(V_j, Y_i) - \score (V_j, X_i) + \score(U_j, X_i)- \score (U_j, Y_i).
\end{eqnarray}
The random variable 
$\diff(\T, (S,\overline{S}), L), \forall N/2 \geq L \geq 1$, comprises 
exactly of $\rscore$s over the set of edges that differ between those 
in $\T$ and those in $(S, \bar{S})$, which is exactly the 
set of $4 L (N-L)$ edges between swapped nodes and unswapped nodes,
among which $4(N-L)$ edges are shown in Figure~\ref{fig:match-score}.
Hence we only need to consider the influence 
of $2NK$ random bits over these two sets of edges contributing 
to~(\ref{eq:diff-score}), $\forall (S, \bar{S})$.
It is not hard to verify the following:
\begin{equation}
\label{eq:exp-diff}
\expct{\diff(\T, (S,\overline{S}), L)} 
=  
(N-L)L\left(\expect{\vec{x}\sim \vec{p}_1}{\diff(X)} +
 \expect{\vec{y}\sim \vec{p}_2}{\diff(Y)}\right).
\end{equation}
\subsection{Key Idea in the One-bit Construction}
The difference from~\citet{CHRZ07} is that we require only a single bit 
at each dimension for $\score$ in the present paper. 
The idea that makes an inner-product based score work is that
although from an individual, e.g., $Y$'s perspective, $\diff(Y)$ 
may not be significantly positive due to the definition of our 
$\rscore$, the sum of $\diff$s over a pair of swapped nodes, 
e.g., $\diff(X) + \diff(Y)$ as in Figure~\ref{fig:diffx}, can be
shown to be positive with high probability, given $K = \Omega(\ln N /\gamma)$.
Hence we prevent the sum of $\diff(X) + \diff(Y)$ from deviating too much 
from its expected value $K \gamma$ (Proposition~\ref{pro:fm99}), by 
excluding those {\it bad node events} (Definition~\ref{def:one-dim-p}), 
whose probability we bound in Lemma~\ref{lemma:mu-x-bound-p} 
and~\ref{lemma:mu-y-bound}.
\begin{definition}{\bf(Bad Node Event)}
\label{def:one-dim-p}
Let a {\em bad node} event $\E(Z)$ be the event that 
$\{\diff(Z) < \expct{\diff(Z)} - K \gamma/4\}$, 
where $Z$ is a sample point in the mixture.
Note this is an event in an individual probability space 
$(\Omega_Z, \F_Z, {\bf Pr}_Z)$, where 
$(\Omega_Z, \F_Z, {\bf Pr}_Z)$ is defined over all possible
outcomes of $K$ random bits for sample point $Z$.
\end{definition}
Note that all bad node events are mutually independent.
From now on, we use $(\Omega_i, \F_i, {\bf Pr}_i)$ to 
refer to $(\Omega_{Z_i}, \F_{Z_i}, {\bf Pr}_{Z_i})$ 
for the input $2N$ nodes, assuming a certain ordering.
\begin{definition}{\bf{(Bad Event $\E^N_1$)}}
\label{def:product-space}
$\E^N_1$ is the same as $\E(Z_1) \cup \ldots \cup \E(Z_{2N})$ in
the product probability space $(\Omega, \F, {\bf Pr})$ composed of 
distinct probability spaces 
$(\Omega_1, \F_1, {\bf Pr}_1), \ldots$, $(\Omega_{2N}, \F_{2N}, {\bf Pr}_{2N})$ 
as in Definition~\ref{def:one-dim-p}. Let $\bar\E^N_1$ denote the product 
probability space $(\Omega, \F, {\bf Pr})$ excluding ${\E}^N_1$.
\end{definition}
\begin{figure}[htb]
\begin{center}
\psfig{figure=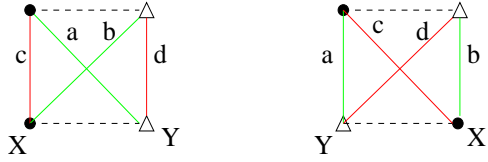, width=2.5in,height=0.8in}
\caption{Given Dots$\sim \p_1$ and Triangles$\sim \p_2$.
Define $\diff(X) = \expct{c|X} - \expct{b|X}$ and 
$\diff(Y) = \expct{d|Y} - \expct{a|Y}$. Given
$K = \Omega(\ln N /\gamma)$, with high probability, 
$\diff(X) + \diff(Y) \geq K \gamma/2$, 
given that 
$\expect{\vec{x}\sim \vec{p}_1}{\diff(X)} + 
\expect{\vec{y}\sim \vec{p}_2}{\diff(Y)} = K \gamma$;
Hence $a + b \leq c + d$, with high probability, 
given also that $KN = \Omega(\ln N \log \log N /\gamma^2)$.}
\label{fig:diffx}
\end{center}
\end{figure}
For each balanced cut $(S, \overline{S})$, conditioned upon fixing a subset of 
random bits on all swapped nodes, as shown in Figure~\ref{fig:match-score}, 
to behave nicely in the sense of Lemma~\ref{lemma:mu-x-bound-p} 
and~\ref{lemma:mu-y-bound},
we show that the conditional expectations, in the sense of Definition~\ref{def:e_h},
for random variables $\diff(\T, (S,\overline{S}), L)$, $\forall L > 0$, 
are significantly positive, so that the perfect partition can almost always win 
over all other balanced cuts, in terms of the particular measure 
(minimum total score here), despite the large deviation events that we handle 
in Section~\ref{sec:events}. 
This idea has been explored in the proof of~\citet{CHRZ07} for diploid scores.

The key difference between this score and the 
``diploid score''~\citep[see][Section~2.1]{CHRZ07} is that the corresponding 
diploid $\diff(Y)$ is always {\it significantly positive} in expectation, i.e., 
$\expect{\vec{y} \sim \DB_m}{\diff(Y)} > 0$, $\forall m =1, 2$, and thus 
remains so with high probability given $K = \Omega(\ln N /\gamma)$. 
That is, an individual is almost always more similar to a randomly chosen 
peer from its population, than a randomly chosen individual from another 
population given a large enough $K$ based on ``diploid scores''. 
The cost of this nice property is: two random bits from the same distribution 
are required at each dimension from all sample.
In the present paper, we provide a similar positiveness guarantee, 
for a pair of scores
$\diff(X) + \diff(Y)$, where $\vec{x}\sim \DB_1$ and $\vec{y}\sim \DB_2$, 
as illustrated in Figure~\ref{fig:diffx}. This property is due to 
Proposition~\ref{pro:fm99}, Lemma~\ref{lemma:mu-x-bound-p} 
and~\ref{lemma:mu-y-bound}. We like to point out that the requirement on the input
instance being balanced is due to the fact that we need pairing up two individuals
such that one comes from each distribution, in order to obtain the initial expected
minimality for $\T$ as defined in Proposition~\ref{pro:adv-max-intro}.

\subsection{The Expected Difference of Two Edges}
\label{sec:one-bit-prop}
We first show that the perfect partition $\T$ has the minimum value 
among all balanced cuts {\it in expectation}, when summing up scores over all edges 
across the cut in Proposition~\ref{pro:adv-max-intro}.
The inspiration for using an inner-product based score and pairing
up $\diff(X)$ and $\diff(Y)$, for $X \sim \DB_1$ and $Y \sim \DB_2$,
comes from~\citet{FM99}.
We first show that the sum of expected differences over 
$X \sim \DB_1$ and $Y \sim \DB_2$ is significant.
\begin{proposition}
$\forall a, b = 1, 2, 
\expect{\vec{x} \sim \DB_a, \vec{y} \sim \DB_b}{<\vec{x}, \vec{y}>} =
<\vec{p}_a, \vec{p}_b>$.
\end{proposition}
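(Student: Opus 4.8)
The plan is to reduce the claim to a single-coordinate computation via linearity of expectation and independence. First I would expand the score using Definition~\ref{def:product-score}, $\langle\vec{x},\vec{y}\rangle=\sum_{i=1}^{K}x^{i}y^{i}$, and move the expectation inside the finite sum, obtaining $\expect{\vec{x}\sim\DB_a,\vec{y}\sim\DB_b}{\langle\vec{x},\vec{y}\rangle}=\sum_{i=1}^{K}\expect{\vec{x}\sim\DB_a,\vec{y}\sim\DB_b}{x^{i}y^{i}}$.

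Next, because $\vec{x}$ and $\vec{y}$ are drawn independently, the Bernoulli variables $x^{i}$ and $y^{i}$ are independent for each fixed $i$, so $\expct{x^{i}y^{i}}=\expct{x^{i}}\expct{y^{i}}$; and by the generative definition of the product distributions, $\expect{\vec{x}\sim\DB_a}{x^{i}}=p_{a}^{i}$ and $\expect{\vec{y}\sim\DB_b}{y^{i}}=p_{b}^{i}$. Substituting back gives $\sum_{i=1}^{K}p_{a}^{i}p_{b}^{i}=\langle\vec{p}_a,\vec{p}_b\rangle$, which is the asserted identity.

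I do not expect any genuine obstacle; the statement is essentially a bookkeeping fact underlying the later expected-minimality arguments (e.g.\ Proposition~\ref{pro:adv-max-intro}). The only point worth a word of care is the diagonal case $a=b$: there $\vec{x}$ and $\vec{y}$ must be read as two \emph{independent} samples from the common distribution (exactly as $X$ and $X'$ are used in Definition~\ref{def:diff-x}), so that the factorization $\expct{x^{i}y^{i}}=\expct{x^{i}}\expct{y^{i}}$ still holds coordinatewise and the computation is unchanged.
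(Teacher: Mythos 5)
Your proof is correct and follows the same route as the paper's: expand the inner product, apply linearity of expectation coordinatewise, factor $\expct{x^i y^i}=\expct{x^i}\expct{y^i}=p_a^i p_b^i$ by independence, and resum. Your extra remark about the diagonal case $a=b$ (that $\vec{x},\vec{y}$ are two independent draws from the same distribution) is a useful clarification the paper leaves implicit.
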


\begin{proof}
We have $\forall a, b = 1, 2$, 
$\expect{\vec{x} \sim \DB_a, \vec{y} \sim \DB_b}{<\vec{x}, \vec{y}>} 
= \expct{\sum_{i=1}^K x^i y^i} =  \sum_{i=1}^K \expct{x^i y^i} 
= \sum_{i=1}^K p^i_a p^i_b = <\vec{p}_a, \vec{p}_b>.$ 
\end{proof}

\begin{proposition}
\label{pro:diff-val}
Let $X$ be a sample point from $\DB_1$ 
and $Y$ be a point from $\DB_2$,
$\diff(X) =
\sum_{i=1}^K x^i (p^i_1 - p^i_2),$ and
$\diff(Y) = \sum_{i=1}^K y^i (p^i_2 - p^i_1)$.
\end{proposition}

\begin{proposition}{\textnormal{\citep{FM99}}}
\label{pro:fm99}
$\expect{\vec{x}\sim \vec{p}_1}{\diff(X)} + 
\expect{\vec{y}\sim \vec{p}_2}{\diff(Y)} = 
\norm{\vec{p}_1 - \vec{p}_2}_2^2 = K \gamma$.
\end{proposition}
\begin{proof}
By Proposition~\ref{pro:diff-val},
$\expect{\vec{x} \sim \vec{p}_1}{\diff(X)} + 
\expect{\vec{y}\sim \vec{p}_2}{\diff(Y)}  
 =  \sum_{i=1}^K p^i_1 (p^i_1 - p^i_2) +
\sum_{i=1}^K p^i_2 (p^i_2 - p^i_1) =  <\vec{p}_1, \vec{p}_1 - \vec{p}_2> +  
<\vec{p}_2, \vec{p}_2 - \vec{p}_1>  
= K \gamma.$
\end{proof}
Before we proceed, we first state the following theorem and its corollary 
on Hoeffding Bounds.
\begin{theorem}{\textnormal{\citep{Hoe63}}}
\label{thm:hoe-bound}
If $X_1, X_2, \ldot, X_K$ are independent and $a_i \leq X_i \leq b_i, 
\forall i = 1, 2, \ldot, K$, and if 
$\bar{X} = (X_1 + \ldot + X_K)/K$ and $\mu = \expct{\bar{X}}$,
then for $t > 0$,
$\prob{\bar{X} - \mu \geq t} \leq e^{-2K^2 t^2/\sum_{i=1}^K (b_i -a_i)^2}.$
\end{theorem}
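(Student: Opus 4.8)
The plan is to prove this classical bound by the exponential-moment (Chernoff) method combined with Hoeffding's lemma. First I would fix $s > 0$ and apply Markov's inequality to the exponentiated deviation. Since $K\bar{X} - K\mu = \sum_{i=1}^K (X_i - \expct{X_i})$,
\[
\prob{\bar{X} - \mu \geq t} = \prob{e^{s(K\bar{X}-K\mu)} \geq e^{sKt}} \leq e^{-sKt}\,\expct{e^{s\sum_{i=1}^K (X_i - \expct{X_i})}} = e^{-sKt}\prod_{i=1}^K \expct{e^{s(X_i-\expct{X_i})}},
\]
the last equality using independence of the $X_i$. So it remains to control each factor.

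The key step is Hoeffding's lemma: if $Y$ has $\expct{Y}=0$ and $a \leq Y \leq b$, then $\expct{e^{sY}} \leq e^{s^2(b-a)^2/8}$. I would prove it by convexity of $y\mapsto e^{sy}$ on $[a,b]$, which gives $e^{sy}\leq \frac{b-y}{b-a}e^{sa}+\frac{y-a}{b-a}e^{sb}$; taking expectations and using $\expct{Y}=0$ bounds $\expct{e^{sY}}$ by $e^{\psi(s)}$ for an explicit function $\psi$ with $\psi(0)=\psi'(0)=0$, and a short computation shows $\psi''(s) = (b-a)^2\rho(1-\rho)$ for some $\rho=\rho(s)\in(0,1)$ (the variance of a two-point random variable, scaled by $(b-a)^2$), hence $\psi''(s)\leq (b-a)^2/4$; Taylor's theorem with remainder then yields $\psi(s)\leq s^2(b-a)^2/8$.

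Applying the lemma to $Y_i = X_i - \expct{X_i}$, which has mean zero and range contained in an interval of length $b_i - a_i$, gives
\[
\prob{\bar{X} - \mu \geq t} \leq \exp\!\left(-sKt + \frac{s^2}{8}\sum_{i=1}^K (b_i-a_i)^2\right).
\]
Finally I would optimize over $s$: the exponent is a convex quadratic in $s$ minimized at $s^\star = 4Kt/\sum_{i=1}^K(b_i-a_i)^2 > 0$, and substituting $s^\star$ leaves exponent $-2K^2t^2/\sum_{i=1}^K(b_i-a_i)^2$, which is exactly the claimed bound. The only genuinely delicate point is the estimate $\psi''(s) \leq (b-a)^2/4$ inside Hoeffding's lemma; everything else is bookkeeping. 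Since the statement is verbatim from~\citet{Hoe63}, one could alternatively just cite it, but including the three-line argument above keeps the exposition self-contained.
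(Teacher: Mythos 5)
Your proof is correct and follows the standard Chernoff/Hoeffding-lemma route, which is also how \citet{Hoe63} originally established the bound; the paper itself simply cites the theorem without reproducing a proof. The optimization at $s^\star = 4Kt/\sum_{i=1}^K(b_i-a_i)^2$ and the resulting exponent $-2K^2t^2/\sum_{i=1}^K(b_i-a_i)^2$ both check out.
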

\begin{corollary}{\textnormal{\citep{Hoe63}}}
\label{coro:hoe-diff}
If $Y_1, \ldot, Y_n$, $Z_1, \ldot, Z_m$ are independent random variables
with values in the interval $[a, b]$, and if $\bar{Y} = (Y_1 + \ldot + Y_m)/m$,
$\bar{Z} = (Z_1 + \ldot + Z_n)/n$, then for $t > 0$,
$$\prob{\bar{Y} - \bar{Z} - (\expct{\bar{Y}} - \expct{\bar{Z}}) \geq t} 
\leq e^{-2 t^2/(m^{-1} + n^{-1}) (b -a)^2}.$$
\end{corollary}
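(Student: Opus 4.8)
The plan is to reduce Corollary~\ref{coro:hoe-diff} to a single application of Theorem~\ref{thm:hoe-bound} by collecting all $m+n$ variables into one pooled family, after rescaling the two averages so that they appear with the same weight. Set $K' = m+n$ and define a new list of independent variables $W_1,\dots,W_{K'}$ as follows: for the first $m$ indices, put $W_i = (K'/m)\,Y_i$; for the last $n$ indices, put $W_{m+j} = -(K'/n)\,Z_j$. These are independent because the $Y_i$ and $Z_j$ are. A direct computation gives $\bar W := (W_1+\dots+W_{K'})/K' = \bar Y - \bar Z$, and by linearity $\expct{\bar W} = \expct{\bar Y} - \expct{\bar Z}$. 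So the event in the corollary is exactly $\{\bar W - \expct{\bar W} \geq t\}$, and Theorem~\ref{thm:hoe-bound} applies verbatim once we supply bounds $a_i \le W_i \le b_i$.

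The second step is to pin down the ranges and evaluate $\sum (b_i - a_i)^2$. Since $Y_i \in [a,b]$, the variable $W_i = (K'/m)Y_i$ lies in an interval of length $(K'/m)(b-a)$; similarly each $W_{m+j} = -(K'/n)Z_j$ lies in an interval of length $(K'/n)(b-a)$. Hence
\[
\sum_{i=1}^{K'} (b_i - a_i)^2
= m\left(\frac{K'}{m}(b-a)\right)^2 + n\left(\frac{K'}{n}(b-a)\right)^2
= (K')^2 (b-a)^2\left(\frac{1}{m} + \frac{1}{n}\right).
\]
Plugging this into the Hoeffding bound of Theorem~\ref{thm:hoe-bound} yields
\[
\prob{\bar W - \expct{\bar W} \geq t} \le \exp\!\left(\frac{-2 (K')^2 t^2}{(K')^2 (b-a)^2 (m^{-1}+n^{-1})}\right) = \exp\!\left(\frac{-2 t^2}{(m^{-1}+n^{-1})(b-a)^2}\right),
\]
which, upon substituting $\bar W = \bar Y - \bar Z$, is precisely the claimed inequality.

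There is essentially no hard obstacle here; the only thing to be careful about is the bookkeeping of the rescaling factors, i.e.\ making sure the weights $K'/m$ and $K'/n$ are chosen so that (a) the pooled average $\bar W$ collapses exactly to $\bar Y - \bar Z$ with no stray constant, and (b) the factor $(K')^2$ cancels cleanly between the numerator $2(K')^2 t^2$ coming from the $K^2$ in Theorem~\ref{thm:hoe-bound} and the $(K')^2$ produced by $\sum(b_i-a_i)^2$. Since both of these work out, the proof is a short two-line reduction. (One can also simply remark that Corollary~\ref{coro:hoe-diff} is the standard two-sample form of Hoeffding's inequality and follows by applying Theorem~\ref{thm:hoe-bound} to $Y_1/m,\dots,Y_m/m,-Z_1/n,\dots,-Z_n/n$ after clearing the common factor.)
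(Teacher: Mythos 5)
Your proof is correct: the pooled variables $W_i$ are independent, $\bar W=\bar Y-\bar Z$, and the factor $(m+n)^2$ cancels exactly as you claim, recovering the stated exponent; this is the standard reduction of the two-sample bound to Theorem~\ref{thm:hoe-bound}, and the paper itself gives no proof, citing \citet{Hoe63} (whose derivation is essentially the same). The only caveat is a typo in the paper's statement (the roles of $m$ and $n$ are swapped between the hypothesis and the definitions of $\bar Y,\bar Z$), which your argument implicitly and harmlessly corrects.
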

Let us denote w.l.o.g. 
$\eta = \expect{\vec{x}\sim \vec{p}_1}{\diff(X)} \geq K \gamma/2$,
and thus $\expect{\vec{y}\sim \vec{p}_2}{\diff(X)} = K \gamma - \eta$, and 
show the following two lemmas.
\begin{lemma}
\label{lemma:mu-x-bound-p}
Given that $K \geq \frac{8 \ln {1/\tau}}{\gamma}$,
$\probb{X}{\diff(X) < \eta - K \gamma/4} < \tau$.
\end{lemma}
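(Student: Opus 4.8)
The plan is to apply the Hoeffding bound (Theorem~\ref{thm:hoe-bound}) directly to the random variable $\diff(X)$, whose explicit form is given by Proposition~\ref{pro:diff-val}: for $X \sim \DB_1$ we have $\diff(X) = \sum_{i=1}^K x^i (p^i_1 - p^i_2)$, a sum of $K$ independent random variables, where the $i$-th summand $x^i(p^i_1 - p^i_2)$ lies in the interval between $0$ and $p^i_1 - p^i_2$ (in either order, depending on the sign of $p^i_1 - p^i_2$), hence has length $|p^i_1 - p^i_2|$. So in the notation of Theorem~\ref{thm:hoe-bound}, $b_i - a_i = |p^i_1 - p^i_2|$ and $\sum_{i=1}^K (b_i - a_i)^2 = \sum_{i=1}^K (p^i_1 - p^i_2)^2 = K\gamma$ by the definition~(\ref{eq:gamma}) of $\gamma$ (here $\gamma$ is the divergence between the only two distributions, so the $\min$ in~(\ref{eq:gamma}) is attained). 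Setting $\bar{X} = \diff(X)/K$ so that $\mu = \expct{\bar X} = \eta/K$, the deviation event $\{\diff(X) < \eta - K\gamma/4\}$ is exactly $\{\bar X - \mu < -\gamma/4\}$, which by the symmetric (lower-tail) version of Theorem~\ref{thm:hoe-bound} has probability at most
\[
\exp\!\left(\frac{-2K^2 (\gamma/4)^2}{\sum_{i=1}^K (p^i_1 - p^i_2)^2}\right)
= \exp\!\left(\frac{-2K^2 \gamma^2/16}{K\gamma}\right)
= \exp\!\left(-\frac{K\gamma}{8}\right).
\]

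It then remains only to observe that the hypothesis $K \geq \frac{8\ln(1/\tau)}{\gamma}$ gives $\frac{K\gamma}{8} \geq \ln(1/\tau)$, hence $\exp(-K\gamma/8) \leq \tau$, which yields the claimed bound $\probb{X}{\diff(X) < \eta - K\gamma/4} < \tau$ (with a strict inequality coming from the strictness in, say, treating the deviation threshold, or simply because one can take the hypothesis to be strict). The only mild subtlety is the direction of the tail: Theorem~\ref{thm:hoe-bound} is stated as an upper-tail bound for $\bar X - \mu$, so I would either invoke it on $-\diff(X)$ (whose summands $-x^i(p^i_1 - p^i_2)$ lie in an interval of the same length $|p^i_1-p^i_2|$), or simply note that Hoeffding's inequality holds symmetrically for the lower tail with the identical bound. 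Neither is an obstacle; it is a one-line remark.

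There is essentially no hard part here: the lemma is a direct, almost mechanical application of Hoeffding's inequality once Proposition~\ref{pro:diff-val} has rewritten $\diff(X)$ as a bounded-increment sum whose total squared range equals $K\gamma$. The one place to be careful — and the reason the paper isolates this as a lemma — is matching the constants: the factor $1/4$ in the deviation and the factor $8$ in the lower bound on $K$ are tuned precisely so that $2K^2(\gamma/4)^2/(K\gamma) = K\gamma/8$ cancels against $\ln(1/\tau)$. I would present exactly the three displayed computations above (range of summands, Hoeffding substitution, and the final $K\gamma/8 \geq \ln(1/\tau)$ step) and be done.
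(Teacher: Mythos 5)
Your proof is correct and follows essentially the same route as the paper's: both rewrite $\diff(X)=\sum_i x^i(p_1^i-p_2^i)$ via Proposition~\ref{pro:diff-val}, observe each summand ranges over an interval of length $|p_1^i-p_2^i|$ so that $\sum_i(b_i-a_i)^2=K\gamma$, apply Hoeffding's inequality (Theorem~\ref{thm:hoe-bound}) to the lower tail with $t=\gamma/4$, and conclude from $K\geq 8\ln(1/\tau)/\gamma$. The paper handles the lower-tail direction by directly writing the event as $\probb{X}{-\sum_k(p_1^k-p_2^k)x^k+\eta\geq K\gamma/4}$, matching the one-line remark you flagged.
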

\begin{proof}
Let us define 
$\gamma_k = (p_1^k - p_2^k)^2, \forall k =1, \ldots, K$.
Given that $x^1, \ldots, x^K$ are independent Bernoulli random 
variables and $(p^k_1 -p^k_2) x^k$ is either in $[0, \sqrt{\gamma_k}]$ 
or $[-\sqrt{\gamma_k}, 0]$, $\forall k = 1, \ldot, K$, we apply
Hoeffding bound as in Theorem~\ref{thm:hoe-bound} with
$t = K \gamma/4K = \gamma/4$:
\begin{eqnarray*}
\probb{X}{-\sum_{k=1}^K (p^k_1 - p^k_2)x^k + \eta \geq K \gamma/4}& =& 
\probb{X}{\sum_{k=1}^K (p^k_1 - p^k_2) x^k - \eta \leq - K \gamma/4} \\
& \leq & 
e^{-2K^2 (\gamma/4)^2/\sum_{k=1}^K{(\sqrt{\gamma_k})^2}}  \leq  \tau.
\end{eqnarray*}
Thus we have that
$\probb{X} {\sum_{k=1}^K (p^k_1 - p^k_2) x^k \geq \eta - K \gamma/4}
\geq 1 - \tau$.
\end{proof}

\begin{lemma}
\label{lemma:mu-y-bound}
Given that $K \geq \frac{8 \ln {1/\tau}}{\gamma}$,
$\probb{Y}{\diff(Y) < (K \gamma - \eta) - K \gamma/4} < \tau$.
\end{lemma}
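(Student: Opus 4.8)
The plan is to transcribe the proof of Lemma~\ref{lemma:mu-x-bound-p} verbatim, with the roles of the two distributions interchanged. By Proposition~\ref{pro:diff-val}, $\diff(Y)=\sum_{k=1}^K y^k(p^k_2-p^k_1)$ is a sum of $K$ independent summands, one per coordinate, since the bits $y^1,\ldots,y^K$ of $Y\sim\DB_2$ are independent. By Proposition~\ref{pro:fm99} together with the convention $\eta=\expect{\vec{x}\sim\vec{p}_1}{\diff(X)}$, its mean is $\expect{\vec{y}\sim\vec{p}_2}{\diff(Y)}=K\gamma-\eta$, so the event $\{\diff(Y)<(K\gamma-\eta)-K\gamma/4\}$ is exactly a one-sided lower-tail deviation of $\diff(Y)$ from its expectation by $K\gamma/4$.

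Next I would record the ranges of the summands. Writing $\gamma_k=(p^k_1-p^k_2)^2$ as before, the $k$-th summand $(p^k_2-p^k_1)y^k$ equals $0$ or $p^k_2-p^k_1$, hence lies in an interval of length $\abs{p^k_2-p^k_1}=\sqrt{\gamma_k}$ (namely $[0,\sqrt{\gamma_k}]$ or $[-\sqrt{\gamma_k},0]$ according to the sign of $p^k_2-p^k_1$). Consequently $\sum_{k=1}^K(b_k-a_k)^2=\sum_{k=1}^K\gamma_k=K\gamma$ by the definition~(\ref{eq:gamma}) of $\gamma$.

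Applying the Hoeffding bound of Theorem~\ref{thm:hoe-bound} to the negated summands (to obtain the lower tail) with $t=\gamma/4$ then yields
\[
\probb{Y}{\diff(Y)-(K\gamma-\eta)\le -K\gamma/4}\;\le\; e^{-2K^2(\gamma/4)^2/(K\gamma)}\;=\;e^{-K\gamma/8}\;\le\;\tau ,
\]
where the last inequality uses the hypothesis $K\ge 8\ln(1/\tau)/\gamma$; this is precisely the claimed bound.

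I do not anticipate any genuine obstacle here; the argument is a direct mirror of Lemma~\ref{lemma:mu-x-bound-p}. The only point that needs a moment's attention is sign bookkeeping --- that the relevant quantity is $\diff(Y)$ itself (not $-\diff(Y)$) and that we want its \emph{lower} tail --- but since $p^k_2-p^k_1=-(p^k_1-p^k_2)$ leaves $\abs{p^k_2-p^k_1}=\sqrt{\gamma_k}$ and hence $\sum_k\gamma_k=K\gamma$ unchanged, the variance-like quantity entering Hoeffding's bound is identical to the one in Lemma~\ref{lemma:mu-x-bound-p}, and the computation goes through unchanged.
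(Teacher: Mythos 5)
Your proof is correct and follows exactly the route the paper takes: the paper's proof of this lemma just says ``similar to proof of Lemma~\ref{lemma:mu-x-bound-p}'' and you have carried out precisely that transcription --- Hoeffding's bound (Theorem~\ref{thm:hoe-bound}) applied to the independent summands $(p^k_2-p^k_1)y^k$, each of range $\sqrt{\gamma_k}$, with $t=\gamma/4$, giving $e^{-K\gamma/8}\le\tau$ under $K\ge 8\ln(1/\tau)/\gamma$. No gaps.
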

\begin{proof}
Similar to proof of Lemma~\ref{lemma:mu-x-bound-p}, we have $\probb{Y}
{\sum_{k=1}^K (p^k_2 - p^k_1) y^i - (K \gamma - \eta) \leq - K \gamma/4}
\leq \tau$, where $K\gamma -\eta = \expect{\vec{y} \sim \vec{p}_2}{\diff(Y)}$. Hence
$[\probb{Y}
{\sum_{k=1}^K (p^k_2 - p^k_1) y^i \geq (K \gamma - \eta) - K \gamma/4}
\geq 1- \tau.$
\end{proof}
In particular, combining~$(\ref{eq:exp-diff})$ and Proposition~\ref{pro:fm99}, 
we have the following.
\begin{proposition}
\label{pro:adv-max-intro}
$\expct{\diff(\T, (S,\overline{S}), L)} = (N-L)L K \gamma$.
\end{proposition}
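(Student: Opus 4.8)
The plan is to obtain the claim by pure linearity of expectation, combining the explicit edge-by-edge description~(\ref{eq:diff-score}) of $\diff(\T,(S,\overline{S}),L)$ with Proposition~\ref{pro:fm99}. Equation~(\ref{eq:exp-diff}) is already asserted in the text, so in principle one could simply quote it and substitute; but it seems cleaner to re-derive it here, since it was stated without proof, and the derivation is short.

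First I would start from~(\ref{eq:diff-score}),
\[
\diff(\T,(S,\overline{S}),L)=\sum_{j=1}^{L}\sum_{i=1}^{N-L}\Bigl(\score(V_j,Y_i)-\score(V_j,X_i)+\score(U_j,X_i)-\score(U_j,Y_i)\Bigr),
\]
take expectations over the $2NK$ independent bits, and use that distinct sample points are independent together with the first (unnumbered) proposition, $\expect{\vec{x}\sim\DB_a,\vec{y}\sim\DB_b}{<\vec{x},\vec{y}>}=<\vec{p}_a,\vec{p}_b>$. Since $V_j\sim\vec{p}_2$, $U_j\sim\vec{p}_1$, and the unswapped nodes satisfy $X_i\sim\vec{p}_1$, $Y_i\sim\vec{p}_2$, each of the $L(N-L)$ identical inner blocks has expectation
\[
<\vec{p}_2,\vec{p}_2-\vec{p}_1>+<\vec{p}_1,\vec{p}_1-\vec{p}_2>=\expect{\vec{y}\sim\vec{p}_2}{\diff(Y)}+\expect{\vec{x}\sim\vec{p}_1}{\diff(X)},
\]
where the last equality is Proposition~\ref{pro:diff-val}. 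This is exactly~(\ref{eq:exp-diff}). Substituting Proposition~\ref{pro:fm99}, namely $\expect{\vec{x}\sim\vec{p}_1}{\diff(X)}+\expect{\vec{y}\sim\vec{p}_2}{\diff(Y)}=K\gamma$, then gives $\expct{\diff(\T,(S,\overline{S}),L)}=(N-L)L\,K\gamma$.

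The one point that requires attention — bookkeeping rather than a genuine obstacle — is correctly identifying which of the $4L(N-L)$ edges differ between $\T$ and $(S,\overline{S})$ and with which sign: the edges $X_i$--$V_j$ and $U_j$--$Y_i$ belong to $\T$ but not to $(S,\overline{S})$ (contributing with sign $-$), the edges $X_i$--$U_j$ and $V_j$--$Y_i$ belong to $(S,\overline{S})$ but not to $\T$ (sign $+$), and every remaining edge is common to both cuts and cancels. Once that accounting is fixed the computation is immediate; note there is no concentration or large-deviation step involved, since the proposition asserts only an identity in expectation (controlling the deviation of $\diff(\T,(S,\overline{S}),L)$ from this positive mean is precisely the task deferred to the later sections, via Lemmas~\ref{lemma:mu-x-bound-p} and~\ref{lemma:mu-y-bound}).
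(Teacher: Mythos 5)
Your proposal is correct and follows essentially the same route the paper takes: the paper's text introduces the proposition as "combining~(\ref{eq:exp-diff}) and Proposition~\ref{pro:fm99}," and the displayed proof merely re-derives the formulas of Proposition~\ref{pro:diff-val} in support. You do the same substitution, with the welcome addition of explicitly carrying out the edge-by-edge expectation computation that establishes~(\ref{eq:exp-diff}), which the paper asserts without proof.
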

\begin{proof}
By Definition~\ref{def:diff-x}, we have
\begin{eqnarray*}
\lefteqn{\diff(X) = 
\expect{\vec{x'} \sim \vec{p}_1}{\rscore(X, X')} -
\expect{\vec{y'} \sim \vec{p}_2}{\rscore(X, Y')}}\nonumber\\
& = & 
\expect{\vec{x'} \sim \vec{p}_1}{<\vec{x}, \vec{x'}>} -
\expect{\vec{y'} \sim \vec{p}_2}{<\vec{x}, \vec{y'}>} 
 =  <\vec{x}, \vec{p}_1 - \vec{p}_2> =
\sum_{i=1}^K x^i (p^i_1 - p^i_2), \\
\lefteqn{\diff(Y) =
\expect{\vec{y'} \sim \vec{p}_2}{\rscore(Y, Y')} -
\expect{\vec{x'} \sim \vec{p}_1}{\rscore(Y, X')}} \nonumber \\
&= & 
\expect{\vec{y'} \sim \vec{p}_2}{<\vec{y}, \vec{y'}>} -
\expect{\vec{x'} \sim \vec{p}_1}{<\vec{y}, \vec{x'}>} 
 = <\vec{y}, \vec{p}_2 - \vec{p}_1> =
\sum_{i=1}^K y^i (p^i_2 - p^i_1).
\end{eqnarray*}
\end{proof}
Given such a positiveness guarantee on the conditional expectations of 
$\diff(\T, (S,\overline{S}), L)$ described above, the rest of the proof focus 
on bounding large deviation events; a sketch of the key ideas has appeared 
in~\citet[Section~3]{CHRZ07}, based on ``diploid scores''.
We need to show that, with high probability, all of $O(2^n)$ random variables, 
in the form of $\diff(\T, (S,\overline{S}), L)$, stay positive all simultaneously, 
given enough number of features and total number of random bits.
We describe the important ideas of this proof in next three sections, which contain
key lemmas for each step; more proofs are contained in the appendix for 
completeness of presentation.

\section{Proof Techniques for Concentration}
\label{sec:events}
We first introduce some notation regarding
the sample probability space $(\Omega, \F, {\bf Pr})$.
The set $\Omega$ is the set of all possible outcomes for $2NK$ 
random bits, where we denote each bit as $b^k_j$ for a point $j$ at 
dimension $k$.
The $\sigma$-field $\F$ of events is the set $\Sigma(\Omega)$ of all
subsets of $\Omega$; and the probability measure ${\bf Pr}$ is based 
on the product of probabilities of each random bit $b^k_j, \forall k, j$, 
corresponding to Bernoulli($p^k_{a}$), where $a \in \{1, 2\}$ depends 
on the population of origin for individual $j$. 
Formally,
\begin{definition}
The elementary events in the underlying sample space 
$(\Omega, \F, {\bf Pr})$ are all possible $2^{2NK}$
choices of $D = 2NK$ bits.
For $0\leq i \leq D$ and $w \in \{0, 1\}^i$,
let $B_w$ denote the event that the first $i$ bits
equal to the bit string $w$.
Let $\F_{i}$ be the $\sigma$-field generated by the partition of 
$\Omega$ into blocks $B_w$, for $w \in \{0, 1\}^i$. 
Then the sequence $\F_0, \ldots, \F_D$ forms a filter. 
In the $\sigma$-field $\F_i$, the only valid events are the ones that 
depend on the values of the first $i$ bits, and all such events 
are valid within.
\end{definition}
The events that we define next and their interactions 
are shown in Figure~\ref{fig:events}.
We show that, with high probability, all of the $O(2^{2N})$ 
random variables $\diff(\T, (S,\overline{S}), L)$, as
in~(\ref{eq:diff-score}), one corresponding to each 
balanced $(S, \bar{S})$, are positive.
\silent{
In particular, observe that all random variables, 
$\diff(\T, (S,\overline{S}), L),\forall (S, \bar{S})$, 
$\forall L> 0$ have positive expected values, as in 
Proposition~\ref{pro:adv-max-intro}, as their initial {\em advantage}; 
thus we need to show  that, with high probability, for all balanced cuts, 
the deviation of its corresponding random variable is less than the 
expected advantage.
}
We initially confine ourselves 
into a {\em good} subspace $\bar{\E}^N_1$ by excluding any {\em bad node} event
(Definition~\ref{def:one-dim-p}).
This subspace has the nice property in the sense of Theorem~\ref{thm:exp-h-p}.
We then use union bound to bound the probability of any {\em bad score} event in
this subspace, where a single {\em bad score} event occurs when 
$\diff(\T, (S,\overline{S}), L) \leq 0$ for a particular 
balanced $(S, \bar{S})$.
We  use the bounded differences method to bound probabilities of such events.

Each time we examine $\diff(\T, (S,\overline{S}), L)$ for a particular
balanced $(S, \bar{S})$, 
we let vector $(H_1, \ldots, H_{2KN})$
record the entire history of random bits,
where $(H_1, \ldots, H_{2KL})$ record the partial history of 
bits on the $2L$ swapped nodes corresponding
to $(S, \bar{S})$.
Let $\ell = 2KL$ be a positive integer.
We denote this $2KL$-history with $\ul{H}^{(\ell)}$.
For a balanced $(S, \bar{S})$, let $\ul{h}$
be a fixed possible $\ell$-history:
$\ul{h} = \{\tilde{U_1}, \ldots, \tilde{U_L}, \tilde{V_1}, 
\ldots, \tilde{V_L}\}$ 
denotes a vector of $2KL$ random bits 
on $2L$ swapped nodes as shown in Figure~\ref{fig:match-score}, 
where $\tilde{X}$ is the outcome of a particular point $X$ in our sample.
Let $\Omega_{\ul{h}}$ denote that event that we observe this particular 
$2KL$-history: $\Omega_{\ul{h}} = \{\pi \in \Omega: H^{(\ell)}(\pi) = \ul{h}\}$.
Given that $\Omega_{\ul{h}}$ occurs, we are concerned about the 
following probability space 
$(\Omega_{\ul{h}}, \Sigma(\Omega_{\ul{h}}), {\bf Pr}_{\ul{h}})$, we have
the following definition and proposition.
\begin{definition}
\label{def:e_h}
$\expect{\ul{h}}{\diff(\T, (S,\overline{S}), L)} =
\expct{\diff(\T, (S,\overline{S}), L)|\F_{2KL}}$ is the expected value 
of $\diff(\T, (S,\overline{S}), L)$ conditioned on an event
$\ul{h} \in \F_{2KL}$. This conditional expectation
$\expct{\diff(\T, (S,\overline{S}), L)|\F_{2KL}}$
is a random variable that can be viewed as a function into $\reals$ 
from the blocks in the partition of $\F_{2KL}$.
\end{definition}
Hence $\expect{\ul{h}}{\diff(\T, (S,\overline{S}), L)}$ is an evaluation
at a particular outcome $\ul{h} \in \F_{2KL}$.
\begin{proposition}
\label{pro:history-p}
For a particular outcome $\ul{h} \in \F_{2KL}$, 
$\expect{\ul{h}}{\diff(\T, (S,\overline{S}), L)} = 
(N-L)\sum_{j=1}^{L} \diff(\tilde{U}_j) + 
(N-L)\sum_{j=1}^{L} \diff(\tilde{V}_j) = 
(N-L) \sum_{j=1}^L \sum_{k=1}^K (p^k_1 - p^k_2)(\tilde{u}^k_j - \tilde{v}^k_j).$
\end{proposition}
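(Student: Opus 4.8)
The plan is to compute the conditional expectation $\expect{\ul{h}}{\diff(\T, (S,\overline{S}), L)}$ directly from the defining formula~(\ref{eq:diff-score}) by using linearity of expectation and the independence of bits across distinct sample points. Recall that
\[
\diff(\T, (S,\overline{S}), L) = \sum_{j=1}^L \sum_{i=1}^{N-L} \bigl( \score(V_j, Y_i) - \score (V_j, X_i) + \score(U_j, X_i)- \score (U_j, Y_i) \bigr),
\]
where $U_1,\ldots,U_L$ and $V_1,\ldots,V_L$ are precisely the $2L$ swapped nodes whose $2KL$ random bits are fixed by the history $\ul h$, while the unswapped nodes $X_i\sim\DB_1$ and $Y_i\sim\DB_2$ for $i=1,\ldots,N-L$ are still random and independent of $\ul h$. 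Conditioning on $\Omega_{\ul h}$ therefore fixes $\vec{u}_j$ and $\vec{v}_j$ to their realized values $\tilde{u}_j, \tilde{v}_j$ but leaves the $X_i, Y_i$ distributed according to their original product distributions.

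The first step is to take the conditional expectation inside the double sum and, for each fixed swapped node, average over the unswapped endpoint. For a swapped node $V_j$ (originally from $\DB_2$, now on the $S$ side) with fixed bit vector $\tilde v_j$, the contribution $\score(V_j,Y_i)-\score(V_j,X_i)$ has conditional expectation $\expect{\vec{y}\sim\vec{p}_2}{<\tilde v_j,\vec y>} - \expect{\vec x\sim\vec{p}_1}{<\tilde v_j,\vec x>} = <\tilde v_j, \vec p_2 - \vec p_1>$, which is exactly $\diff(\tilde V_j)$ by Proposition~\ref{pro:diff-val} (with the roles of the distributions as in the definition of $\diff(Y)$). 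Symmetrically, for a swapped node $U_j$ (originally from $\DB_1$), the contribution $\score(U_j,X_i)-\score(U_j,Y_i)$ has conditional expectation $<\tilde u_j, \vec p_1 - \vec p_2> = \diff(\tilde U_j)$. Since the inner sum over $i$ ranges over $N-L$ identically-behaving terms for each fixed swapped node, each swapped node contributes its $\diff$ value multiplied by $N-L$, giving $\expect{\ul h}{\diff(\T,(S,\overline S),L)} = (N-L)\sum_{j=1}^L \diff(\tilde U_j) + (N-L)\sum_{j=1}^L \diff(\tilde V_j)$.

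Finally, I would expand each $\diff(\tilde U_j) = \sum_{k=1}^K \tilde u^k_j (p^k_1 - p^k_2)$ and $\diff(\tilde V_j) = \sum_{k=1}^K \tilde v^k_j (p^k_2 - p^k_1)$ using Proposition~\ref{pro:diff-val} again, and combine the two sums over $j$ termwise in $k$ to obtain $(N-L)\sum_{j=1}^L \sum_{k=1}^K (p^k_1 - p^k_2)(\tilde u^k_j - \tilde v^k_j)$, which is the claimed closed form. There is essentially no analytic obstacle here — no concentration or tail bounds are needed — so the only point requiring care is bookkeeping: being precise that conditioning on $\ul h\in\F_{2KL}$ fixes exactly the swapped-node bits and nothing else (so the $X_i,Y_i$ remain fresh samples), and correctly matching the sign conventions of $\diff(X)$ versus $\diff(Y)$ to the orientation of each swapped node relative to the cut $(S,\bar S)$. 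This proposition is the conditional-expectation analogue of the unconditional identity in Proposition~\ref{pro:adv-max-intro}, and indeed averaging this expression over $\ul h$ (using $\expct{\tilde u^k_j}=p^k_1$, $\expct{\tilde v^k_j}=p^k_2$) recovers $(N-L)L\sum_k (p^k_1-p^k_2)^2 = (N-L)LK\gamma$, which serves as a useful consistency check.
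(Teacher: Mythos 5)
Your proof is correct, and it supplies the argument that the paper leaves implicit (Proposition~\ref{pro:history-p} is stated without proof in the paper). Your derivation — conditioning on $\ul h$ to freeze the $2KL$ swapped-node bits, applying linearity and independence across distinct sample points to the definition in~(\ref{eq:diff-score}), matching each swapped node's contribution to $\diff(\tilde U_j)$ or $\diff(\tilde V_j)$ via Proposition~\ref{pro:diff-val} with the correct sign conventions, and then expanding termwise in $k$ — is exactly the calculation the authors intend, and your consistency check against Proposition~\ref{pro:adv-max-intro} confirms the bookkeeping.
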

Our starting point for using the bounded differences method to bound a single
{\em bad score} event over $(S, \bar{S})$ is when we have revealed the $2KL$ bits 
and obtained a $2KL$-history $\ul{h}$ in $\bar{\E}^N_1$.
Given a fixed history $\ul{h}$, we call the remaining $2K(N-L)$ bits 
on unswapped nodes as the $2K(N-L)$-{\em future}. 
Let $\bar{f} = (H_{2KL+1}, \ldots, H_{2KN})$
be a fixed possible $2K(N-L)$-{\em future}.
For simplicity of analysis, given $\ul{h}$, we first {\em expand} the 
confined subspace $\bar{\E}^N_1$ by dropping 
constraints on the $2(N-L)$ unswapped nodes.

In this expanded subspace, we only require the first $2L$ swapped nodes to be 
{\em good} nodes, a condition that we denote with $\bar{\E}^L_1(S, \bar{S})$, 
while leaving bits on the $2(N-L)$ unswapped nodes unconstrained; that is, 
these nodes can be {\em bad} nodes. 
Thus $(\Omega_{\ul{h}}, \Sigma(\Omega_{\ul{h}}), {\bf Pr}_{\ul{h}})$
corresponds to the {\em expanded} subspace of $\bar{\E}^N_1$ given 
$\ul{h}$, where we can apply the bounded differences method to analyze 
probability for $\{\diff(\T, (S,\overline{S}), L) \leq 0\}$ in a 
clean manner applying Azuma's Inequality as in Lemma~\ref{lemma:azuma}.
In fact, our starting point of the bounded differences analysis is
$\expect{\ul{h}}{\diff(\T, (S,\overline{S}), L)}$, where 
$\ul{h}$ is a fixed possible $2KL$-history on the $2L$ swapped nodes for 
$(S, \bar{S})$, subject to $\ul{h} \in \bar{\E}^L_1(S, \bar{S})$:
\begin{definition}
\label{def:ell}
$\E^L_1(S, \bar{S})$ is the same as $\E(U_1) \cup \ldots \cup \E(U_{L}) 
\cup \E(V_1) \cup \ldots \cup \E(V_{L})$ in 
the product probability space composed of distinct probability spaces 
defined over nodes $U_1, \ldots, U_L, V_1, \ldots, V_L$ as in 
Definition~\ref{def:one-dim-p}.
\end{definition}
This immediately indicates that the conditional expected value
$\expect{\ul{h}}{\diff(\T, (S,\overline{S}), L)} \geq (N-L)LK\gamma/2$,
which is our ``advantageous base point'' given that $\Omega_{\ul{h}}$ 
occurs. The proof of the following theorem appears in Section~\ref{sec:cond-exp}.
\begin{theorem}
\label{thm:exp-h-p}
Give that all points are drawn from $\bar{\E}^N_1$, 
the probability space $(\Omega, \F, {\bf Pr})$ excluding ${\E}^N_1$, 
we have $\forall$ balanced $(S, \bar{S})$,
where $\ul{h}$ is a particular $2KL$-history corresponding to 
the $2L$ swapped nodes specified over $(S, \bar{S})$ with 
respect to $\T$,
\begin{gather}
\label{eq:exp-history1}
\expect{\ul{h}}{\diff(\T, (S,\overline{S}), L)}
\geq  (N-L)L K \gamma/2,
\end{gather}
where the conditional expectation is over each of the individually 
expanded probability space
$(\Omega_{\ul{h}}, \Sigma(\Omega_{\ul{h}}), {\bf Pr}_{\ul{h}})$
given $\ul{h} \in \bar{\E}^L_1$, where $\E^L_1$ is defined in
Definition~\ref{def:ell}.
This statement remains true after we require that
$\ul{h} \in \bar{\E}^L_2$ in addition, where $\E^L_2$ is defined 
in Definition~\ref{def:big-delta}.
\end{theorem}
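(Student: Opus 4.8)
The plan is to establish the lower bound in~(\ref{eq:exp-history1}) by starting from the exact formula for the conditional expectation given in Proposition~\ref{pro:history-p} and then using the good-node constraints encoded in $\bar\E^L_1(S,\bar S)$ (and later $\bar\E^L_2$) to control the per-node contributions. First I would record that by Proposition~\ref{pro:history-p},
\[
\expect{\ul{h}}{\diff(\T,(S,\overline S),L)}
= (N-L)\sum_{j=1}^L \bigl(\diff(\tilde U_j)+\diff(\tilde V_j)\bigr),
\]
so the whole statement reduces to showing that, for every $j\in[1,L]$, the paired sum $\diff(\tilde U_j)+\diff(\tilde V_j)$ is at least $K\gamma/2$. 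This is exactly the ``pairing'' phenomenon highlighted in Figure~\ref{fig:diffx}: neither $\diff(\tilde U_j)$ nor $\diff(\tilde V_j)$ need be large (or even positive) on its own, but their sum concentrates around its mean $K\gamma$ (Proposition~\ref{pro:fm99}).

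Next I would unpack what $\ul{h}\in\bar\E^L_1(S,\bar S)$ buys us. By Definition~\ref{def:ell}, being in $\bar\E^L_1(S,\bar S)$ means none of the bad node events $\E(U_j),\E(V_j)$ occurs, i.e.\ for each $j$,
\[
\diff(\tilde U_j)\ge \expct{\diff(U_j)}-K\gamma/4
\quad\text{and}\quad
\diff(\tilde V_j)\ge \expct{\diff(V_j)}-K\gamma/4.
\]
Here $U_j\sim\p_1$ and $V_j\sim\p_2$, so by Proposition~\ref{pro:fm99} the two expectations sum to $K\gamma$. Adding the two displayed inequalities and using this identity gives, for each $j$,
\[
\diff(\tilde U_j)+\diff(\tilde V_j)\ \ge\ K\gamma - K\gamma/4 - K\gamma/4 \ =\ K\gamma/2 .
\]
Summing over $j=1,\ldots,L$ and multiplying by $N-L$ yields~(\ref{eq:exp-history1}). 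The one subtlety I would flag explicitly is that the conditional expectation in Definition~\ref{def:e_h} is taken over the \emph{expanded} subspace $(\Omega_{\ul h},\Sigma(\Omega_{\ul h}),{\bf Pr}_{\ul h})$, in which constraints on the unswapped nodes have been dropped; but since $\diff(\tilde U_j)$ and $\diff(\tilde V_j)$ are functions of the already-revealed $2KL$ bits on the swapped nodes only, and the $2K(N-L)$ future bits integrate out to give the factor $N-L$ exactly as in Proposition~\ref{pro:history-p}, the formula and hence the bound are unaffected by this expansion. It is worth checking that dropping the unswapped-node constraints does not change the distribution of the swapped-node history $\ul h$, which holds because the per-node probability spaces are independent (the remark after Definition~\ref{def:one-dim-p}).

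For the final sentence of the theorem — that the bound survives also requiring $\ul h\in\bar\E^L_2$ — the key point is monotonicity: intersecting with another ``good'' event only shrinks the set of admissible histories $\ul h$, and the pointwise inequality $\diff(\tilde U_j)+\diff(\tilde V_j)\ge K\gamma/2$ was derived for \emph{every} history in $\bar\E^L_1(S,\bar S)$, so it holds a fortiori on the smaller set $\bar\E^L_1\cap\bar\E^L_2$. (One should confirm that $\bar\E^L_2$, defined via Definition~\ref{def:big-delta}, is again an event measurable with respect to the swapped-node bits $\F_{2KL}$, so that conditioning on it is compatible with Definition~\ref{def:e_h}; granting that, nothing else in the argument changes.) I do not anticipate a genuine obstacle here: the content is entirely in the pairing identity of Proposition~\ref{pro:fm99} and the definition of a bad node, and the main thing to get right is bookkeeping — making sure the conditional expectation is evaluated in the correct (expanded) probability space and that the $(N-L)$ factor comes out cleanly, rather than any hard estimate. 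The genuinely hard probabilistic work — showing that $\ul h\in\bar\E^L_1$ typically holds and that the martingale/Azuma deviation of $\diff(\T,(S,\overline S),L)$ around this advantageous base point is smaller than $(N-L)LK\gamma/2$ for all $O(2^{2N})$ cuts simultaneously — is deferred to the later sections and is not needed for this statement.
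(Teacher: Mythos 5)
Your proposal is correct and follows essentially the same route as the paper: it applies Proposition~\ref{pro:history-p} to reduce the conditional expectation to the sum of paired $\diff(\tilde U_j)+\diff(\tilde V_j)$ terms, uses the bad-node exclusion from Definition~\ref{def:one-dim-p} to lower-bound each $\diff$ by its mean minus $K\gamma/4$, invokes Proposition~\ref{pro:fm99} to conclude each pair contributes at least $K\gamma/2$, and then observes that adding the $\bar\E^L_2$ constraint only restricts to a subset of histories on which the pointwise bound already holds. This is exactly the content of the paper's Proposition~\ref{pro:sub-nodes}, Lemma~\ref{lemma:induction-p}, and Corollary~\ref{cor:induction}, just presented as a single argument.
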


Now as we reveal one by one the future $2K(N-L)$ random bits, 
the conditional expected values  
$\expect{\ul{h}}{\diff(\T, (S,\overline{S}), L)|\ul{H}^{(\ell')}},
\forall \ell' \geq 2KL$ form a martingale that is amenable to 
the bounded differences analysis as shown in Theorem~\ref{thm:diff-dev-p} 
in Section~\ref{sec:bounded-diff}. However, in order to obtain a 
concentration bound as tight as that in Theorem~\ref{thm:diff-dev-p}, 
we need to exclude one more event $\E^L_2$
as in Definition~\ref{def:big-delta}, from the $2KL$-history $\ul{h}$, 
while examining a balanced $(S, \bar{S})$. We first give some definitions 
regarding $\E^L_2$. Nodes are shown in Figure~\ref{fig:match-score}.
\begin{definition}
\label{def:f2-product}
Given vectors $\vec{u_1}, \ldots, \vec{u_L}$ and 
$\vec{v_1}, \ldots, \vec{v_L}$, 
where $u^k_j, v^k_j$ are the $k^{th}$ bit of $U_j$ and $V_j$ respectively,
$f_2^k(\ul{h}) 
= \sum_{j=1}^L u^k_j - \sum_{j=1}^L v^k_j.$
\end{definition}
\begin{definition}{\bf{(Deviation Values)}}
\label{def:devi-p}
$\forall k = 1, \ldots, K$, let $t_k\sqrt{L}$ be 
the {\em exact} deviation on $f_2^k(\ul{h})$, i.e.,
$f^k_2(\ul{h}) - \expct{f^k_2(\ul{h})} = t_k \sqrt{L}, \forall k$.
\end{definition}
\begin{definition}{\bf{(Bad Deviation Event $\E^L_2$)}}
\label{def:big-delta}
In probability space $(\Omega, \F, {\bf Pr})$, 
given a balanced $(S, \bar{S})$ and its corresponding $2KL$-history 
$\ul{h}$, $\E^L_2$ is the event such that the set of random variables 
$t_1, \ldot, t_k$ regarding $2KL$ random bits recorded in $\ul{h}$, 
as defined in Definition~\ref{def:devi-p}, are 
{\em simultaneously large} and satisfy
$\sum_{k=1}^K {t}^2_k \geq \Delta = 
8N \ln2 + 4K \ln2 (\log \log N + 1) + 3\ln N/2$.
\end{definition}
Using Definition~\ref{def:big-delta} and~\ref{def:devi-p},
we immediately have the following lemma.
\begin{lemma}
\label{lemma:diff-bound-p}
Given that $\ul{h} \in \bar{\E}_2^L$, we have $\forall k$,
\begin{gather*}
\size{f^k_2(\ul{h})} \leq 
\size{\expct{f^k_2(\ul{h})}} + \size{t_k \sqrt{L}}, 
\end{gather*}
and $\sum_{k=1}^K {t}^2_k \leq \Delta$,
where $t_k$ is in Definition~\ref{def:devi-p}, and $\E^L_2$ is 
in Definition~\ref{def:big-delta}.
\end{lemma}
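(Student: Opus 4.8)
\textbf{Proof proposal for Lemma~\ref{lemma:diff-bound-p}.}

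The plan is to unwind the definitions and observe that the lemma is essentially a tautology once the complementary event $\bar{\E}_2^L$ is spelled out. First I would recall from Definition~\ref{def:big-delta} that the bad deviation event $\E_2^L$ is precisely the event $\{\sum_{k=1}^K t_k^2 \geq \Delta\}$, so that conditioning on $\ul{h} \in \bar{\E}_2^L$ means exactly $\sum_{k=1}^K t_k^2 < \Delta$, and in particular $\sum_{k=1}^K t_k^2 \leq \Delta$; this gives the second assertion immediately with nothing further to prove. For the first assertion, I would invoke Definition~\ref{def:devi-p}, which defines $t_k$ by the identity $f_2^k(\ul{h}) - \expct{f_2^k(\ul{h})} = t_k\sqrt{L}$ for every $k$. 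Rearranging gives $f_2^k(\ul{h}) = \expct{f_2^k(\ul{h})} + t_k\sqrt{L}$, and then the triangle inequality $\size{a+b} \leq \size{a} + \size{b}$ yields $\size{f_2^k(\ul{h})} \leq \size{\expct{f_2^k(\ul{h})}} + \size{t_k\sqrt{L}}$, which is exactly the claimed bound.

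Since both displayed statements follow by unpacking Definitions~\ref{def:devi-p} and~\ref{def:big-delta} and applying the triangle inequality, there is no real obstacle here: the content of the lemma is bookkeeping that isolates, for later use in the bounded differences / Azuma argument of Section~\ref{sec:bounded-diff}, the two facts one needs about a history $\ul{h}$ that avoids $\E_2^L$ — namely a per-coordinate control on $\size{f_2^k(\ul{h})}$ via its mean plus the exact deviation $t_k\sqrt{L}$, together with the aggregate budget constraint $\sum_k t_k^2 \leq \Delta$. The only point worth a sentence of care is making explicit that the strict inequality in the definition of $\bar{\E}_2^L$ implies the non-strict one stated in the lemma, and that Definition~\ref{def:devi-p} guarantees such $t_k$ exist and are uniquely determined (one simply sets $t_k = (f_2^k(\ul{h}) - \expct{f_2^k(\ul{h})})/\sqrt{L}$), so the manipulations are legitimate regardless of the sign of $f_2^k(\ul{h})$ or of its mean.
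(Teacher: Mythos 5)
Your proposal is correct and takes essentially the same approach as the paper: both simply unwind Definition~\ref{def:devi-p} to write $f_2^k(\ul{h}) = \expct{f_2^k(\ul{h})} + t_k\sqrt{L}$, apply the triangle inequality for the first display, and read the budget $\sum_k t_k^2 \leq \Delta$ directly off the complement of Definition~\ref{def:big-delta}.
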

\begin{proof}
By definition of $t_k, \forall k$, we have that
$f^k_2(\ul{h}) = \expct{f^k_2(\ul{h})} + t_k \sqrt{L}$,
where $t_k \in [\frac{-L - \expct{f^k_2(\ul{h})}}{\sqrt{L}}, 
\frac{L -\expct{f^k_2(\ul{h})}}{\sqrt{L}}]$.
Thus the lemma holds given that $\ul{h} \in \bar{\E}_2^L$. 
\end{proof}
Excluding $\E^L_2$ from $\ul{h}$ is crucial in bounding
the difference that each of the $2(N-L)K$-{\em future} random bits 
causes when we work in probability space
$(\Omega_{\ul{h}}, \Sigma(\Omega_{\ul{h}}), {\bf Pr}_{\ul{h}})$, 
where the {\em difference} refers to
$$\size{
\expect{\ul{h}}{\diff(\T, (S,\bar{S}), L)| \ul{H}^{(\ell')}}-
\expect{\ul{h}}{\diff(\T, (S,\overline{S}), L) | \ul{H}^{(\ell'-1)}}},$$
where $2KN \geq \ell' > 2KL$ depends on the bit, 
such that the square sum of all these differences is not too big as in
Lemma~\ref{lemma:diff-bound-p}. This is illustrated in the second
graph in Figure~\ref{fig:match-score}.
This allows us to bound the probability on a bad score event, i.e.,
$\diff(\T, (S,\overline{S}), L) \leq 0,$ using Azuma's inequality 
in probability space 
$(\Omega_{\ul{h}}, \Sigma(\Omega_{\ul{h}}), {\bf Pr}_{\ul{h}})$
as in Section~\ref{sec:bounded-diff}.
The proof of the following lemma is rather long and shown in 
Section~\ref{sec:append-case-study}.
\begin{lemma}
\label{lemma:rho-3-cal-p}
Let $\ul{h}$ be the specific $2KL$-history that we record for a balanced
cut $(S,\bar{S})$ such that $\ul{h} \in \bar{\E}^L_1 \cap \bar{\E}^L_2$.
Let $\rho^L_3 = \frac{2}{N^{4L}}$.
Then for $K = \Omega(\frac{\ln N}{\gamma})$ and 
$KN = \Omega(\frac{\ln N \log \log N}{\gamma^2})$, for all $N \geq 4$,
$$\prob{\diff(\T, (S,\overline{S}), L) \leq 0
| \ul{h} \in \bar{\E}^{L}_2 \cap \bar{\E}^{L}_1, \bar{f} \mbox{ at random}} 
\leq \rho^L_3.$$
\end{lemma}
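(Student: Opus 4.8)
\textbf{Proof proposal for Lemma~\ref{lemma:rho-3-cal-p}.}

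The plan is to fix a balanced cut $(S,\bar S)$ together with a $2KL$-history $\ul h \in \bar\E^L_1 \cap \bar\E^L_2$, and then run the bounded-differences method on the $2K(N-L)$ future random bits in the probability space $(\Omega_{\ul h},\Sigma(\Omega_{\ul h}),{\bf Pr}_{\ul h})$. By Theorem~\ref{thm:exp-h-p}, the starting point of the martingale is $\expect{\ul h}{\diff(\T,(S,\bar S),L)} \geq (N-L)LK\gamma/2$, which is the ``advantageous base point.'' As we reveal the future bits one at a time, the conditional expectations form a martingale (this is the content of the discussion preceding Theorem~\ref{thm:diff-dev-p}), so to conclude $\prob{\diff(\T,(S,\bar S),L)\leq 0 \mid \ul h,\bar f\text{ random}} \leq \rho^L_3 = 2/N^{4L}$ via Azuma's inequality it suffices to show that the sum of squared martingale increments $\sum_{\ell'} c_{\ell'}^2$ is bounded by roughly $(N-L)^2 L^2 K^2\gamma^2 / (16\ln N \cdot L)$ or so — concretely, small enough that $\exp\!\big(-((N-L)LK\gamma/2)^2 / (2\sum c_{\ell'}^2)\big) \leq 2/N^{4L}$.

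First I would compute the increment caused by revealing a single future bit, say the $k$-th bit of an unswapped node $X_i$ (or $Y_i$). From~(\ref{eq:diff-score}), that bit $x_i^k$ appears with coefficient $(p_1^k - p_2^k)$ multiplied by $f_2^k(\ul h) = \sum_j u_j^k - \sum_j v_j^k$ — precisely the quantity from Definition~\ref{def:f2-product}. Hence the martingale increment from this bit is on the order of $|p_1^k - p_2^k|\cdot |f_2^k(\ul h)|$, and by Lemma~\ref{lemma:diff-bound-p}, $|f_2^k(\ul h)| \leq |\expct{f_2^k(\ul h)}| + |t_k|\sqrt L$. Squaring and summing over all $2(N-L)$ unswapped nodes and all $K$ dimensions gives a bound of the shape
\begin{equation*}
\sum_{\ell'} c_{\ell'}^2 \;\lesssim\; (N-L)\sum_{k=1}^K (p_1^k - p_2^k)^2\Big(|\expct{f_2^k(\ul h)}| + |t_k|\sqrt L\Big)^2.
\end{equation*}
Now I would split via $(\alpha+\beta)^2 \leq 2\alpha^2 + 2\beta^2$ into a ``mean part'' and a ``deviation part.'' For the mean part, $\expct{f_2^k(\ul h)} = L(p_1^k - p_2^k) + L(p_1^k-p_2^k)\cdot\text{(something)}$ — actually since $U_j\sim\p_1$ and $V_j\sim\p_2$, $\expct{f_2^k(\ul h)} = L(p_1^k - p_2^k)$, so $\sum_k (p_1^k-p_2^k)^2 (L(p_1^k-p_2^k))^2 \leq L^2 \sum_k (p_1^k-p_2^k)^2 = L^2 K\gamma$ using $(p_1^k-p_2^k)^2\leq 1$; multiplied by $(N-L)$ and the factor $2$ this contributes $O((N-L)L^2 K\gamma)$. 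For the deviation part, $\sum_k (p_1^k-p_2^k)^2 \cdot t_k^2 L \leq L\sum_k t_k^2 \leq L\Delta$ by Lemma~\ref{lemma:diff-bound-p}, where $\Delta = 8N\ln 2 + 4K\ln 2(\log\log N + 1) + 3\ln N/2$; multiplied by $(N-L)$ and the factor $2$ this is $O((N-L)L\Delta)$. So altogether $\sum c_{\ell'}^2 = O\big((N-L)L^2 K\gamma + (N-L)L\Delta\big)$.

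Finally I would plug this into Azuma: I need
\begin{equation*}
\frac{\big((N-L)LK\gamma/2\big)^2}{2\sum_{\ell'} c_{\ell'}^2} \;\geq\; 4L\ln N + \ln 2,
\end{equation*}
i.e. $(N-L)^2 L^2 K^2\gamma^2 = \Omega\big(L\ln N \cdot [(N-L)L^2 K\gamma + (N-L)L\Delta]\big)$. Dividing through by $(N-L)L^2$, this becomes $(N-L)K^2\gamma^2 = \Omega\big(\ln N\,[K\gamma + \Delta/L]\big)$, and since $L\geq 1$ the worst case is $L=1$, requiring $(N-L)K^2\gamma^2 = \Omega(\ln N(K\gamma + \Delta))$. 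Substituting $\Delta = \Theta(N + K\log\log N + \ln N)$: the $K\gamma\cdot\ln N$ and $K\log\log N\cdot\ln N$ terms are handled by $NK^2\gamma^2 = \Omega(K\gamma\ln N)$ i.e. $NK\gamma = \Omega(\ln N)$ (which follows from $K=\Omega(\ln N/\gamma)$, and note the stated $KN=\tilde\Omega(\ln N/\gamma^2)$ is stronger), the $K\log\log N\ln N$ term needs $NK\gamma^2 = \Omega(\log\log N\ln N)$ which is exactly $KN=\Omega(\ln N\log\log N/\gamma^2)$, and the dominant $N\ln N$ term needs $NK^2\gamma^2 = \Omega(N\ln N)$ i.e. $K^2\gamma^2 = \Omega(\ln N)$, i.e. $K = \Omega(\sqrt{\ln N}/\gamma)$ — weaker than the hypothesized $K=\Omega(\ln N/\gamma)$. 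So all terms check out under the stated hypotheses for $N\geq 4$, giving the bound $\rho^L_3 = 2/N^{4L}$ (the factor $2$ and the $+\ln2$ absorb the two-sided or the $2^{2N}$-style bookkeeping).

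\textbf{Main obstacle.} The hard part will be the careful accounting of the martingale increments: getting the exact coefficient of each future bit in $\diff(\T,(S,\bar S),L)$ right (each unswapped bit is ``shared'' across $L$ swapped-node pairs, so the coefficient is $|p_1^k-p_2^k|\cdot|f_2^k(\ul h)|$, not just $|p_1^k-p_2^k|$), and then bounding $\sum c_{\ell'}^2$ tightly enough — in particular it is essential that the deviation contribution uses the \emph{exact} constraint $\sum_k t_k^2 \leq \Delta$ from excluding $\E^L_2$, since a cruder $|f_2^k| \leq L$ bound would give $\sum_k (p_1^k-p_2^k)^2 L^2 = L^2 K\gamma$ per unswapped node, i.e. $(N-L)L^2 K\gamma$ overall after summing, which when $L$ is large relative to $\ln N$ is not small enough against $((N-L)LK\gamma/2)^2$ divided by it — wait, actually that term \emph{is} fine; the genuinely dangerous term is the one controlled by $\Delta$'s $8N\ln 2$ piece, which is why $\E^L_2$'s threshold is tuned to exactly that magnitude, and matching it against $(N-L)^2 L^2 K^2\gamma^2$ is what forces $K = \Omega(\sqrt{\ln N}/\gamma)$ and the $KN$ condition. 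Tracking all constants so the final exponent is at least $4L\ln N + \Theta(1)$ uniformly in $L\in[1,N/2]$ and $N\geq 4$ is the delicate bookkeeping that Section~\ref{sec:append-case-study} presumably carries out in full.
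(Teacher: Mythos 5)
Your high-level approach matches the paper's: the same martingale set-up with the $\sigma^2$ bound $4(N-L)L^2K\gamma + 4(N-L)L\Delta$ from Theorem~\ref{thm:diff-dev-p}, the same advantageous base point $(N-L)LK\gamma/2$ from Theorem~\ref{thm:exp-h-p}, and Azuma targeted at exponent $\geq 4L\ln N$ so the union bound over $\binom{N}{L}^2 \leq N^{2L}$ cuts works out. Your identification of the increment coefficient $|p_1^k-p_2^k|\cdot|f_2^k(\ul h)|$ and the splitting into ``mean'' and ``deviation'' parts via Lemma~\ref{lemma:diff-bound-p} is exactly right.

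However, there is an arithmetic slip in your final verification, and it matters. Starting from
\[
(N-L)^2 L^2 K^2\gamma^2 \;=\; \Omega\bigl(L\ln N\,[(N-L)L^2K\gamma + (N-L)L\Delta]\bigr)
\]
and dividing through by $(N-L)L^2$, the right-hand side becomes $\Omega\bigl(\ln N\,[LK\gamma + \Delta]\bigr)$, \emph{not} $\Omega\bigl(\ln N\,[K\gamma + \Delta/L]\bigr)$ — you dropped a factor of $L$. Under the \emph{correct} form the left side $(N-L)K^2\gamma^2$ decreases in $L$ while the right side $\ln N[LK\gamma + \Delta]$ increases in $L$, so the binding constraint is at $L = N/2$, not $L=1$ as you assert. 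The conclusion you need from that regime is $K\gamma = \Omega(\ln N)$, which happens to be one of the hypotheses, so the lemma is saved — but your stated reason (``since $L\geq 1$ the worst case is $L=1$'') is wrong and the argument as written does not justify the bound uniformly in $L$. The paper handles precisely this uniformity by splitting into three explicit regimes ($4\leq N\leq\log\log N/(2\gamma)$, $\log\log N/(2\gamma)<N\leq K\log\log N/20$, and $N\geq K\log\log N/20$) and carrying explicit constants through the $\sigma^2$ estimate in each; your sketch would need to be expanded to cover all $L\in[1,N/2]$ and all three regimes of $N$ vs.~$K$, which is exactly the bookkeeping you correctly flagged as the ``hard part.''
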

Eventually we compute the probability of events 
$\{\diff(\T, (S,\overline{S}), L)\leq 0\}$ in $\bar{\E}^N_1$ for all
balanced $(S, \overline{S})$ in Section~\ref{sec:product}.

\section{Proof of Theorem~\ref{thm:exp-h-p}}
\label{sec:cond-exp}
This section is dedicated to prove Theorem~\ref{thm:exp-h-p}.
We first give another definition.
\begin{definition}
\label{def:ell2}
$\E^{N-L}_1(S, \bar{S})$ is the same as $\E(X_1) \cup \ldots \cup \E(X_{N-L}) 
\cup \E(Y_1) \cup \ldots \cup \E(Y_{N-L})$ in 
the product probability space composed of distinct probability spaces 
defined over nodes $X_1, \ldots, X_{N-L}$ and $Y_1, \ldots, Y_{N-L}$ as in 
Definition~\ref{def:one-dim-p}.
\end{definition}
Hence $\bar{\E}^L_1$ and $\bar{\E}^{N-L}_1$ imply that no 
{\em bad node} event happens in the appropriate product spaces
thus defined.
We omit $(S, \bar{S})$ from $\E^L_1(S, \bar{S})$ 
and $\E^{N-L}_1(S, \bar{S})$ when it is clear from the context.
Given a balanced cut $(S, \bar{S})$, $\ul{h}$ records a history on 
the $2KL$ bits on swapped nodes $U_1, \ldots, U_L, V_1, \ldots, V_L$.
\begin{proposition}
\label{pro:sub-nodes}
Given all nodes are drawn from $\bar{\E}^N_1$, 
for any balanced cut $(S, \bar{S})$ and its particular $2KL$-history 
$\ul{h}$ that we record must satisfy the following:
$\ul{h} \in \bar{\E}^L_1(S, \bar{S})$.
\end{proposition}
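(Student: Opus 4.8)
The plan is to observe that Proposition~\ref{pro:sub-nodes} is essentially a bookkeeping statement about which probability subspace the swapped nodes inhabit, and to prove it by unpacking the definitions of the various ``bad node'' events. First I would recall that $\E^N_1$ is, by Definition~\ref{def:product-space}, exactly the union $\E(Z_1)\cup\cdots\cup\E(Z_{2N})$ over \emph{all} $2N$ sample points, where each $\E(Z_i)$ is the single-node event of Definition~\ref{def:one-dim-p}. Since the cut $(S,\bar{S})$ simply relabels the $2N$ points into the $2L$ swapped nodes $U_1,\ldots,U_L,V_1,\ldots,V_L$ and the $2(N-L)$ unswapped nodes $X_1,\ldots,X_{N-L},Y_1,\ldots,Y_{N-L}$, the multiset $\{Z_1,\ldots,Z_{2N}\}$ coincides with $\{U_j\}\cup\{V_j\}\cup\{X_i\}\cup\{Y_i\}$. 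Hence $\E^N_1 = \E^L_1(S,\bar{S})\cup\E^{N-L}_1(S,\bar{S})$, where $\E^L_1(S,\bar{S})$ is from Definition~\ref{def:ell} and $\E^{N-L}_1(S,\bar{S})$ is from Definition~\ref{def:ell2}; all bad-node events are mutually independent and live in distinct coordinate probability spaces, so this is an honest decomposition of the union into the ``swapped'' part and the ``unswapped'' part.

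The second step is to take complements: drawing all nodes from $\bar{\E}^N_1$ means no bad-node event occurs at any of the $2N$ points, i.e.\ $\bar{\E}^N_1 = \overline{\E^L_1(S,\bar{S})}\cap\overline{\E^{N-L}_1(S,\bar{S})} = \bar{\E}^L_1(S,\bar{S})\cap\bar{\E}^{N-L}_1(S,\bar{S})$. In particular $\bar{\E}^N_1\subseteq\bar{\E}^L_1(S,\bar{S})$, so every outcome compatible with $\bar{\E}^N_1$ is an outcome in which none of the swapped nodes $U_1,\ldots,U_L,V_1,\ldots,V_L$ is a bad node. Finally, the recorded $2KL$-history $\ul{h}$ is by construction the restriction of such an outcome to the $2KL$ bits on the swapped nodes, and whether $\ul{h}\in\bar{\E}^L_1(S,\bar{S})$ depends only on those $2KL$ bits (Definition~\ref{def:ell} is a condition purely on $U_1,\ldots,U_L,V_1,\ldots,V_L$); therefore $\ul{h}$ inherits the property $\ul{h}\in\bar{\E}^L_1(S,\bar{S})$, which is what we had to show.

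There is no real obstacle here; the only thing to be careful about is the indexing/relabeling identification in the first step — one must check that the correspondence between the global labels $Z_1,\ldots,Z_{2N}$ (in a fixed ordering, per Definition~\ref{def:product-space}) and the cut-dependent labels $\{U_j\},\{V_j\},\{X_i\},\{Y_i\}$ is a genuine bijection of sample points, so that the bad-node events really do match up one-to-one, and that each such event is measurable with respect to (only) that node's coordinate $\sigma$-field, so that conditioning on or restricting to $\ul{h}$ is well-defined. Once that is in place the proposition is immediate from the definitions.
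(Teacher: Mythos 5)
Your proof is correct and follows the same essential reasoning as the paper's: both observe that $\bar{\E}^N_1$ (no bad-node events among all $2N$ nodes) implies in particular that none of the $2L$ swapped nodes is bad, which is exactly the condition $\ul{h}\in\bar{\E}^L_1(S,\bar{S})$. The paper simply writes this out by explicitly restating $\diff(U_j)\geq\expct{\diff(U_j)}-K\gamma/4$ and $\diff(V_j)\geq\expct{\diff(V_j)}-K\gamma/4$ for the swapped nodes, whereas you package the same fact as the set-theoretic inclusion $\bar{\E}^N_1\subseteq\bar{\E}^L_1(S,\bar{S})$; the content is identical.
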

\begin{proof}
Given $\bar{\E}^N_1$, we know that for all nodes $Z_1, \ldots, Z_{2N}$,
\begin{gather}
\label{eq:node-bits}
\diff(Z_i) \geq \expct{\diff(Z_i)} - K \gamma/4,
\end{gather}
simultaneously in the product probability space $(\Omega, \F, {\bf Pr})$,
where $\diff(Z_i)$ is a random variable solely determined 
by node $Z_i$'s bit vector.
In particular, for each balanced $(S, \bar{S})$, we focus on
the product probability space that is composed of distinct 
probability spaces defined over swapped nodes 
$U_1, \ldots, U_L, V_1, \ldots, V_L$ as in Definition~\ref{def:ell}.
After we reveal these $2L$ bit vectors on 
$U_j, V_j, \forall j = 1, \ldots, L$, by~(\ref{eq:node-bits}),
\begin{eqnarray}
\label{eq:swapped-node-u-2}
\diff(U_j) & \geq &  \expct{\diff(U_j)} - K \gamma/4, \forall j = 1, \ldots, L, \\
\label{eq:swapped-node-v-2}
\diff(V_j) & \geq &  \expct{\diff(V_j)} - K \gamma/4, \forall j = 1, \ldots, L.
\end{eqnarray}
Thus we have $\ul{h} \in \bar{\E}^L_1(S, \bar{S})$.
\end{proof}
\begin{definition}
We use $\bar{f}$ to denote the {\em future} of the $2(N-L)K$ random bits 
that we are going to reveal for the unswapped nodes
on a given balanced cut $(S, \bar{S})$.
Recall that once we are fixed to the probability space such that
$\E^N_1$ does not happen, we know that both $\ul{h}$ and $\bar{f}$
are confined; the following two notation are equivalent:
\begin{eqnarray*}
(\ul{h} \in \bar{\E}^{L}_1(S, \bar{S})) & \cap & 
(\bar{f} \in \bar{\E}^{N-L}_1(S, \bar{S})), \\
(\ul{h}, \bar{f}) & \in & \bar{\E}^{N}_1.
\end{eqnarray*}
\end{definition}
\begin{remark}
Another way of seeing $\bar{\E}^L_1(S, \bar{S})$ 
(with respect to a particular balanced cut $(S, \bar{S})$) 
is to view it as an event in the simple probability space 
$(\Omega, \F, {\bf Pr})$, such that we put constraints only on 
the specific $2L$ swapped nodes defined on $(S, \bar{S})$ 
while leaving the $\bar{f}$ at random.
Hence we have $\bar{\E}^N_1 \subset \bar{\E}^L_1(S, \bar{S})$
in $(\Omega, \F, {\bf Pr})$.
\end{remark}
We leave this confined space given $\bar{\E}^N_1$ for now and 
explore the following {\em expanded} subspace, where we require 
$\ul{h} \in \bar{\E}^L_1$ while leaving the 
future $\bar{f}$ at random. 
$(\Omega_{\ul{h}}, \Sigma(\Omega_{\ul{h}}), {\bf Pr}_{\ul{h}})$ 
corresponds to this {\em expanded} subspace, where
$\ul{h} \in \bar{\E}^L_1$.
This immediately implies the following lemma.
\begin{lemma}
\label{lemma:induction-p}
For a balanced cut $(S, \bar{S})$, given a particular $2KL$-history
$\ul{h} \in F_{2KL}$ on the $2L$ swapped nodes such that 
$\ul{h} \in \bar{\E}^L_1$,
\begin{gather}
\expect{\ul{h}}{\diff(\T, (S,\overline{S}), L)| 
\ul{h} \in \bar{\E}^L_1, \bar{f} \mbox { at random}}
\geq  L(N-L)K \gamma/2,
\end{gather}
where expectation is over all possible outcomes of the 
$2(N-L)K$ random bits in $\bar{f}$ in probability space 
$(\Omega_{\ul{h}}, \Sigma(\Omega_{\ul{h}}), {\bf Pr}_{\ul{h}})$.
\end{lemma}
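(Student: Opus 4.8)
The plan is to turn the claim into a deterministic statement about the fixed history $\ul{h}$, then close it with the one-bit pairing trick. Once we have passed to the \emph{expanded} subspace $(\Omega_{\ul h},\Sigma(\Omega_{\ul h}),{\bf Pr}_{\ul h})$ described just before the lemma — in which the $2(N-L)K$ future bits on the unswapped nodes are genuinely i.i.d.\ Bernoulli, the constraints having been dropped there — the expectation over $\bar f$ is exactly the quantity evaluated in Proposition~\ref{pro:history-p}. So the real content is to lower bound $\sum_{j=1}^L\bigl(\diff(\tilde U_j)+\diff(\tilde V_j)\bigr)$ using only that $\ul h\in\bar{\E}^L_1$.

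First I would apply Proposition~\ref{pro:history-p} in the expanded subspace: since $\diff(\T,(S,\bar S),L)$ in~(\ref{eq:diff-score}) is, for a fixed $\ul h$, an affine function of the unswapped-node bits and $\score$ is bilinear, averaging over $\bar f$ replaces each unswapped bit $x^k_i$ (resp.\ $y^k_i$) by its mean $p^k_1$ (resp.\ $p^k_2$), yielding
\begin{gather*}
\expect{\ul h}{\diff(\T,(S,\bar S),L)}
= (N-L)\sum_{j=1}^{L}\bigl(\diff(\tilde U_j)+\diff(\tilde V_j)\bigr),
\end{gather*}
where every $\diff(\tilde U_j),\diff(\tilde V_j)$ is determined by $\ul h$. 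Next I would use the hypothesis $\ul h\in\bar{\E}^L_1$: by Definition~\ref{def:one-dim-p} none of the swapped nodes is a bad node, so $\diff(\tilde U_j)\ge\expct{\diff(U_j)}-K\gamma/4$ and $\diff(\tilde V_j)\ge\expct{\diff(V_j)}-K\gamma/4$ for every $j$. Because $U_j\sim\DB_1$ and $V_j\sim\DB_2$, Proposition~\ref{pro:fm99} gives $\expct{\diff(U_j)}+\expct{\diff(V_j)}=K\gamma$ (in the notation $\eta=\expect{\vec{x}\sim\vec{p}_1}{\diff(X)}$, the two are $\eta$ and $K\gamma-\eta$). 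Adding the two inequalities for a single matched pair gives $\diff(\tilde U_j)+\diff(\tilde V_j)\ge K\gamma-K\gamma/2=K\gamma/2$, regardless of the value of $\eta$. Summing over $j=1,\dots,L$ and multiplying by $(N-L)$ yields $\expect{\ul h}{\diff(\T,(S,\bar S),L)}\ge (N-L)LK\gamma/2$, as required.

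The arithmetic and the bilinearity reduction are routine; the only genuine idea — and the thing to emphasize — is the pairing. A per-node guarantee on $\diff(\tilde U_j)$ or $\diff(\tilde V_j)$ alone is too weak, since one of $\eta$, $K\gamma-\eta$ may be as small as $K\gamma/2$ and the corresponding good-node bound then leaves only $K\gamma/4$; it is the sum over a $\DB_1$–$\DB_2$ matched pair that is robustly bounded below by $K\gamma/2$, which is precisely the feature of the one-bit construction and is why the input instance must be balanced (so that the $L$ swapped nodes on $S$ and the $L$ on $\bar S$ can be paired across populations). The one bookkeeping point worth care is that the good-node conditions recorded in $\ul h$ really are attached to the swapped nodes $U_j,V_j$ of \emph{this} cut — that is exactly the content of Proposition~\ref{pro:sub-nodes} — and that the future expectation is evaluated in $(\Omega_{\ul h},\Sigma(\Omega_{\ul h}),{\bf Pr}_{\ul h})$, i.e.\ after the unswapped nodes have been unconstrained, so that Proposition~\ref{pro:history-p} is legitimately applicable.
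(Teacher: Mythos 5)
Your proof is correct and is essentially the paper's own argument: reduce via Proposition~\ref{pro:history-p} to the deterministic sum $(N-L)\sum_j(\diff(\tilde U_j)+\diff(\tilde V_j))$, use the $\bar{\E}^L_1$ hypothesis to get per-node bounds, and invoke Proposition~\ref{pro:fm99} to conclude $\diff(\tilde U_j)+\diff(\tilde V_j)\ge K\gamma/2$ for each matched pair. (The appeal to Proposition~\ref{pro:sub-nodes} is not actually needed here, since $\ul h\in\bar{\E}^L_1$ is taken directly as a hypothesis; that proposition enters only when the lemma is later applied in Theorem~\ref{thm:exp-h-p}.)
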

\begin{proof}
For a balanced cut $(S, \bar{S})$, given $\ul{h} \in \bar{\E}^L_1$, 
where $\ul{h}$ records $2KL$ bits over swapped nodes 
$U_j, V_j, \forall j = 1, \ldots, L$, by Definition~\ref{def:one-dim-p},
\begin{eqnarray}
\label{eq:swapped-node-u-p}
\diff(U_j) & \geq & \expct{\diff(U_j)} - K \gamma/4, \forall j = 1, \ldots, L, \\
\label{eq:swapped-node-v-p}
\diff(V_j) & \geq & \expct{\diff(V_j)} - K \gamma/4, \forall j = 1, \ldots, L,
\end{eqnarray}
and hence $\diff(U_j) + \diff(V_j) \geq K \gamma/2, \forall j =1, \ldots, L$
by Proposition~\ref{pro:fm99}.
Thus, in $(\Omega_{\ul{h}}, \Sigma(\Omega_{\ul{h}}), {\bf Pr}_{\ul{h}})$, 
where $\bar{f}$ is {\em at random} and $\ul{h} \in \bar{\E}^L_1$, 
we have from Proposition~\ref{pro:history-p},
\begin{eqnarray*}
\label{eq:exp-sum-p}
\expect{\ul{h}}{\diff(\T, (S,\bar{S}), L)}
& = & 
(N-L)\sum_{j=1}^{L}\diff(U_j) + 
(N-L)\sum_{j=1}^{L}\diff(V_j) \\
& \geq & 
(N-L) \sum_{j=1}^{L} (\diff(U_j) + \diff(V_j)) \geq (N-L)L K\gamma/2.
\end{eqnarray*}
\end{proof}
Recall that $\bar{\E}^L_2$ is the event that no simultaneously large 
deviation happens across $2L$ individuals over their $2KL$ random bits.
\begin{corollary}
\label{cor:induction}
Given that 
$\ul{h} \in \bar{\E}^L_1 \cap \bar{\E}^L_2$, and $\bar{f}$ is at random:
\begin{gather}
\expect{\ul{h}}{\diff(\T, (S,\overline{S}), L) 
| \ul{h} \in \bar{\E}^L_1 \cap \bar{\E}^L_2, \bar{f} \mbox { at random}} 
\geq L(N-L)K \gamma/2,
\end{gather}
which holds so long as $\ul{h} \in \bar{\E}^L_1$.
\end{corollary}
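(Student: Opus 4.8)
The plan is to reduce the statement directly to Lemma~\ref{lemma:induction-p}, using the fact that the conditional expectation in question is already pinned down once $\ul{h}$ is fixed. By Proposition~\ref{pro:history-p}, for a fixed $2KL$-history $\ul{h}$ we have $\expect{\ul{h}}{\diff(\T,(S,\overline{S}),L)} = (N-L)\sum_{j=1}^{L}\diff(\tilde U_j) + (N-L)\sum_{j=1}^{L}\diff(\tilde V_j)$, which is a deterministic function of the swapped-node bits recorded in $\ul{h}$ and does not depend on the future $\bar f$ at all. Consequently, intersecting the ambient collection of admissible histories with a further $\F_{2KL}$-measurable event cannot change the value this expectation takes on any history that lies in that event; it can only shrink the set of histories over which we range.

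Next I would observe that the extra event $\E^L_2$ is, by Definition~\ref{def:devi-p} and Definition~\ref{def:big-delta}, determined entirely by the $2KL$ random bits on the swapped nodes $U_1,\ldots,U_L,V_1,\ldots,V_L$, through the deviation values $t_1,\ldots,t_K$ of $f_2^k(\ul{h})$; in particular it is $\F_{2KL}$-measurable and imposes no constraint whatsoever on $\bar f$. Hence conditioning additionally on $\bar{\E}^L_2$ is fully compatible with leaving $\bar f$ ``at random'' in the expanded space $(\Omega_{\ul{h}},\Sigma(\Omega_{\ul{h}}),{\bf Pr}_{\ul{h}})$ used in Lemma~\ref{lemma:induction-p}.

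Finally, since $\bar{\E}^L_1\cap\bar{\E}^L_2\subseteq\bar{\E}^L_1$, every history $\ul{h}$ satisfying the hypothesis of the corollary in particular satisfies $\ul{h}\in\bar{\E}^L_1$, so Lemma~\ref{lemma:induction-p} applies verbatim and gives $\expect{\ul{h}}{\diff(\T,(S,\overline{S}),L)}\geq L(N-L)K\gamma/2$, which is exactly the asserted bound. This also explains the closing remark ``which holds so long as $\ul{h}\in\bar{\E}^L_1$'': the bound on the conditional expectation never uses $\bar{\E}^L_2$, and the event $\bar{\E}^L_2$ is layered in here only so that it is available later for the bounded-differences argument of Theorem~\ref{thm:diff-dev-p}. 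There is essentially no obstacle in this corollary; the only point worth verifying carefully is the $\F_{2KL}$-measurability of $\E^L_2$, which is immediate once one unwinds Definitions~\ref{def:devi-p} and~\ref{def:big-delta} in terms of $f_2^k(\ul{h})$.
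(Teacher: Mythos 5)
Your proposal is correct and matches the paper's (unstated but clearly intended) reasoning: the corollary is an immediate consequence of Lemma~\ref{lemma:induction-p} because the conditional expectation is a function of $\ul{h}$ alone, $\E^L_2$ is $\F_{2KL}$-measurable so it only restricts the admissible histories without affecting $\bar f$, and $\bar{\E}^L_1\cap\bar{\E}^L_2\subseteq\bar{\E}^L_1$ ensures the lemma's hypothesis is met. Nothing is missing; the paper itself offers no separate proof precisely because the deduction is this direct.
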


We next bound $\expect{\ul{h}}{\diff(\T, (S,\overline{S}), L)}$ 
for all balanced $(S, \bar{S})$,
where $\ul{h}$ is confined in $\bar{\E}^N_1$ and $\bar{\E}^L_2$.
We now prove Theorem~\ref{thm:exp-h-p}.

\begin{proofof}{Theorem~\ref{thm:exp-h-p}}
By Proposition~\ref{pro:sub-nodes}, for each balanced cut $(S, \bar{S})$,
we have 
\begin{gather}
\ul{h} \in \bar{\E}^L_1(S, \bar{S}).
\end{gather}
Now apply Corollary~\ref{cor:induction}, given
that  $\ul{h} \in \bar{\E}^L_1(S, \bar{S}) \cap \bar{\E}^L_2$,
we immediately have the theorem.
\end{proofof}
\begin{remark}
$\diff(Z)$ is determined by node $Z$'s bit pattern, which is the same
when we observe it from every balanced cut, where it acts as a swapped 
node. Hence although we do have $O(2^n)$ balanced cuts, 
$\expect{\ul{h}}{\diff(\T, (S,\bar{S}), L)}$ for all balanced cuts 
are just determined by the $2N$ random variables 
$\diff(Z_1), \ldot,$ $ \diff(Z_{2N})$, each of which is determined by the bit vector 
of an individual in our sample.
\end{remark}

\section{Bounded Differences}
\label{sec:bounded-diff}
In order to show Lemma~\ref{lemma:rho-3-cal-p} (actual proof see 
Section~\ref{sec:append-case-study}), we prove Theorem~\ref{thm:diff-dev-p} in 
this section, where we bound the deviation of random variable 
$\diff(\T, (S,\overline{S}), L)$ for a particular balanced cut 
$(S, \bar{S})$. Recall that we let bit vector $(H_1, \ldots, H_{2KN})$ 
record the entire history of random bits that we see, 
where $(H_1, \ldots, H_{2KL})$ record the $2KL$-history
$\ul{H}^{(\ell)}$ on $2L$ swapped nodes.
First it is convenient to introduce some more notation:
For $\ell' \geq 2KL$, we begin to reveal the random bits on unswapped nodes
in $(S, \bar{S})$. The random variable
$\expect{\ul{h}}{\diff(\T, (S,\overline{S}), L)|\ul{H}^{(\ell')}}$ 
depends on the random extension  $\ul{H}^{(\ell')}$ of $\ul{h}$ observed.
By definition 
$\expect{\ul{h}}{\diff(\T, (S,\overline{S}), L)|\ul{H}^{(\ell')}}(\pi) =
\expect{\ul{h}}{\diff(\T,(S,\overline{S}),L)|\ul{H}^{(\ell')} = \ul{h}'}$
for $\pi \in \Omega_{\ul{h}}$, where $\ul{h}' = \ul{H}^{(\ell')}(\pi)$;
another notation for this is
$\expect{\ul{h}}{\diff(\T, (S,\overline{S}), L)|\F}$ where $\F$ is the 
$\sigma$-field generated by $\ul{H}^{(\ell')}$ restricted to  
$\Omega_{\ul{h}}$.
To prove the theorem, we introduce the following.
\begin{lemma}{\bf(Azuma's Inequality)}
\label{lemma:azuma}
Let $Z_0, Z_1, \ldot, Z_m = f$ be a martingale on some probability
space, and suppose that $|Z_i - Z_{i-1}| \leq c_i$, 
 $\forall i = 1, 2, \ldot, m$, then
\[
\prob{|f - \expct{f}| \geq t} \leq 2 e^{-t^2/2\sigma^2},   
\]
where $\sigma^2 = \sum_{i=1}^m c_i^2$.
\end{lemma}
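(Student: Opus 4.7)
The plan is to follow the standard exponential-moment (Chernoff--Cram\'er) argument applied to the martingale increments $D_i := Z_i - Z_{i-1}$. First I would write $f - \expct{f} = Z_m - Z_0 = \sum_{i=1}^m D_i$ and, for any $\lambda > 0$, bound $\prob{f - \expct{f} \geq t}$ by Markov's inequality applied to $e^{\lambda(Z_m - Z_0)}$, giving $e^{-\lambda t}\,\expct{e^{\lambda(Z_m - Z_0)}}$.

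Next I would peel off the increments using the tower property: conditioning on the $\sigma$-field generated by $Z_0, \ldots, Z_{m-1}$, the quantity $e^{\lambda(Z_{m-1} - Z_0)}$ is measurable, so $\expct{e^{\lambda(Z_m - Z_0)}} = \expct{e^{\lambda(Z_{m-1} - Z_0)}\,\expct{e^{\lambda D_m}\mid Z_0,\ldots,Z_{m-1}}}$. Iterating, it suffices to bound each conditional moment generating function $\expct{e^{\lambda D_i}\mid Z_0,\ldots,Z_{i-1}}$.

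The main technical step is Hoeffding's lemma: if a random variable $X$ satisfies $\expct{X}=0$ and $X \in [a,b]$ almost surely, then $\expct{e^{\lambda X}} \leq e^{\lambda^2 (b-a)^2/8}$. Applied conditionally to $D_i$, which has conditional mean zero and satisfies $|D_i| \leq c_i$ (so $D_i$ lies in an interval of length at most $2 c_i$), this yields $\expct{e^{\lambda D_i}\mid Z_0,\ldots,Z_{i-1}} \leq e^{\lambda^2 c_i^2/2}$. The proof of Hoeffding's lemma itself is the one calculation I would actually carry out: write $e^{\lambda x}$ as a convex combination along $[a,b]$, take the log, and bound the resulting function of $\lambda$ by its second-order Taylor expansion (this is where a factor of $8$ or $2$ enters depending on whether one uses the interval length or the half-length).

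Chaining the $m$ conditional bounds gives $\expct{e^{\lambda(Z_m-Z_0)}} \leq \exp(\lambda^2 \sigma^2/2)$ with $\sigma^2 = \sum c_i^2$. Substituting into Markov and optimizing in $\lambda$ by choosing $\lambda = t/\sigma^2$ yields the one-sided bound $e^{-t^2/(2\sigma^2)}$. Applying the same argument to the martingale $-Z_i$ and adding the two tail bounds produces the factor of $2$ stated in the lemma. The only step with any real content is Hoeffding's lemma; everything else is bookkeeping, so I would expect no obstacle beyond stating that lemma cleanly.
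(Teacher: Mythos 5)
Your proof is correct: it is the standard exponential-moment argument (Markov on $e^{\lambda(Z_m-Z_0)}$, peeling increments via the tower property, conditional Hoeffding's lemma giving $e^{\lambda^2 c_i^2/2}$ per step, optimizing $\lambda = t/\sigma^2$, and doubling for the two-sided bound), and the constants work out exactly to the stated $2e^{-t^2/2\sigma^2}$. The paper itself states Azuma's inequality as a known tool without proof, so there is nothing to compare against beyond noting that your argument is the canonical one and is complete, modulo the routine observation that $Z_0=\expct{f}$ since the filtration starts from the trivial $\sigma$-field.
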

We are now ready to use bounded differences approach in 
$(\Omega_{\ul{h}}, \Sigma(\Omega_{\ul{h}}), {\bf Pr}_{\ul{h}})$ and
prove Theorem~\ref{thm:diff-dev-p}.
\begin{theorem}
\label{thm:diff-dev-p}
Let $\ul{h}$ be a possible $2KL$-history that we record for a balanced
cut $(S,\bar{S})$  such that 
$\ul{h} \in \bar{\E}^{L}_2 \cap \bar{\E}^{L}_1$.
Then, for $t > 0$, in probability space 
$(\Omega_{\ul{h}}, \Sigma(\Omega_{\ul{h}}), {\bf Pr}_{\ul{h}})$, where 
all future $2(N-L)K$ random bits $\bar{f}$ are completely at random,
\[
\probb{\ul{h}}{|\expect{\ul{h}}{\diff(\T, (S,\bar{S}), L)|\ul{H}^{2KN}}
 - \expect{\ul{h}}{\diff(\T, (S,\bar{S}), L)}| \geq t} \leq 
2 e^{-t^2/2\sigma^2},
\]
where $\sigma^2 \leq 4 (N-L)L^2 (K \gamma) + 4(N-L)L \Delta$, 
for all balanced $(S, \bar{S})$ with $0< L \leq N/2$ swapped nodes.
\end{theorem}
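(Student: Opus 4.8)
The plan is to apply Azuma's Inequality (Lemma~\ref{lemma:azuma}) to the martingale obtained by revealing the $2(N-L)K$ future random bits $\bar f$ one at a time, starting from the conditional expectation $\expect{\ul{h}}{\diff(\T, (S,\bar S), L)}$ as the initial term $Z_0$. The full sequence is $Z_0 = \expect{\ul{h}}{\diff(\T, (S,\bar S), L)}$, $Z_{\ell'-2KL} = \expect{\ul{h}}{\diff(\T, (S,\bar S), L)\mid \ul{H}^{(\ell')}}$ for $2KL < \ell' \leq 2KN$, and the final term $Z_{2(N-L)K} = \expect{\ul{h}}{\diff(\T, (S,\bar S), L)\mid \ul{H}^{(2KN)}}$, which is exactly the random variable whose deviation we want to bound. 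This is a Doob martingale in the probability space $(\Omega_{\ul{h}}, \Sigma(\Omega_{\ul{h}}), {\bf Pr}_{\ul{h}})$, so Azuma applies once we have per-step bounds $c_i$ with $\sum_i c_i^2 \leq \sigma^2$; the theorem's conclusion then follows with $\sigma^2 = \sum_i c_i^2$.

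The crux is to compute, for each future bit, the bounded difference $c_i = \size{Z_{i} - Z_{i-1}}$ and to show $\sum_i c_i^2 \leq 4(N-L)L^2(K\gamma) + 4(N-L)L\Delta$. First I would use Proposition~\ref{pro:history-p} together with the structure of~(\ref{eq:diff-score}) to write out how $\diff(\T, (S,\bar S), L)$ depends on a single unswapped-node bit, say the $k$th bit of an unswapped node $X_i$ (or $Y_i$). From~(\ref{eq:diff-score}), the contribution of $X_i$'s $k$th bit appears through the terms $\sum_{j=1}^L[-\score(V_j, X_i) + \score(U_j, X_i)]$, so flipping $x_i^k$ changes the random variable by exactly $(p_1^k - p_2^k)$ times $\bigl(\sum_{j=1}^L u_j^k - \sum_{j=1}^L v_j^k\bigr) = f_2^k(\ul{h})$ --- but in the martingale we are revealing $X_i$'s bit, not $U_j$'s or $V_j$'s, so the step size for that bit is governed by $\size{p_1^k - p_2^k}\cdot\size{f_2^k(\ul{h})}$, since $\ul{h}$ is already fixed and the swapped-node bits $u_j^k, v_j^k$ are constants in $(\Omega_{\ul{h}}, \Sigma(\Omega_{\ul{h}}), {\bf Pr}_{\ul{h}})$. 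The martingale difference for revealing one future bit is at most this quantity (the conditional expectation moves by at most the range of the increment), and there are $2(N-L)$ unswapped nodes carrying a $k$th bit for each $k$, so
\begin{equation}
\sigma^2 = \sum_i c_i^2 \leq \sum_{k=1}^K 2(N-L)\,(p_1^k - p_2^k)^2\, \size{f_2^k(\ul{h})}^2 . \nonumber
\end{equation}
Here I expect the main obstacle: bounding $\size{f_2^k(\ul{h})}^2$ cleanly. Using Lemma~\ref{lemma:diff-bound-p} (which holds since $\ul{h}\in\bar\E^L_2$), $\size{f_2^k(\ul{h})} \leq \size{\expct{f_2^k(\ul{h})}} + \size{t_k\sqrt L}$, and since $\expct{f_2^k(\ul{h})} = L(p_1^k - p_2^k)$ ranges in $[-L, L]$, the term $\size{\expct{f_2^k(\ul{h})}} \leq L$, so $\size{f_2^k(\ul{h})}^2 \leq 2L^2 + 2Lt_k^2$ by $(a+b)^2 \leq 2a^2 + 2b^2$. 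Substituting,
\begin{equation}
\sigma^2 \leq \sum_{k=1}^K 2(N-L)(p_1^k - p_2^k)^2\bigl(2L^2 + 2Lt_k^2\bigr) = 4(N-L)L^2\sum_{k=1}^K (p_1^k-p_2^k)^2 + 4(N-L)L\sum_{k=1}^K (p_1^k-p_2^k)^2 t_k^2. \nonumber
\end{equation}
The first sum is $\norm{\vec p_1 - \vec p_2}_2^2 = K\gamma$ by Proposition~\ref{pro:fm99}; for the second, since $(p_1^k - p_2^k)^2 \leq 1$, it is at most $\sum_{k=1}^K t_k^2 \leq \Delta$ by Lemma~\ref{lemma:diff-bound-p}. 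This yields $\sigma^2 \leq 4(N-L)L^2(K\gamma) + 4(N-L)L\Delta$, as claimed, and Azuma's Inequality gives the stated tail bound. The one subtlety to check carefully is that revealing bits in the order "all $K$ bits of $X_1$, then all $K$ bits of $Y_1$, ..." keeps the per-bit difference at $\size{p_1^k - p_2^k}\cdot\size{f_2^k(\ul{h})}$ and does not introduce cross terms --- this holds because, by Proposition~\ref{pro:history-p} and linearity of~(\ref{eq:diff-score}), the conditional expectation is additive over unswapped nodes and each unswapped node's $k$th bit contributes independently of the others' bits, so the Doob increments are exactly the centered single-bit contributions and the $c_i$ computed above are valid bounds on them.
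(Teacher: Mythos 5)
Your overall plan — set up a Doob martingale on the $2(N-L)K$ unswapped-node bits in $(\Omega_{\ul{h}}, \Sigma(\Omega_{\ul{h}}), {\bf Pr}_{\ul{h}})$ and apply Azuma — is exactly the paper's route, and the final bound you report is the right one, but there is a genuine error in your per-bit difference computation that is silently compensated by slack elsewhere. You assert that flipping $x_i^k$ changes $\diff(\T, (S,\bar{S}), L)$ by $(p_1^k - p_2^k)\cdot f_2^k(\ul{h})$, and therefore that the martingale increment for that bit is governed by $\size{p_1^k - p_2^k}\cdot\size{f_2^k(\ul{h})}$. This is not right: the dependence of the random variable on $x_i^k$ is $x_i^k\sum_{j=1}^L(u_j^k - v_j^k) = x_i^k\,f_2^k(\ul{h})$, so flipping $x_i^k$ changes the value by $f_2^k(\ul{h})$, with no factor of $(p_1^k-p_2^k)$. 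The actual Doob increment upon revealing $x_i^k$ is $(x_i^k - p_1^k)f_2^k(\ul{h})$, of magnitude $\max(p_1^k, 1-p_1^k)\cdot\size{f_2^k(\ul{h})} \leq \size{f_2^k(\ul{h})}$; this, and its analog for $y_i^k$ with $p_2^k$, is exactly what Lemma~\ref{lemma:diff-bound-2-p} establishes. You may be importing the $(p_1^k-p_2^k)$ factor from Proposition~\ref{pro:diff-val}, where $\diff(X)=\sum_i x^i(p_1^i-p_2^i)$, but that is a different quantity.

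The reason your final answer still matches is that you then bound $\size{\expct{f_2^k(\ul{h})}}$ by $L$ rather than by its exact value $L\size{p_1^k - p_2^k}$ from Proposition~\ref{pro:expect-e2-p}. The $(p_1^k-p_2^k)^2$ factor that should arise from the expectation is precisely the spurious one you introduced in the increment, and the two errors cancel. But the chain as written does not hold: with the correct increment bound $\size{f_2^k(\ul{h})}$ and your loose $\size{\expct{f_2^k(\ul{h})}} \leq L$, you would get $\sigma^2 \leq 4(N-L)L^2 K + 4(N-L)L\Delta$, with $K$ where the theorem needs $K\gamma$, which is too weak. The fix is to put the factor where it actually belongs: bound the increment by $\size{f_2^k(\ul{h})} \leq L\size{p_1^k-p_2^k} + \size{t_k}\sqrt{L}$, then apply $(a+b)^2 \leq 2a^2+2b^2$; this is precisely the paper's route.
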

\begin{proof}
We shall set up things to use Lemma~\ref{lemma:azuma}.
We work in probability space 
$(\Omega_{\ul{h}}, \Sigma(\Omega_{\ul{h}}), {\bf Pr}_{\ul{h}})$.
We start to reveal the $2K(N-L)$ bits 
on unswapped nodes that are chosen independently at random,
and rely on $2L$ swapped nodes having a good history
$\ul{h}$, given that $\ul{h} \in \bar{\E}^{L}_2 \cap \bar{\E}^{L}_1$.

Given the $\sigma$-field 
$(\Omega_{\ul{h}}, \Sigma(\Omega_{\ul{h}}))$, with $\Sigma(\Omega_{\ul{h}}) = 
2^{\Omega_{\ul{h}}}$, let us first define a filter $\Filter$.
Given independent random bits $H_{2KL+1}, \ldots, H_{2KN}$, the filter is 
defined by letting $\F_{i}, \forall i = 1, \ldots, m$, where $m = 2K(N-L)$, 
be the $\sigma$-field generated by histories
$\ul{H}^{(2KL + 1)},  \ldots, \ul{H}^{(2KL + i)}$.
We thus obtain a natural $\Filter$:
\begin{gather*}
\{\emptyset, \Omega_{\ul{h}} \} = \F_0 
\subset \F_1 \subset \ldots \subset \F_m =  2^{\Omega_{\ul{h}}},
\end{gather*} 
where for $0 \leq i \leq m = 2K(N-L)$, $(\Omega_{\ul{h}}, \F_i)$ 
is a $\sigma$-field.
Hence $\Filter$ corresponds to the increasingly refined partitions of 
$\Omega_{\ul{h}}$ obtained from all the different possible extensions
of the $2KL$-history $\ul{h}$. 

We obtain a martingale for random variable
$\diff(\T, (S,\bar{S}), L)$ 
such that:
Let $Z_0 = \expect{\ul{h}}{\diff(\T, (S,\bar{S}), L)}$ and 
\begin{eqnarray}
Z_{\ell' - 2KL}  =
\expect{\ul{h}}{\diff(\T, (S,\bar{S}), L)|\ul{H}^{(\ell')}}  = 
\expect{\ul{h}}{\diff(\T, (S,\overline{S}), L)|\F_{\ell'- 2KL}},
\end{eqnarray}
where $\F_{\ell' -2KL}$ is the $\sigma$-field generated by 
$\ul{H}^{(\ell')}$ restricted to $\Omega_{\ul{h}}$ and
$2KN \geq \ell' > 2KL$.
Let  $H_{2KL+1}, \ldots,$ $H_{2KN}$ map to random bits 
on $x_i^1, \ldots, x_{N-L}^K, y_i^1, \ldots y_{N-L}^K$, 
where $x_i^k$ or  $y_i^k$ refers to a single bit on dimension
$k$ on individual $X_i$ or $Y_i$ respectively.
We first define the following, $\forall j = 1, 2, \ldots, m$, where
$m = 2K(N-L)$,
\begin{eqnarray}
\size{Z_{j} - Z_{j-1}} = c_j.
\end{eqnarray}
We also need to translate between 
$c_j$, where $j = 1, 2, \ldots, m$, and $d_{i,k}(X_i)$ and
$d_{i,k}(Y_i)$, $\forall i = 1, \ldots, N-L, k =1 ,\ldots, K$
that correspond to the bit on dimension $k$ of $X_i$ and $Y_i$ 
respectively. In particular, $\forall i, \forall k$, we let
\begin{eqnarray}
c_{(i-1) K + k} & = & d_{i, k}(X_i), \\
c_{(N-L + i-1) K + k} & = & d_{i, k}(Y_i).
\end{eqnarray}
\begin{figure}[htb]
\begin{center}
\psfig{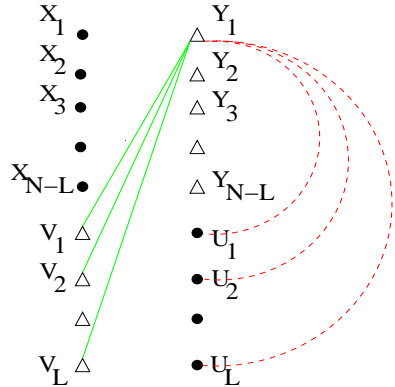}
\caption{Set of edges that random bits on $Y_1$ influence upon}
\label{fig:match-random}
\end{center}
\end{figure}
Let $j= 2KL + (i-1) K + k -1$, we have
\begin{eqnarray}
d_{i, k}(X_i) =  
\size{\expect{\ul{h}}{\diff(\T, (S,\bar{S}), L)|\ul{H}^{(j)}, x^k_i} 
- \expect{\ul{h}}{\diff(\T, (S,\bar{S}), L)|\ul{H}^{(j)}}}.
\end{eqnarray}
And similarly, let $\ell'= 2KL + (N-L)K + (i-1) K + k -1$, we have
\begin{eqnarray*}
d_{i, k}(Y_i) =
\size{\expect{\ul{h}}{\diff(\T, (S,\bar{S}), L)|\ul{H}^{(\ell')}, y^k_i} 
- \expect{\ul{h}}{\diff(\T, (S,\bar{S}), L)|\ul{H}^{(\ell')}}}.
\end{eqnarray*}
We immediately have the following lemma that we can plug into Azuma's 
inequality, where $d_{i, k}$ applies to both 
$d_{i, k}(X_i)$ and $d_{i, k}(Y_i)$.

\begin{lemma}
\label{lemma:diff-bound-2-p}
For the $2(N-L)K$ random bits on unswapped nodes $X_i, Y_i$ 
$\forall i \in [1, N-L]$ that we reveal, 
at dimension $k \in [1, K]$, we have
\[d_{i, k} \leq \size{L(p^k_2 - p^k_1)} + \size{t_k\sqrt{L}},\] 
where $t_k$ is defined in Definition~\ref{def:devi-p} and
$\Delta$ as in Definition~\ref{def:big-delta}, and
$\sum_{k=1}^K {t}^2_k \leq \Delta$.
\end{lemma}

\begin{proof}
Given that $Y_i, \forall i$, comes from $\DB_2$ and $X_i, \forall i$, comes
from $\DB_1$, and by definition of $d_{i,k}(Y_i)$ and $d_{i,k}(X_i)$,
\[
d_{i, k}(Y_i) = 
\left\{ \begin{array}
{r@{\quad:\quad}l} 
\size{p^k_2} \size{f^k_2(\ul{h})}
& y^k_i = 0, \\
\size{1- p^k_2}\size{f^k_2(\ul{h})}
& y^k_i = 1,\\
\end{array} 
\right. \\
\]
and
\[
d_{i, k}(X_i) = 
\left\{ \begin{array}
{r@{\quad:\quad}l} 
\size{p^k_1} \size{f^k_2(\ul{h})}
& x^k_i = 0, \\
\size{1-p^k_1}\size{f^k_2(\ul{h})}
& x^k_i = 1. \\
\end{array} 
\right.
\]
Hence given that $\ul{h} \in \bar{\E}^L_2$,
Lemma~\ref{lemma:diff-bound-p}, and
$\size{\expct{f^k_2(\ul{h})}} = \size{L(p^k_2 -p^k_1)}$
as in Proposition~\ref{pro:expect-e2-p},
\begin{eqnarray}
\label{eq:difference-1-p}
d_{i, k}(Y_i)  \leq \size{f^k_2(\ul{h})} \leq 
\size{\expct{f^k_2(\ul{h})}} + \size{t_k \sqrt{L}}  = 
\size{L(p^k_2 - p^k_1)} + \size{t_k\sqrt{L}},
\end{eqnarray}
and similarly,
$d_{i, k}(X_i) \leq \size{L(p^k_2 - p^k_1)} + \size{t_k\sqrt{L}}$,
where $\sum_{k=1}^K {t}^2_k \leq \Delta$.
\end{proof}

We are now ready to obtain a bound for 
$\sigma^2 = 2 \sum_{i=1}^{N-L} \sum_{k=1}^K d^2_{i, k}$, where
$d^2_{i, k} \leq \size{L(p^k_2 - p^k_1)} + \size{\sqrt{L}(t_k)})^2$
applies to unswapped nodes $X_i, Y_i, \forall i = 1, \ldots, N-L,$ 
in bounding the differences they cause by revealing the random
bits on dimension $K$.

Given that $\sum_{k=1}^K {t}^2_k \leq \Delta$,
\begin{eqnarray*}
\sigma^2 =  \sum_{i, k} (d^2_{i, k}(X_i) + d^2_{i, k}(Y_i)) 
& = & 2\sum_{i, k} d^2_{i, k} \leq  2 \sum_{i =1}^{N-L} \sum_{k=1}^K 
\left(\size{L(p^k_2 - p^k_1)} + \size{\sqrt{L}(t_k)}\right)^2 \\
& \leq & 
2 (N-L) \sum_{k} 2(L(p^k_2 - p^k_1))^2 + 2(\sqrt{L}(t_k))^2 \\
& = & 
4 L^2 (N-L) \sum_k (p^k_2 - p^k_1)^2 + 4 L (N-L) \sum_k t^2_k \\
& \leq &
4 (N-L)L^2 (K \gamma) + 4(N-L)L \Delta, 
\end{eqnarray*}
where $\Delta = 8N \ln2 + 4K \ln2 (\log \log N + 1) 
+ 3\ln N/2$ as in Definition~\ref{def:big-delta}.
\end{proof}

\section{Putting Things Together}
\label{sec:product}
First, there are two lemmas regarding these events.
We want to emphasize the we exclude $\bar{\E}^N_1$ once for all $2N$
nodes, while excluding one $\bar{\E}^L_2$ from each balanced cut 
$(S, \bar{S})$, where $L$ denotes that the event $\bar{\E}^L_2$ is defined 
over the particular set of $2KL$ bits across $K$ dimensions on 
the $2L$ swapped nodes in $(S, \bar{S})$; we have 
${N \choose L}^2$ number of such events for each $L$, 
whose probabilities we sum up later using union bound.
\begin{lemma}
\label{lemma:bad-event-1}
Let $K \geq \frac{256 \ln {N}}{\gamma}$, 
in probability space $(\Omega, \F, {\bf Pr})$,
$\prob{\E^N_1} \leq \rho_1 = \frac{2N}{N^{32}}$.
\end{lemma}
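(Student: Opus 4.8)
The plan is to bound $\prob{\E^N_1}$ by a union bound over the $2N$ individual bad node events $\E(Z_i)$, and then to control each $\prob{\E(Z_i)}$ using the Hoeffding-type estimates already established. Recall from Definition~\ref{def:one-dim-p} that $\E(Z_i)$ is the event $\{\diff(Z_i) < \expct{\diff(Z_i)} - K\gamma/4\}$, an event living in the single-node probability space $(\Omega_i, \F_i, {\bf Pr}_i)$. Since $\F = \prod_i \F_i$ and $\E^N_1 = \bigcup_{i=1}^{2N} \E(Z_i)$, we get $\prob{\E^N_1} \le \sum_{i=1}^{2N} \prob{\E(Z_i)}$, so it suffices to show each term is at most $\frac{1}{N^{32}}$.

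For a fixed node $Z_i$, it comes either from $\DB_1$ or from $\DB_2$. If $Z_i \sim \DB_1$, then by Proposition~\ref{pro:diff-val} (or equivalently the computation in the proof of Proposition~\ref{pro:adv-max-intro}) $\diff(Z_i) = \sum_{k=1}^K z_i^k (p^k_1 - p^k_2)$ with $\expct{\diff(Z_i)} = \eta$ in the notation $\eta = \expect{\vec{x}\sim\vec{p}_1}{\diff(X)}$, and Lemma~\ref{lemma:mu-x-bound-p} gives exactly $\probb{Z_i}{\diff(Z_i) < \eta - K\gamma/4} < \tau$ whenever $K \ge \frac{8\ln(1/\tau)}{\gamma}$. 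If instead $Z_i \sim \DB_2$, the symmetric statement is Lemma~\ref{lemma:mu-y-bound}. In either case, setting $\tau = N^{-32}$ requires $K \ge \frac{8\ln N^{32}}{\gamma} = \frac{256 \ln N}{\gamma}$, which is precisely the hypothesis of the lemma. Hence each $\prob{\E(Z_i)} < N^{-32}$.

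Combining the two steps: $\prob{\E^N_1} \le \sum_{i=1}^{2N} \prob{\E(Z_i)} < 2N \cdot N^{-32} = \frac{2N}{N^{32}} = \rho_1$, which is the claimed bound. The proof is essentially a two-line assembly of prior results; there is no real obstacle, though one should be slightly careful that the mutual independence of the bad node events (noted just after Definition~\ref{def:one-dim-p}) is not even needed here — a plain union bound suffices — and that the statement indeed holds for either population of origin of each node, so the worst of Lemma~\ref{lemma:mu-x-bound-p} and Lemma~\ref{lemma:mu-y-bound} (which coincide) governs the bound uniformly over all $2N$ nodes.
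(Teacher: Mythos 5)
Your proof is correct and follows essentially the same route as the paper: both apply the Hoeffding estimates of Lemma~\ref{lemma:mu-x-bound-p} and Lemma~\ref{lemma:mu-y-bound} with $\tau = N^{-32}$, for which the hypothesis $K \ge \frac{256\ln N}{\gamma} = \frac{8\ln N^{32}}{\gamma}$ is exactly what is needed, and then aggregate over the $2N$ nodes to obtain $\frac{2N}{N^{32}}$. The only cosmetic difference is in the last step: you use a direct union bound on $\prob{\E^N_1}$, whereas the paper uses the mutual independence of the node events to write $\prob{\bar{\E}^N_1} = \prod_i (1-\probb{i}{\E(Z_i)}) \ge (1-N^{-32})^{2N} \ge 1 - 2N/N^{32}$ via Bernoulli's inequality; both yield the identical bound, and as you correctly observe independence is unnecessary for this particular lemma.
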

\begin{proof}
Apply Lemma~\ref{lemma:mu-x-bound-p} to each $\diff(Z)$ with $\tau = 1/N^{32}$; 
Given $K \geq \frac{256 \ln {N}}{\gamma}$, we have $\forall Z$,
\[
\probb{Z}{\E(Z)} \leq \inv{N^{32}}.
\]
We adopt the view of composing the product space
$(\Omega, \F, {\bf Pr})$ through distinct probability spaces 
$(\Omega_1, \F_1, {\bf Pr}_1)$, \ldots, 
$(\Omega_{2N}, \F_{2N}, {\bf Pr}_{2N})$ as in 
Definition~\ref{def:product-space}, where 
$(\Omega_i, \F_i, {\bf Pr}_i), \forall i$, is defined over all possible
outcomes for $K$ random bits for individual $Z_i$.
Therefore by definition, event $\bar{\E}^N_1$ is the same as the joint event
$\bar{\E}(Z_1) \cap \ldots \cap \bar{\E}(Z_{2N})$ in
$(\Omega, \F, {\bf Pr})$.
\begin{eqnarray}
\label{eq:equal-prob}
\prob{\bar{\E}^N_1} 
& = & \prob{\mbox{none of }\E(Z) \mbox{ happens, for all nodes }Z} \\
& = & 
\prob{\bar{\E}(Z_1) \cap \bar{\E}(Z_2) \cap \ldots \cap 
\bar{\E}(Z_{2N})} \\
& = & \probb{1}{\bar{\E}(Z_1)}\cdot \probb{2}{\bar{\E}(Z_2)} 
\cdot \ldots \cdot \probb{2N}{\bar{\E}(Z_{2N})} \\
& = & (1 - \probb{1}{\E(Z_1)}) \cdot (1-\probb{2}{\E(Z_2)}) 
\cdot \ldots \cdot (1- \probb{2N}{\E(Z_{2N})}) \nonumber\\
& \geq & (1 - \inv{N^{32}})^{2N} \geq 1 - \frac{2N}{N^{32}}.
\end{eqnarray}
\end{proof}
Before we prove Lemma~\ref{lemma:rho-2-p}, 
first let us obtain the expected value of $f^k_2(\ul{h}), \forall k$ as in 
Definition~\ref{def:f2-product}.
\begin{proposition}
\label{pro:expect-e2-p}
$\expct{f^k_2(\ul{h})} = 
\expct{\sum_{j=1}^L u^k_j - v^k_j} 
= L(p^k_1 -p^k_2).$
\end{proposition}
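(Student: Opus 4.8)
The plan is to compute $\expct{f^k_2(\ul{h})}$ directly from its definition using linearity of expectation. Recall that $f^k_2(\ul{h}) = \sum_{j=1}^L u^k_j - \sum_{j=1}^L v^k_j$, where $U_1, \ldots, U_L$ are the swapped nodes that originated in $P_1$ (hence distributed according to $\p_1$) and $V_1, \ldots, V_L$ are the swapped nodes that originated in $P_2$ (hence distributed according to $\p_2$). This is the crucial point: although $\ul{h}$ records bits on the ``swapped'' nodes with respect to the perfect partition $\T$, each such node retains the distribution of its true population of origin, so $\expct{u^k_j} = p^k_1$ and $\expct{v^k_j} = p^k_2$ for every $j$ and every coordinate $k$.

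First I would apply linearity of expectation to write
\begin{gather*}
\expct{f^k_2(\ul{h})} = \sum_{j=1}^L \expct{u^k_j} - \sum_{j=1}^L \expct{v^k_j}.
\end{gather*}
Then I would substitute $\expct{u^k_j} = p^k_1$ and $\expct{v^k_j} = p^k_2$, which follow from the statistical model (each bit of a point from $\DB_m$ is Bernoulli with parameter $p^k_m$), to obtain $\expct{f^k_2(\ul{h})} = L p^k_1 - L p^k_2 = L(p^k_1 - p^k_2)$, as claimed.

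There is essentially no obstacle here; the only thing worth stating carefully is the justification that the $U_j$ carry distribution $\p_1$ and the $V_j$ carry distribution $\p_2$ — i.e., that swapping a node between sides of the cut does not change the distribution from which its bit vector was drawn. This is immediate from the setup in Section~\ref{sec:approach} and Figure~\ref{fig:match-score}, where $U_j \in \bar S \cap P_1$ and $V_j \in S \cap P_2$. So the proof is a one-line calculation invoking linearity of expectation and the Bernoulli parameters of the model.

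\begin{proof}
By Definition~\ref{def:f2-product}, $f^k_2(\ul{h}) = \sum_{j=1}^L u^k_j - \sum_{j=1}^L v^k_j$. The swapped nodes $U_1, \ldots, U_L$ belong to $P_1$ and hence are drawn from $\DB_1$, so $\expct{u^k_j} = p^k_1$ for all $j$; likewise $V_1, \ldots, V_L$ belong to $P_2$ and are drawn from $\DB_2$, so $\expct{v^k_j} = p^k_2$ for all $j$. By linearity of expectation,
\begin{gather*}
\expct{f^k_2(\ul{h})} = \sum_{j=1}^L \expct{u^k_j} - \sum_{j=1}^L \expct{v^k_j} = L p^k_1 - L p^k_2 = L(p^k_1 - p^k_2).
\end{gather*}
\end{proof}
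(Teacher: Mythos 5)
Your proof is correct and is the only natural argument: linearity of expectation plus the observation that $U_j$ retains distribution $\DB_1$ and $V_j$ retains distribution $\DB_2$ regardless of which side of the cut they sit on. The paper states this proposition without proof, so your one-line calculation matches what is implicitly intended.
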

Next we examine the deviation 
for each random variable $f^k_2(\ul{h}), \forall k$.
\begin{lemma}
\label{lemma:unit-dev-p}
$\forall k$, for random variable $f^k_2(\ul{h})$ as in 
Definition~\ref{def:f2-product},
\begin{gather}
\prob{\size{f^k_2(\ul{h}) - \expct{f^k_2(\ul{h})}} \geq 
t_k \sqrt{L}} \leq 2e^{-{t_k}^2}.
\end{gather}
In addition, events corresponding to different dimensions are independent.
\end{lemma}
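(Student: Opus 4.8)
The plan is to recognize $f^k_2(\ul{h})$ as a signed sum of $2L$ independent bounded random variables and then invoke the Hoeffding estimate already recorded as Corollary~\ref{coro:hoe-diff}. By Definition~\ref{def:f2-product} we have $f^k_2(\ul{h}) = \sum_{j=1}^L u^k_j - \sum_{j=1}^L v^k_j$, and in the product model the bits $u^k_1,\ldots,u^k_L,v^k_1,\ldots,v^k_L$ at dimension $k$ of the $2L$ swapped nodes are mutually independent, with $u^k_j$ a Bernoulli$(p^k_1)$ variable and $v^k_j$ a Bernoulli$(p^k_2)$ variable; in particular each summand takes values in $[0,1]$.

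First I would apply Corollary~\ref{coro:hoe-diff} with $Y_j = u^k_j$, $Z_j = v^k_j$, $m = n = L$, and interval $[a,b] = [0,1]$, so that $\bar Y - \bar Z = f^k_2(\ul{h})/L$ and $\expct{\bar Y}-\expct{\bar Z} = \expct{f^k_2(\ul{h})}/L$. A deviation of $t_k\sqrt{L}$ on the scale of the unnormalized sum corresponds to a deviation of $t_k/\sqrt{L}$ on the scale of the averages, so taking the corollary's parameter equal to $t_k/\sqrt{L}$ gives, for every threshold $t_k>0$,
\[
\prob{f^k_2(\ul{h}) - \expct{f^k_2(\ul{h})} \geq t_k\sqrt{L}} \leq e^{-2(t_k/\sqrt{L})^2 / \left((L^{-1}+L^{-1})\cdot 1^2\right)} = e^{-t_k^2}.
\]
Applying the same bound after exchanging the roles of the $Y_j$ and $Z_j$ (equivalently, applying it to $-f^k_2(\ul{h})$) controls the lower tail by the same quantity, and a union bound over the two one-sided events yields the claimed $\prob{\size{f^k_2(\ul{h}) - \expct{f^k_2(\ul{h})}} \geq t_k\sqrt{L}} \leq 2e^{-t_k^2}$.

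For the independence statement, note that $f^k_2(\ul{h})$ is a function of the bits at dimension $k$ alone, namely of $u^k_1,\ldots,u^k_L,v^k_1,\ldots,v^k_L$; for distinct dimensions these are disjoint blocks of bits, and the measure ${\bf Pr}$ on $(\Omega, \F, {\bf Pr})$ is a product over all $2NK$ bits, so the $\sigma$-fields generated by the $f^k_2(\ul{h})$ for different $k$ are mutually independent, and hence so are the events $\{\size{f^k_2(\ul{h}) - \expct{f^k_2(\ul{h})}} \geq t_k\sqrt{L}\}$.

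There is no substantive obstacle here; the only point requiring care is bookkeeping of normalizations — Corollary~\ref{coro:hoe-diff} is stated for sample means while $f^k_2(\ul{h})$ is an unnormalized sum — and it is precisely the $\sqrt{L}$ (rather than $L$) scaling built into Definition~\ref{def:devi-p} that makes the exponent collapse to the clean $-t_k^2$ that is later summed against $\Delta$ in the analysis of $\bar{\E}^L_2$.
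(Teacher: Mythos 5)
Your proof is correct and follows essentially the same route as the paper: rewrite $f^k_2(\ul{h})$ as $L$ times a difference of sample means, apply the Hoeffding bound of Corollary~\ref{coro:hoe-diff} with $m=n=L$ and threshold $t_k/\sqrt{L}$, and combine the two one-sided tails. You are slightly more explicit than the paper about the union bound giving the factor of $2$ (Corollary~\ref{coro:hoe-diff} as stated is one-sided) and about why the product structure of ${\bf Pr}$ yields independence across dimensions, but these are just filled-in details of the same argument.
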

\begin{proof}
Let us define random variables  $\bar{U}^k$, $\bar{V}^k$ such that
\begin{gather}
f^k_2(\ul{h})  = L(\bar{U}^k - \bar{V}^k),
\end{gather}
where
$\bar{U}^k =  \sum_{j=1}^L u^k_j/L$ and $\bar{V}^k = \sum_{j =1}^L v^k_j/L$.
Thus by Proposition~\ref{pro:expect-e2-p},
\begin{gather*}
\expct{\bar{U}^k} - \expct{\bar{V}^k}
=  \inv{L}\expct{f^k_2(\ul{h})} = p^k_1 -p^k_2.
\end{gather*}
Now applying Corollary~\ref{coro:hoe-diff} of Theorem~\ref{thm:hoe-bound}
to bound probability of deviations on both sides of the expected 
differences, let $t = t_k \sqrt{L}/L$,
we have 
\begin{eqnarray*}
\prob{\size{f^k_2(\ul{h}) - \expct{f^k_2(\ul{h})}} \geq 
t_k \sqrt{L}} & = & \prob{\size{\bar{U}^k -\bar{V}^k - 
(\expct{\bar{U}^k} - \expct{\bar{V}^k})} 
\geq t_k \sqrt{L}/L} \\
& \leq & 2e^{\frac{ -2 (t_k \sqrt{L}/L)^2}{(2/L)}} \leq 2e^{-{t^2_k}}.
\end{eqnarray*}
\end{proof}
The following two lemmas shows that $\{\ul{h} \in \E^L_2\}$ remains 
exponentially small given $\bar{\E}^{N}_1$ or not.
A variant of the following lemma has been used in the full proof 
for~\citet[Theorem 3.1]{CHRZ07}. It is included in 
Section~\ref{sec:bad-events-append} for completeness.
\begin{lemma}{\textnormal{\citep{CHRZ07}}}
\label{lemma:rho-2-p}
In probability space $(\Omega, \F, {\bf Pr})$,
for each balanced cut $(S, \bar{S})$, \ \\
$\prob{\ul{h} \in \E^L_2} \leq \rho_2,$
where $\rho_2 = O(\inv{2^{2N}\poly(N)})$ and $N \geq 2$.
\end{lemma}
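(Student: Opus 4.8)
The goal of Lemma~\ref{lemma:rho-2-p} is to bound, for a fixed balanced cut $(S,\bar{S})$ with $L$ swapped nodes on each side, the probability that the deviation vector $(t_1,\ldots,t_K)$ recorded in the $2KL$-history $\ul{h}$ satisfies $\sum_{k=1}^K t_k^2 \geq \Delta$, where $\Delta = 8N\ln 2 + 4K\ln 2(\log\log N + 1) + 3\ln N/2$. The plan is to treat the $K$ coordinates $t_k$ as (essentially) independent sub-Gaussian random variables, via Lemma~\ref{lemma:unit-dev-p}, and then apply a Chernoff/moment-generating-function argument to the sum $\sum_k t_k^2$ across all $K$ dimensions simultaneously. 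The value of $\Delta$ is engineered precisely so that, after the union bound over all ${N\choose L}^2 \leq 2^{2N}$ choices of swapped-node sets for this $L$ (and an extra $\poly(N)$ slack), the failure probability is $O(1/(2^{2N}\poly(N)))$; the $8N\ln 2$ term absorbs the $2^{2N}$, the $4K\ln 2(\log\log N+1)$ term is what the MGF argument costs per dimension, and the $3\ln N/2$ supplies the $\poly(N)$.

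Concretely, the key steps I would carry out are: first, from Lemma~\ref{lemma:unit-dev-p} each $t_k$ satisfies $\prob{|t_k| \geq s} \leq 2e^{-s^2}$, and events across different $k$ are independent; this makes $t_k^2$ a random variable whose tail is that of (a constant times) a squared Gaussian, so $\expct{e^{\lambda t_k^2}}$ is finite and bounded by a constant — e.g. something like $1/\sqrt{1-2\lambda}$ up to constants — for $\lambda$ bounded away from $1/2$. Second, pick a fixed $\lambda$ (say $\lambda = 1/4$) and write $\prob{\sum_k t_k^2 \geq \Delta} \leq e^{-\lambda\Delta}\prod_{k=1}^K \expct{e^{\lambda t_k^2}} \leq e^{-\lambda\Delta} c^K$ for an absolute constant $c$. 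Third, substitute the definition of $\Delta$: $e^{-\lambda\Delta} c^K = \exp(-\lambda(8N\ln2 + 4K\ln2(\log\log N+1) + 3\ln N/2) + K\ln c)$. For the right choice of $\lambda$, the $4K\ln2(\log\log N+1)$ term dominates $K\ln c$ (in fact it grows with $N$), so the $K$-dependence is killed, leaving at most $\exp(-\Theta(N) - \Theta(\ln N)) = O(1/(2^{cN}\poly(N)))$ which gives $\rho_2 = O(1/(2^{2N}\poly(N)))$ as claimed.

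The main obstacle — and the reason the lemma needs care rather than a one-line invocation of a standard concentration inequality — is that $t_k$ is defined as the \emph{exact} deviation $f_2^k(\ul{h}) - \expct{f_2^k(\ul{h})} = t_k\sqrt{L}$ of a bounded integer-valued random variable, not a genuine Gaussian, so $t_k$ lives in a bounded range and its square does not have a clean exponential moment; one must use the sub-Gaussian tail bound from Lemma~\ref{lemma:unit-dev-p} to \emph{derive} a usable bound on $\expct{e^{\lambda t_k^2}}$, being careful about the regime of $\lambda$ and about the fact that $t_k$ can be as large as $\Theta(\sqrt{L})$ when $f_2^k$ hits its extreme values. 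A secondary technical point is verifying that the $\log\log N$ factor in $\Delta$ is exactly what is needed to absorb the per-dimension constant $\ln c$ when $K$ itself may be as large as $\poly(N)/\gamma$; since $K$ can be polynomially large in $N$ while $\Delta$ carries only a $4K\ln2(\log\log N+1)$ term, one needs $\log\log N + 1 \geq \ln c / (4\ln 2)$ type slack, which holds for $N$ bounded below by an absolute constant (hence the hypothesis $N \geq 2$). I would also need to confirm independence across dimensions holds in the conditioned space — but this is immediate since the $2KL$ bits on the swapped nodes split into $K$ independent blocks by dimension, and $f_2^k$ depends only on block $k$. Finally, I would note that the statement is for one fixed cut; the union bound over all cuts is deferred to Section~\ref{sec:product}, so here I only need the per-cut bound with the factor $1/2^{2N}$ built into $\rho_2$.
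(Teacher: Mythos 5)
Your proof is correct, and it takes a genuinely different route from the paper's. The paper's argument in the appendix is a dyadic discretization and union bound: it rounds each $|t_k|$ down to the nearest power of two $\tilde t_k$, observes that since $|f_2^k(\ul{h})| \le L \le N/2$ each $|t_k|$ falls into one of at most $\log N$ dyadic ranges, so the vector $(\tilde t_1,\ldots,\tilde t_K)$ lands in one of at most $(\log N)^K$ blocks $\Ball(\beta_1,\ldots,\beta_K)$; for any fixed block with $\sum_k\beta_k^2 \ge \Delta/4$, independence across dimensions and the per-dimension tail of Lemma~\ref{lemma:unit-dev-p} give probability at most $\prod_k 2e^{-\Omega(\beta_k^2)} \le 2^K e^{-\Omega(\Delta)}$, and a union over the $(\log N)^K$ blocks closes the argument. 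Your Chernoff/MGF route bypasses the discretization entirely: from $\prob{|t_k|\ge s}\le 2e^{-s^2}$ one gets $\expct{e^{\lambda t_k^2}} \le 1 + \tfrac{2\lambda}{1-\lambda} =: c$ for $\lambda\in(0,1)$, and Markov plus independence across dimensions yields $\prob{\sum_k t_k^2 \ge \Delta} \le e^{-\lambda\Delta}c^K$; with, say, $\lambda = 1/4$ the $8N\ln 2$ term in $\Delta$ produces the $2^{-2N}$, the $3\ln N/2$ term gives the $\poly(N)$, and $\ln 2(\log\log N + 1) \ge \ln 2 > \ln(5/3) = \ln c$ for $N\ge 2$ makes the $K$-dependent terms net nonpositive. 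What your route buys is a tighter accounting: you only need the $K$-linear term in $\Delta$ to dominate a dimension-independent constant $\ln c$, whereas the paper's discretization genuinely spends the $\log\log N$ slack to swallow the $(2\log N)^K$ factor from the block count, so your argument would in fact succeed with a smaller $\Delta$. What the paper's route buys in exchange is avoiding the MGF integral --- it is purely combinatorial. Also note that your worry about the boundedness of $t_k$ is not actually an obstacle: the integration-against-the-tail computation of $\expct{e^{\lambda t_k^2}}$ works directly from the sub-Gaussian tail, and boundedness of $t_k$ only helps. Both proofs correctly leave the union bound over cuts to Section~\ref{sec:product}.
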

\begin{lemma}
\label{lemma:e2-e1}
$\prob{\ul{h} \in \E^{L}_2 | \bar{\E}^{N}_1} =
\prob{\ul{h} \in \E^L_2 | \ul{h} \in \bar{\E}^L_1} \leq 
\frac{\rho_2}{1-2L/N^{32}}.$
\end{lemma}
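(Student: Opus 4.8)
\textbf{Proof proposal for Lemma~\ref{lemma:e2-e1}.}

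The plan is to reduce the conditional probability $\prob{\ul{h} \in \E^{L}_2 \mid \bar{\E}^{N}_1}$ to the conditional probability $\prob{\ul{h} \in \E^L_2 \mid \ul{h} \in \bar{\E}^L_1}$, and then relate the latter to the unconditional bound $\rho_2$ from Lemma~\ref{lemma:rho-2-p}. The first equality is the conceptual heart: by Proposition~\ref{pro:sub-nodes}, conditioning on the global good event $\bar{\E}^N_1$ forces the $2KL$-history $\ul{h}$ on the swapped nodes of $(S,\bar{S})$ to lie in $\bar{\E}^L_1(S,\bar{S})$, and conversely $\bar{\E}^L_1$ is an event depending only on the $2L$ swapped nodes while $\E^L_2$ also depends only on those same $2KL$ bits (Definition~\ref{def:big-delta} and Definition~\ref{def:f2-product}). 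Since the $2N$ node probability spaces are mutually independent (the product space of Definition~\ref{def:product-space}), the bits on the $2(N-L)$ unswapped nodes are irrelevant to both events, so conditioning on $\bar{\E}^N_1$ has the same effect on the distribution of $\ul{h}$ as conditioning on $\bar{\E}^L_1$ alone. This gives the stated equality.

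Next I would bound $\prob{\ul{h}\in\E^L_2 \mid \ul{h}\in\bar{\E}^L_1}$. Write it as $\prob{\ul{h}\in\E^L_2 \cap \bar{\E}^L_1}/\prob{\ul{h}\in\bar{\E}^L_1}$. The numerator is at most $\prob{\ul{h}\in\E^L_2} \leq \rho_2$ by Lemma~\ref{lemma:rho-2-p}. For the denominator, $\E^L_1(S,\bar{S})$ is the union of $2L$ bad node events $\E(U_j), \E(V_j)$, each of probability at most $1/N^{32}$ by Lemma~\ref{lemma:mu-x-bound-p} and Lemma~\ref{lemma:mu-y-bound} with $\tau = 1/N^{32}$ (given $K \geq 256\ln N/\gamma$, as already assumed), so by the union bound $\prob{\ul{h}\in\E^L_1} \leq 2L/N^{32}$, hence $\prob{\ul{h}\in\bar{\E}^L_1} \geq 1 - 2L/N^{32}$. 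Combining, $\prob{\ul{h}\in\E^L_2 \mid \ul{h}\in\bar{\E}^L_1} \leq \rho_2/(1 - 2L/N^{32})$, which is the claimed bound.

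The main obstacle, such as it is, is the first equality rather than the inequality: one must argue carefully that conditioning on the \emph{entire} good event $\bar{\E}^N_1$ — which constrains all $2N$ nodes — does not skew the conditional law of the swapped-node history relative to conditioning only on the local constraint $\bar{\E}^L_1(S,\bar{S})$. This follows from factoring $\bar{\E}^N_1 = \bar{\E}^L_1(S,\bar{S}) \cap \bar{\E}^{N-L}_1(S,\bar{S})$ into independent events (Definition~\ref{def:ell}, Definition~\ref{def:ell2}), noting that $\E^L_2$ is $\sigma(\ul{h})$-measurable and hence independent of $\bar{\E}^{N-L}_1(S,\bar{S})$, so the extra factor cancels in the conditional probability. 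Once this independence structure is made explicit, the rest is the routine union-bound estimate above.
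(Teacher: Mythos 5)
Your proposal is correct and follows essentially the same route as the paper: the inequality is obtained by writing the conditional probability as a ratio and lower-bounding $\prob{\ul{h}\in\bar{\E}^L_1}$ by $1-2L/N^{32}$ (the paper does this via the law of total probability; you do it via $\prob{A\mid B}\leq\prob{A}/\prob{B}$, which is the same elementary manipulation). You are, if anything, slightly more careful than the paper's own proof of this lemma, which skips the justification of the first equality $\prob{\ul{h}\in\E^L_2\mid\bar{\E}^N_1}=\prob{\ul{h}\in\E^L_2\mid\ul{h}\in\bar{\E}^L_1}$ and only invokes the independence $\bar{\E}^N_1=\bar{\E}^L_1\cap\bar{\E}^{N-L}_1$ later, in Lemma~\ref{lemma:e2-e3-sum}; you supply that argument explicitly here.
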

\begin{proof}
Given the following equations:
\begin{eqnarray}
\nonumber
\prob{\ul{h} \in \E^L_2}
&  = & 
\prob{\ul{h} \in \E^L_2 | \ul{h} \in {\E}^L_1} \cdot 
\prob{\ul{h} \in \E^L_1} + 
\prob{\ul{h} \in \E^L_2 | \ul{h} \in \bar{\E}^L_1}\cdot \prob{\ul{h} \in \bar{\E}^L_1}, \\ 
\prob{\ul{h} \in \bar{\E}^L_1} 
& = & (1-\inv{N^{32}})^{2L} \geq {1-2L/N^{32}},
\end{eqnarray}
we have:
\begin{eqnarray}
\prob{\ul{h} \in \E^L_2 | \ul{h} \in \bar{\E}^L_1} 
& = & \frac{\prob{\ul{h} \in \E^L_2} -
\prob{\ul{h} \in \E^L_2 | \ul{h} \in {\E}^L_1} \cdot 
\prob{\ul{h} \in \E^L_1}}
{\prob{\ul{h} \in \bar{\E}^L_1}} \\
& \leq & \frac{\prob{\ul{h} \in \E^L_2}}{\prob{\ul{h} \in \bar{\E}^L_1}} \leq
\frac{\rho_2}{1-2L/N^{32}}.
\end{eqnarray}
\end{proof}
Lemma~\ref{lemma:mod-diff} shows that
$\probb{\ul{h}}{\diff(\T, (S,\overline{S}), L) \leq 0}$
remains small regardless whether $\bar{f}$ stays in the 
confined subspace $\bar{\E}^{N}_1$ or is entirely at random as in
$(\Omega_{\ul{h}}, \Sigma(\Omega_{\ul{h}}), {\bf Pr}_{\ul{h}})$.
\begin{lemma}
\label{lemma:mod-diff}
$\prob{\diff(\T, (S,\overline{S}), L) \leq 0 |
(\ul{h}, \bar{f}) \in \bar{\E}^{N}_1 \cap
\ul{h} \in \bar{\E}^{L}_2}
\leq \frac{\rho^L_3}{1-{2(N-L)}/{N^{32}}}.$
\end{lemma}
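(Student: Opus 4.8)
The plan is to relate the conditional probability under $(\ul{h},\bar{f}) \in \bar{\E}^N_1$ to the cleaner probability computed in Lemma~\ref{lemma:rho-3-cal-p}, where $\bar{f}$ is taken entirely at random in the expanded space $(\Omega_{\ul{h}}, \Sigma(\Omega_{\ul{h}}), {\bf Pr}_{\ul{h}})$. Recall that the event $\{(\ul{h},\bar{f}) \in \bar{\E}^N_1\} \cap \{\ul{h} \in \bar{\E}^L_2\}$ decomposes, once $\ul{h}$ is fixed in $\bar{\E}^L_1 \cap \bar{\E}^L_2$, into the extra requirement that the future bits $\bar{f}$ land in $\bar{\E}^{N-L}_1(S,\bar{S})$, i.e.\ that none of the $2(N-L)$ unswapped nodes is a bad node. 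So conditioning on $(\ul{h},\bar{f}) \in \bar{\E}^N_1$ given $\ul{h} \in \bar{\E}^L_1 \cap \bar{\E}^L_2$ is exactly conditioning on $\bar{f} \in \bar{\E}^{N-L}_1$ inside $(\Omega_{\ul{h}}, \Sigma(\Omega_{\ul{h}}), {\bf Pr}_{\ul{h}})$.

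First I would write, for fixed $\ul{h} \in \bar{\E}^L_1 \cap \bar{\E}^L_2$ and abbreviating $A = \{\diff(\T,(S,\overline{S}),L) \leq 0\}$ and $B = \{\bar{f} \in \bar{\E}^{N-L}_1\}$, the elementary inequality
\begin{gather*}
\probb{\ul{h}}{A \mid B} = \frac{\probb{\ul{h}}{A \cap B}}{\probb{\ul{h}}{B}} \leq \frac{\probb{\ul{h}}{A}}{\probb{\ul{h}}{B}}.
\end{gather*}
Then I would bound the numerator by $\rho^L_3$ using Lemma~\ref{lemma:rho-3-cal-p}, which applies precisely because $\ul{h} \in \bar{\E}^L_1 \cap \bar{\E}^L_2$ and the $2(N-L)K$ future bits are at random. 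For the denominator, since the $2(N-L)$ unswapped nodes are mutually independent and each bad node event $\E(Z)$ has probability at most $1/N^{32}$ (Lemma~\ref{lemma:mu-x-bound-p} with $\tau = 1/N^{32}$, valid because $K \geq 256\ln N/\gamma$), I get $\probb{\ul{h}}{B} = (1 - 1/N^{32})^{2(N-L)} \geq 1 - 2(N-L)/N^{32}$, exactly as in the computation in Lemma~\ref{lemma:bad-event-1}. Combining the two bounds yields the claimed inequality
\begin{gather*}
\prob{A \mid (\ul{h},\bar{f}) \in \bar{\E}^N_1 \cap \ul{h} \in \bar{\E}^L_2} \leq \frac{\rho^L_3}{1 - 2(N-L)/N^{32}}.
\end{gather*}

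The only mildly delicate point — and the step I expect to need the most care — is the bookkeeping that identifies the conditioning event $\{(\ul{h},\bar{f}) \in \bar{\E}^N_1\}$ with the product structure $\{\ul{h} \in \bar{\E}^L_1\} \cap \{\bar{f} \in \bar{\E}^{N-L}_1\}$, so that conditioning on it inside the expanded space $(\Omega_{\ul{h}}, \Sigma(\Omega_{\ul{h}}), {\bf Pr}_{\ul{h}})$ is legitimate and the independence used for the denominator is justified; this is exactly the equivalence of notations recorded just before the Remark in Section~\ref{sec:cond-exp}, and the mutual independence of bad node events noted after Definition~\ref{def:one-dim-p}. Everything else is the one-line conditional-probability manipulation plus invoking Lemma~\ref{lemma:rho-3-cal-p} and the geometric-product lower bound already established in Lemma~\ref{lemma:bad-event-1}.
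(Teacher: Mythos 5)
Your proposal is correct and takes essentially the same route as the paper: the paper arrives at the identical bound $\probb{\ul{h}}{e_0 \mid \bar{f}\in\bar{\E}^{N-L}_1}\le \probb{\ul{h}}{e_0}/\probb{\ul{h}}{\bar{f}\in\bar{\E}^{N-L}_1}$ by rearranging a total-probability expansion, whereas you invoke $\probb{\ul{h}}{A\cap B}\le\probb{\ul{h}}{A}$ directly, and both then apply Lemma~\ref{lemma:rho-3-cal-p} for the numerator and the independence-based product lower bound $(1-1/N^{32})^{2(N-L)}\ge 1-2(N-L)/N^{32}$ for the denominator. One cosmetic slip: you write $\probb{\ul{h}}{B}=(1-1/N^{32})^{2(N-L)}$, but since each bad-node probability is only bounded above by $1/N^{32}$ this should be ``$\ge$'', which is all the argument needs.
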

\begin{proof}
We use $e_0$ to replace $\{\diff(\T, (S,\overline{S}), L) \leq 0\}$ 
and bound the following:
$$\prob{e_0 | (\ul{h} \in \bar{\E}^{L}_1 \cap \bar{\E}^{L}_2) 
\cap \bar{f} \in \bar{\E}^{N-L}_1},$$ 
which is the same as the term in the statement of the lemma,
\begin{eqnarray*}
\lefteqn{
\prob{e_0 | \ul{h} \in \bar{\E}^{L}_2 \cap \bar{\E}^{L}_1, 
\bar{f} \mbox{ at random}} = } \\
& & \prob{e_0| (\ul{h} \in \bar{\E}^{L}_2 \cap \bar{\E}^{L}_1) 
\cap \bar{f} \in \bar{\E}^{N-L}_1}\cdot
\prob{\bar{f} \in \bar{\E}^{N-L}_1 
| {\ul{h} \in \bar{\E}^{L}_2 \cap \bar{\E}^{L}_1}} + \\
& & 
\prob{e_0 | (\ul{h} \in \bar{\E}^{L}_2 
\cap \bar{\E}^{L}_1) \cap \bar{f} \in {\E}^{N-L}_1}\cdot
\prob{\bar{f} \in {\E}^{N-L}_1 | 
\ul{h} \in \bar{\E}^{L}_2 \cap \bar{\E}^{L}_1}. 
\end{eqnarray*}
By independence between node events: 
\begin{eqnarray}
\prob{\bar{f} \in \bar{\E}^{N-L}_1 
| {\ul{h} \in \bar{\E}^{L}_2 \cap \bar{\E}^{L}_1}} 
& = & \prob{\bar{f} \in \bar{\E}^{N-L}_1}, \\
\prob{\bar{f} \in {\E}^{N-L}_1 | 
\ul{h} \in \bar{\E}^{L}_2 \cap \bar{\E}^{L}_1}
& = & 
\prob{\bar{f} \in {\E}^{N-L}_1}.
\end{eqnarray}
Given that events ${\E}^{L}_2, {\E}^{L}_1$  defined on $2L$ swapped nodes 
are independent of event ${\E}^{N-L}_1$ on $2(N-L)$ unswapped nodes, 
we have the following, where we omit writing out the 
$\bar{f} \mbox{ at random}$ condition,
\begin{eqnarray*}
& & 
\prob{e_0 | (\ul{h} \in \bar{\E}^{L}_2 \cap \bar{\E}^{L}_1) 
\cap \bar{f} \in \bar{\E}^{N-L}_1} \nonumber\\
& = & 
{\frac{\prob{e_0| \ul{h} \in \bar{\E}^{L}_2 \cap \bar{\E}^{L}_1} 
- \prob{e_0 | (\ul{h} \in \bar{\E}^{L}_2 \cap \bar{\E}^{L}_1) 
\cap \bar{f} \in {\E}^{N-L}_1}\cdot
\prob{\bar{f} \in {\E}^{N-L}_1}}
{\prob{\bar{f} \in \bar{\E}^{N-L}_1}}}  \nonumber\\
& \leq & 
\frac{\prob{\diff(\T, (S,\overline{S}), L) \leq 0 | 
\ul{h} \in \bar{\E}^{L}_2 \cap \bar{\E}^{L}_1}}
{\prob{\bar{f} \in \bar{\E}^{N-L}_1}} \leq
\frac{\rho^L_3}{(1-\inv{N^{32}})^{2(N-L)}} \leq 
\frac{\rho^L_3}{(1-\frac{2(N-L)}{N^{32}})},
\end{eqnarray*}
where 
$\prob{\bar{f} \in \bar{\E}^{N-L}_1} 
 \geq 1-\frac{2(N-L)}{N^{32}}$ following a proof similar to
that of  Lemma~\ref{lemma:bad-event-1}.
\end{proof}

\begin{lemma}
\label{lemma:e2-e3-sum}
$\prob{\diff(\T, (S,\overline{S}), L) \leq 0 | \bar{\E}^{N}_1}
\leq \frac{\rho_2}{1-2L/N^{32}} + \frac{\rho^L_3}{1-2(N-L)/N^{32}}.$
\end{lemma}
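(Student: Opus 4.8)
The plan is to condition on whether the swapped-node history $\ul{h}$ falls into the bad deviation event $\E^L_2$, and then combine the two previous lemmas by a union-bound-style decomposition. Concretely, I would start from the law of total probability over the partition $\{\ul{h}\in\E^L_2\}$ versus $\{\ul{h}\in\bar{\E}^L_2\}$, all conditioned on the good subspace $\bar{\E}^N_1$ (equivalently, on $\ul{h}\in\bar{\E}^L_1$ together with $\bar{f}\in\bar{\E}^{N-L}_1$). This gives
\[
\prob{\diff(\T,(S,\overline{S}),L)\le 0\mid \bar{\E}^N_1}
= \prob{e_0\mid \ul{h}\in\E^L_2,\ \bar{\E}^N_1}\cdot\prob{\ul{h}\in\E^L_2\mid\bar{\E}^N_1}
 + \prob{e_0\mid \ul{h}\in\bar{\E}^L_2,\ \bar{\E}^N_1}\cdot\prob{\ul{h}\in\bar{\E}^L_2\mid\bar{\E}^N_1},
\]
where $e_0$ abbreviates $\{\diff(\T,(S,\overline{S}),L)\le 0\}$ as in the proof of Lemma~\ref{lemma:mod-diff}.

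Next I would bound each summand trivially: on the first term, drop the first factor $\prob{e_0\mid\cdots}\le 1$ and invoke Lemma~\ref{lemma:e2-e1} to get $\prob{\ul{h}\in\E^L_2\mid\bar{\E}^N_1}\le \rho_2/(1-2L/N^{32})$; on the second term, drop the probability weight $\prob{\ul{h}\in\bar{\E}^L_2\mid\bar{\E}^N_1}\le 1$ and apply Lemma~\ref{lemma:mod-diff}, which says exactly that $\prob{e_0\mid (\ul{h},\bar{f})\in\bar{\E}^N_1\cap\ul{h}\in\bar{\E}^L_2}\le \rho^L_3/(1-2(N-L)/N^{32})$. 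Adding the two bounds yields the claimed inequality. A minor bookkeeping point to check is that conditioning on $\bar{\E}^N_1$ really does decompose as $\ul{h}\in\bar{\E}^L_1$ on the swapped nodes and $\bar{f}\in\bar{\E}^{N-L}_1$ on the unswapped nodes, but this was already established in the definitions preceding Proposition~\ref{pro:sub-nodes} and reused in Lemma~\ref{lemma:mod-diff}, so I can cite it directly rather than re-derive it.

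The only genuine subtlety — and the one place I would be careful — is making sure the conditioning events in Lemma~\ref{lemma:mod-diff} and Lemma~\ref{lemma:e2-e1} line up with the term produced by the total-probability split: Lemma~\ref{lemma:mod-diff} is stated with the hypothesis $(\ul{h},\bar{f})\in\bar{\E}^N_1\cap\ul{h}\in\bar{\E}^L_2$, which is precisely the second conditioning event above, and Lemma~\ref{lemma:e2-e1} is stated unconditionally on $\bar{f}$ but uses the independence of node events (so conditioning further on $\bar{f}\in\bar{\E}^{N-L}_1$ does not change $\prob{\ul{h}\in\E^L_2\mid\ul{h}\in\bar{\E}^L_1}$). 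Once those two matchings are verified, everything else is a one-line addition, so I do not expect any real obstacle here — this lemma is essentially the glue step that packages Lemmas~\ref{lemma:e2-e1} and~\ref{lemma:mod-diff} into the single bad-score-probability bound per balanced cut that will then be summed over all $\binom{N}{L}^2$ cuts (and over $L$) in Section~\ref{sec:product}.
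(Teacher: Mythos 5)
Your proposal is correct and matches the paper's proof essentially verbatim: the paper also splits by total probability over $\{\ul{h}\in\E^L_2\}$ versus $\{\ul{h}\in\bar{\E}^L_2\}$ given $\bar{\E}^N_1$, bounds the first term by dropping $\prob{e_0\mid\cdots}\le 1$ and applying Lemma~\ref{lemma:e2-e1}, and bounds the second term by dropping $\prob{\ul{h}\in\bar{\E}^L_2\mid\bar{\E}^N_1}\le 1$ and applying Lemma~\ref{lemma:mod-diff}. The conditioning-alignment subtlety you flag is exactly the point the paper handles via ``independence between node events.''
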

\begin{proof}
By assumption of independence between node events,
\begin{eqnarray*}
\prob{\ul{h} \in \E^{L}_2 | \bar{\E}^{N}_1}
& =& \prob{\ul{h} \in \E^{L}_2 | 
\ul{h} \in \bar{\E}^{L}_1 \cap \bar{f} \in \bar{\E}^{N-L}_1}
 =
\prob{\ul{h} \in \E^{L}_2 | \ul{h} \in  \bar{\E}^{L}_1} \leq  
\frac{\rho_2}{1-2L/N^{32}}.
\end{eqnarray*}
When $\ul{h} \in \E^{L}_2$, we give up bounding 
$\diff(\T, (S,\overline{S}), L) \leq 0$; hence by Lemma~\ref{lemma:e2-e1}
and~\ref{lemma:mod-diff},
\begin{eqnarray*} 
\lefteqn{\prob{\diff(\T, (S,\overline{S}), L) \leq 0 | \bar{\E}^{N}_1} 
\leq
\prob{\ul{h} \in \E^{L}_2 | \bar{\E}^{N}_1} +  } \\
& & 
\prob{\diff(\T, (S,\overline{S}), L) \leq 0 |
(\ul{h}, \bar{f}) \in \bar{\E}^{N}_1 \cap \ul{h} \in \bar{\E}^{L}_2}
\cdot \prob{\ul{h} \in \bar{\E}^{L}_2 | \bar{\E}^{N}_1}  \\ 
& \leq &  
\frac{\rho_2}{1-2L/N^{32}} + \frac{\rho^L_3}{1-2(N-L)/N^{32}},
\end{eqnarray*}
\end{proof}

Finally, we prove Theorem~\ref{thm:intro-product-max-cut}.

\begin{proofof}{Theorem~\ref{thm:intro-product-max-cut}}
\begin{eqnarray*}
\lefteqn{\prob{\exists (S, \bar{S}) \mbox{ s.t. } \score(S, \bar{S}) >  \score{\T}}
\leq } \\
& & \prob{\E^{N}_1} + \sum_{(S, \bar{S})} 
\prob{\diff(\T, (S,\overline{S}), L) \leq 0 | \bar{\E}^{N}_1} \\
& \leq & \frac{32}{N^{32}} + 
\frac{2^{2N}\rho_2}{1-2L/N^{32}}+ \sum_{L =1}^{N/2} {N \choose L}{N \choose L}
\frac{\rho^L_3}{1-2(N-L)/N^{32}} =  O\left(\inv{\poly(N)}\right)
\end{eqnarray*}
\end{proofof}

\section*{Acknowledgments}
{This material is based on research sponsored in part by the 
Army Research Office, under agreement number DAAD19--02--1--0389, 
and NSF grant CNF--0435382.
The author thanks Avrim Blum for many helpful discussions and Alon Orlitsky 
for asking the question: why is not one bit enough?}
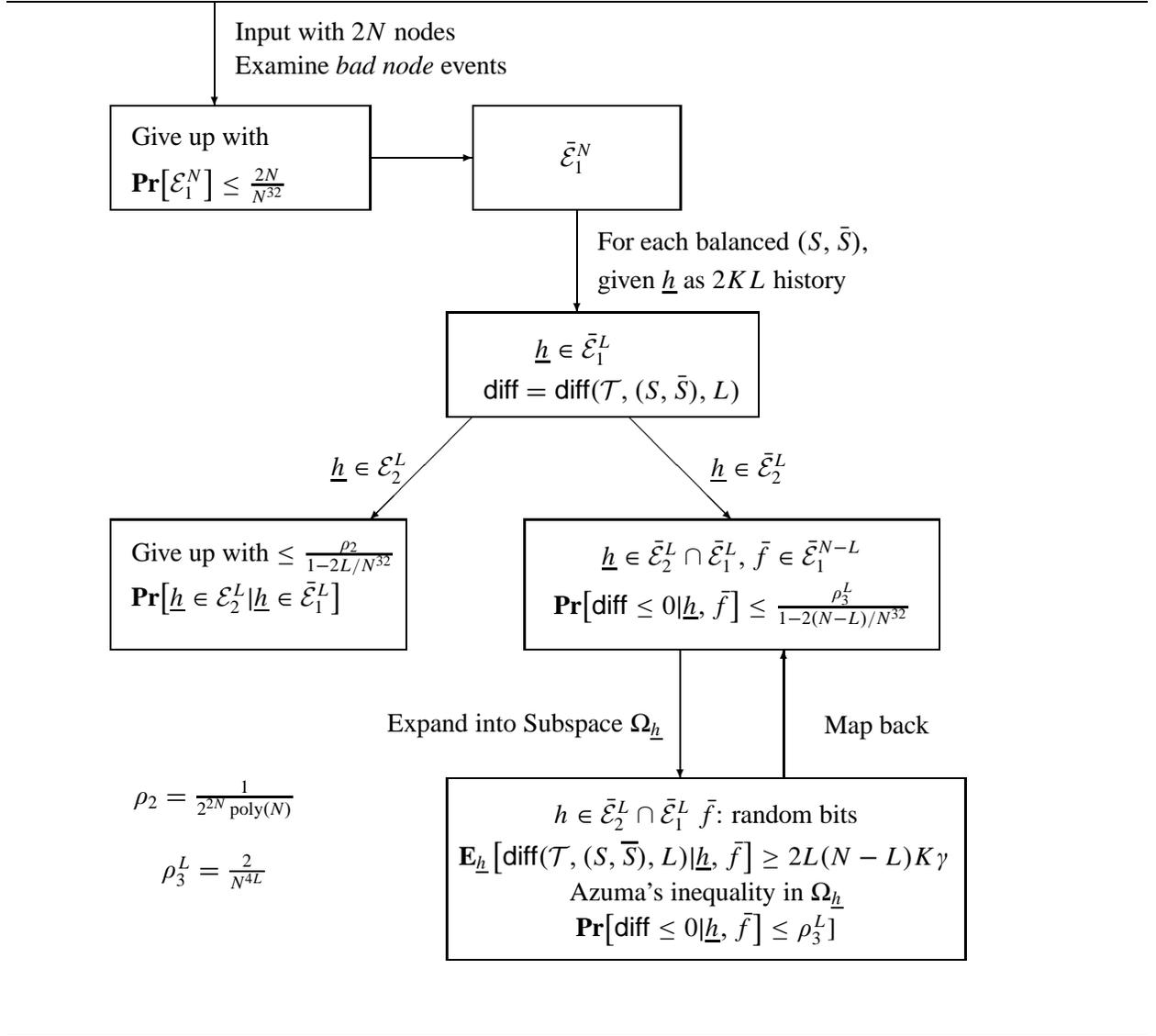
\begin{figure}[ht]
\setlength{\unitlength}{1.5cm}
\hrule
\begin{picture}(4,10)
\put(2, 10){\vector(0,-1){1}}
\put(2.2, 9.2){\makebox(2,1)[l]{Input with $2N$ nodes}}
\put(2.2, 8.9){\makebox(2,1)[l]{Examine {\em bad node} events}}

\put(1, 8){\framebox(2.5,1){}}
\put(1.2,7.8){\makebox(,)[tl]{Give up with}}
\put(1.2,7.4){\makebox(,)[tl]{$\prob{\E^N_1} \leq \frac{2N}{N^{32}}$}}
\put(3.5, 8.5){\vector(1,0){1}}

\put(4.5, 8){\framebox(2,1){$\bar{\E}^N_1$}}
\put(5.5, 8){\vector(0,-1){1}}
\put(5.7, 7.2){\makebox(2,1)[l]{For each balanced $(S, \bar{S})$,}}
\put(5.7, 6.8){\makebox(2,1)[l]{given $\ul{h}$ as $2KL$ history}}

\put(4.25, 6){\framebox(3,1){}}
\put(5.1, 5.8){\makebox(2,1)[lt]{$\ul{h} \in \bar{\E}^L_1$}}
\put(4.6, 5.4){\makebox(2,1)[lt]{$\diff = \diff(\T, (S,\bar{S}), L)$}}

\put(4.5, 6){\vector(-1,-1){1}}
\put(3.0, 5){\makebox(1,1){$\ul{h} \in \E^L_2$}}

\put(6, 6){\vector(1,-1){1}}
\put(6.8, 5){\makebox(1,1)[l]{$\ul{h} \in \bar{\E}^L_2$}}

\put(1, 3.75){\framebox(2.85, 1.25){}}
\put(1.2,3.8){\makebox(,)[tl]{Give up with $\leq \frac{\rho_2}{1-2L/N^{32}}$}}
\put(1.2,3.4){\makebox(,)[tl]
{$\prob{\ul{h} \in \E^L_2 | \ul{h} \in \bar{\E}^L_1}$}}

\put(5, 3.75){\framebox(4,1.25){}}
\put(6, 3.8){\makebox(2,1)[t]{$\ul{h} \in \bar{\E}^L_2 \cap \bar{\E}^L_1,
\bar{f} \in \bar{\E}^{N-L}_1$}}
\put(6, 3.4){\makebox(2,1)[t]{$\prob{\diff \leq 0 | \ul{h}, \bar{f}} \leq 
\frac{\rho^L_3}{1-2(N-L)/{N^{32}}}$}}

\put(6.5, 3.75){\vector(0,-1){1.25}}
\put(3, 2.5){\makebox(4,1){Expand into Subspace $\Omega_{\ul{h}}$}}

\put(7.5, 2.5){\vector(0,1){1.25}}
\put(6.4, 2.5){\makebox(4,1){Map back}}

\put(1, 1.8){\makebox(2,1){$\rho_2 = \inv{2^{2N}\poly(N)}$}}
\put(1, 1.1){\makebox(2,1){$\rho^L_3 = \frac{2}{N^{4L}}$}}

\put(4.25, 0.75){\framebox(5,1.75){}}

\put(5.75, 1.3){\makebox(2,1)[t]{$h \in \bar{\E}^L_2 \cap \bar{\E}^L_1$
$\bar{f}$: random bits}}
\put(5.75, 0.9){\makebox(2,1)[t]{$\expect{\ul{h}}{\diff(\T, (S,\overline{S}), L) 
| \ul{h}, \bar{f}} \geq 2 L(N-L)K \gamma$}}
\put(5.75, 0.5){\makebox(2,1)[t]{Azuma's inequality in $\Omega_{\ul{h}}$}}
\put(5.75, 0.2){\makebox(2,1)[t]
{$\prob{\diff \leq 0|\ul{h},\bar{f}} \leq \rho^L_3]$}}
\end{picture}
\hrule
\caption{\sc Events Relationship in Section~\ref{sec:product}}
\label{fig:events}
\end{figure}

\bibliography{final}

\begin{thebibliography}{21}
\providecommand{\natexlab}[1]{#1}
\providecommand{\url}[1]{\texttt{#1}}
\expandafter\ifx\csname urlstyle\endcsname\relax
  \providecommand{\doi}[1]{doi: #1}\else
  \providecommand{\doi}{doi: \begingroup \urlstyle{rm}\Url}\fi

\bibitem[Achlioptas and McSherry(2005)]{AM05}
D.~Achlioptas and F.~McSherry.
\newblock On spectral learning of mixtures of distributions.
\newblock In \emph{Proceedings of the 18th Annual COLT}, pages 458--469, 2005.
\newblock (Version in http://www.cs.ucsc.edu/~optas/papers/).

\bibitem[Arora and Kannan(2001)]{AK01}
S.~Arora and R.~Kannan.
\newblock Learning mixtures of arbitrary gaussians.
\newblock In \emph{Proceedings of 33rd ACM Symposium on Theory of Computing},
  pages 247--257, 2001.

\bibitem[Blum et~al.(2007)Blum, Coja-Oghlan, Frieze, and Zhou]{BCFZ07}
A.~Blum, A.~Coja-Oghlan, A.~Frieze, and S.~Zhou.
\newblock Separating populations with wide data: a spectral analysis.
\newblock In \emph{Proceedings of the 18th International Symposium on
  Algorithms and Computation}, Sendai, Japan, December 2007.
\newblock (ISAAC 2007).

\bibitem[Chaudhuri et~al.(2007)Chaudhuri, Halperin, Rao, and Zhou]{CHRZ07}
K.~Chaudhuri, E.~Halperin, S.~Rao, and S.~Zhou.
\newblock A rigorous analysis of population stratification with limited data.
\newblock In \emph{Proceedings of the 18th ACM-SIAM SODA}, 2007.

\bibitem[Coja-Oghlan(2006)]{Coj06}
A.~Coja-Oghlan.
\newblock An adaptive spectral heuristic for partitioning random graphs.
\newblock In \emph{Proceedings of the 33rd ICALP}, 2006.

\bibitem[Cryan(1999)]{Cry99}
M.~Cryan.
\newblock \emph{Learning and approximation Algorithms for Problems motivated by
  evolutionary trees}.
\newblock PhD thesis, University of Warwick, 1999.

\bibitem[Cryan et~al.(2002)Cryan, Goldberg, and Goldberg]{CGG02}
M.~Cryan, L.~Goldberg, and P.~Goldberg.
\newblock Evolutionary trees can be learned in polynomial time in the two state
  general markov model.
\newblock \emph{SIAM J. of Computing}, 31\penalty0 (2):\penalty0 375--397,
  2002.

\bibitem[Dasgupta et~al.(2005)Dasgupta, Hopcroft, Kleinberg, and
  Sandler]{DHKS05}
A.~Dasgupta, J.~Hopcroft, J.~Kleinberg, and M.~Sandler.
\newblock On learning mixtures of heavy-tailed distributions.
\newblock In \emph{Proceedings of the 46th IEEE FOCS}, pages 491--500, 2005.

\bibitem[Dasgupta(1999)]{Das99}
S.~Dasgupta.
\newblock Learning mixtures of gaussians.
\newblock In \emph{Proceedings of the 40th IEEE Symposium on Foundations of
  Computer S cience}, pages 634--644, 1999.

\bibitem[Dasgupta and Schulman(2000)]{DS00}
S.~Dasgupta and L.~J. Schulman.
\newblock A two-round variant of em for gaussian mixtures.
\newblock In \emph{Proceedings of the 16th Conference on Uncertainty in
  Artificial Intelligence (UAI)}, 2000.

\bibitem[Feldman et~al.(2005)Feldman, O'Donnell, and Servedio]{FOS05}
J.~Feldman, R.~O'Donnell, and R.~Servedio.
\newblock Learning mixtures of product distributions over discrete domains.
\newblock In \emph{Proceedings of the 46th IEEE FOCS}, 2005.

\bibitem[Feldman et~al.(2006)Feldman, O'Donnell, and Servedio]{FOS06}
J.~Feldman, R.~O'Donnell, and R.~Servedio.
\newblock {P}{A}{C} learning mixtures of {G}aussians with no separation
  assumption.
\newblock In \emph{Proceedings of the 19th Annual COLT}, 2006.

\bibitem[Freund and Mansour(1999)]{FM99}
Y.~Freund and Y.~Mansour.
\newblock Estimating a mixture of two product distributions.
\newblock In \emph{Proceedings of the 12th Annual COLT}, pages 183--192, 1999.

\bibitem[Hoeffding(1963)]{Hoe63}
W.~Hoeffding.
\newblock Probability inequalities for sums of bounded random variables.
\newblock \emph{Journal of the American Statistical Association}, 58\penalty0
  (301):\penalty0 13--30, 1963.

\bibitem[Kannan et~al.(2005)Kannan, Salmasian, and Vempala]{KSV05}
R.~Kannan, H.~Salmasian, and S.~Vempala.
\newblock The spectral method for general mixture models.
\newblock In \emph{Proc. of the 18th Annual COLT}, 2005.

\bibitem[Kearns et~al.(1994)Kearns, Mansour, Ron, Rubinfeld, Schapir, and
  Sellie]{KMRRSS94}
M.~Kearns, Y.~Mansour, D.~Ron, R.~Rubinfeld, R.~Schapir, and L.~Sellie.
\newblock On the learnability of discrete distributions.
\newblock In \emph{Proceedings of the 26th ACM STOC}, pages 273--282, 1994.

\bibitem[McSherry(2001)]{Mcs01}
Frank McSherry.
\newblock Spectral partitioning of random graphs.
\newblock In \emph{Proceedings of the 42nd IEEE Symposium on Foundations of
  Computer Science}, pages 529--537, 2001.

\bibitem[Mossel and Roch(2005)]{MR05}
E.~Mossel and S.~Roch.
\newblock Learning nonsinglar phylogenies and hidden markov models.
\newblock In \emph{Proceedings of the 37th ACM STOC}, 2005.

\bibitem[Pritchard et~al.(2000)Pritchard, Stephens, and Donnelly]{PSD00}
J.~K. Pritchard, M.~Stephens, and P.~Donnelly.
\newblock Inference of population structure using multilocus genotype data.
\newblock \emph{Genetics}, 155:\penalty0 954--959, June 2000.

\bibitem[Vempala and Wang(2002)]{VW02}
V.~Vempala and G.~Wang.
\newblock A spectral algorithm of learning mixtures of distributions.
\newblock In \emph{Proceedings of the 43rd IEEE FOCS}, pages 113--123, 2002.

\bibitem[Zhou(2006)]{Zhou06}
S.~Zhou.
\newblock \emph{Routing, Disjoint Paths, and Classification}.
\newblock PhD thesis, Carnegie Mellon University, Pittsburgh, PA, 2006.
\newblock CMU Technical Report, CMU-PDL-06-109.

\end{thebibliography}
\newpage
\appendix
\section{Proof of Lemma~\ref{lemma:rho-2-p}}
\label{sec:bad-events-append}
The following proof have been used in the full proof 
in~\citet[Theorem 3.1]{CHRZ07}.
\ \\
\begin{proofof}{Lemma~\ref{lemma:rho-2-p}}
To facilitate our proof, we obtain a set of nonnegative 
numbers $(\tilde{t}_1, \ldots, \tilde{t}_k)$ as follows; 
$\forall k$, to obtain $\tilde{t}_k$, we round $|t_k|$ 
down to nearest nonnegative number $|\tilde{t_k}|$ that is power of two.
It is easy to verify that 
$\forall k, t_k \in 
\left[\frac{-2L - \expct{f^k_2(\ul{h})}}{\sqrt{L}}, 
\frac{2L -\expct{f^k_2(\ul{h})}}{\sqrt{L}}\right]$ by
Proposition~\ref{pro:expect-e2-p}. Thus we have 
$\tilde{t}_k \leq \size{t_k} 
\leq \size{2\sqrt{L}} + 
\size{\frac{\expct{f^k_2(\ul{h})}}{\sqrt{L}}}$.
Let us divide the entire range of $\size{t_k}$ into intervals using
power-of-$2$ non-negative integers as dividing points; 
Let $r_k, \forall k$ represent the number of such intervals: we have
$\forall k$, so long as $N \geq 2$,
\begin{eqnarray}
\label{eq:n-range-p}
r_k = \log (\size{\sqrt{L}} + \size{L(p^k_1 -p^k_2)/\sqrt{L}}) 
\leq  \log 2 \sqrt{L} \leq \log 2\sqrt{N/2} \leq \log N.
\end{eqnarray}
Thus we have at most ${(\log N)}^K$ blocks in the $K$-dimensional space
such that each block along each dimension is a subinterval of 
$\left[0, \size{2\sqrt{L}} + \size{\frac{\expct{f^k_2(\ul{h})}}{\sqrt{L}}}
\right]$.
Let $\Ball(\beta_1, \ldot, \beta_k)$ represent a block in
the $K$-dimensional space, where $\beta_1, \ldot, \beta_k$ are 
nonnegative power-of-2 integers and every point in 
$\Ball(\beta_1, \ldot, \beta_k)$ has its value fixed in interval
$[\beta_k, 2\beta_k)$ along dimension $k, \forall k$; 
hence $({\beta_1}, \ldot, {\beta_k})$ is the point in the $K$-dimensional 
space with the smallest coordinate in every dimension in 
$\Ball(\beta_1, \ldot, \beta_k)$.

A set of values $(t_1, \ldot, t_k)$ as in Definition~\ref{def:devi-p} 
is mapped into one of these blocks uniquely as follows.
We say a point $(t_1, \ldots, t_k)$ {\em maps to} 
$\Ball(\beta_1, \ldot, \beta_k)$, if
$\forall k, 2{\beta_k} > \size{t_k} \geq {\beta_k}$,
i.e., $(\tilde{t_1}, \ldot, \tilde{t_k}) = (\beta_1, \ldot, \beta_k)$.
We first bound the following event using Lemma~\ref{lemma:one-block-p}.
Let us fix one block $\Ball(\beta_1, \ldot, \beta_k)$ for 
a fixed set of values $\beta_1, \ldot, \beta_k$ such that
$\sum_{k=1}^K \beta^2_k \geq \Delta/4$.
\begin{lemma}
\label{lemma:one-block-p}
Let $\Delta/4 = 2N \ln2 + K (\ln2) (\log \log N + 1) + (3\ln N)/8$ 
as $\Delta$ is defined in Definition~\ref{def:big-delta}.
$$\prob{\ul{h} \mbox { maps to a fixed } \Ball(\beta_1, \ldot, \beta_k)
\mbox { s.t.} 
\sum_{k=1}^K \tilde{t}^2_k \geq \Delta/4}
\leq \inv{2^{2N}\cdot {(\log N)}^K \cdot N^{3/2}}.$$
\end{lemma}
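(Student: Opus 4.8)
The plan is to break the block event into $K$ one-dimensional deviation events, bound each with the per-coordinate tail estimate of Lemma~\ref{lemma:unit-dev-p}, use the independence of those events across the $K$ dimensions, and finally substitute the value of $\Delta$ from Definition~\ref{def:big-delta}. For the reduction, recall from the paragraph preceding the lemma that ``$\ul{h}$ maps to $\Ball(\beta_1,\ldot,\beta_k)$'' means $\beta_k\le\size{t_k}<2\beta_k$ for every $k$, where $t_k$ is the deviation of Definition~\ref{def:devi-p}; in particular it forces $\size{f^k_2(\ul{h})-\expct{f^k_2(\ul{h})}}\ge\beta_k\sqrt{L}$ for every $k$. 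Thus the event whose probability we must bound is contained in the intersection $\bigcap_{k=1}^{K}\bigl\{\size{f^k_2(\ul{h})-\expct{f^k_2(\ul{h})}}\ge\beta_k\sqrt{L}\bigr\}$, and the side condition ``$\sum_k\tilde t_k^2\ge\Delta/4$'' on this event is simply the statement ``$\sum_k\beta_k^2\ge\Delta/4$'' about the fixed block.

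Next, Lemma~\ref{lemma:unit-dev-p} bounds the $k$-th of these events by $2e^{-\beta_k^2}$ and, crucially, asserts that events attached to distinct dimensions are independent: the $k$-th one is determined only by the $k$-th bits of the $2L$ swapped nodes recorded in $\ul{h}$, and these bit-blocks are pairwise disjoint. Multiplying over $k$ and discarding the coordinates with $\beta_k=0$, whose factors are at most $1$, gives
$$\prob{\ul{h}\mbox{ maps to }\Ball(\beta_1,\ldot,\beta_k)}\le\prod_{k:\,\beta_k\ge 1}2e^{-\beta_k^2}=2^{m}\exp\Bigl(-\sum_{k=1}^{K}\beta_k^2\Bigr),$$
where $m=\size{\{k:\beta_k\ge 1\}}\le K$. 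Using $\sum_k\beta_k^2\ge\Delta/4$ and $m\le K$, this is at most $2^{K}e^{-\Delta/4}$.

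It then remains to substitute $\Delta/4=2N\ln 2+K(\ln2)(\log\log N+1)+(3\ln N)/8$ and collect factors: the $2N\ln 2$ term contributes $2^{-2N}$, the $K(\ln2)\log\log N$ term contributes $(\log N)^{-K}$, the ``$+1$'' inside $(\log\log N+1)$ contributes $e^{-K\ln 2}=2^{-K}$, which exactly cancels the $2^{K}$ produced by the two-sidedness of the per-coordinate bound, and the residual $\ln N$ term contributes the polynomial factor in the denominator; this yields the bound claimed in the lemma. The main obstacle — essentially the only place where anything beyond bookkeeping is needed — is exactly this last step: verifying that each additive term in Definition~\ref{def:big-delta} has been set large enough to pay for the factor it is responsible for, in particular recognizing that the seemingly arbitrary ``$+1$'' is there precisely to absorb the $2^{m}\le 2^{K}$ loss from the two-sided Hoeffding estimate. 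One should also keep in mind that $m$ may be strictly smaller than $K$, so the estimate is tight only when every $\beta_k\ge 1$; in all other cases there is additional slack, which is harmless here.
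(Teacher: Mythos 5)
Your proposal takes exactly the route the paper does: reduce the block event to the intersection $\bigcap_k\{\size{f^k_2(\ul{h})-\expct{f^k_2(\ul{h})}}\ge\beta_k\sqrt{L}\}$, apply the one-dimensional tail bound of Lemma~\ref{lemma:unit-dev-p}, use independence across dimensions to multiply, and then read off the three factors from the three additive terms of $\Delta/4$. Your explanation of the role of the ``$+1$'' (cancelling the $2^K$ loss from the two-sided Hoeffding bound) is also the intended bookkeeping, and your application of Lemma~\ref{lemma:unit-dev-p} as $2e^{-\beta_k^2}$ is the clean form; the paper writes $2e^{-\beta_k^2/4}$, which is not what that lemma gives and then misreads $e^{-\Delta/16}$, so your version is actually more careful than the printed one.

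One thing to flag, though you cannot be blamed for it since it is inherited from the paper's Definition~\ref{def:big-delta}: if you carry the substitution out exactly, the last term $(3\ln N)/8$ of $\Delta/4$ contributes $e^{-(3\ln N)/8}=N^{-3/8}$, not $N^{-3/2}$, so the product is $2^{-2N}(\log N)^{-K}N^{-3/8}$, a weaker polynomial factor than the lemma advertises. Your phrasing ``the residual $\ln N$ term contributes the polynomial factor in the denominator; this yields the bound claimed'' quietly passes over this mismatch. To actually obtain $N^{-3/2}$ one would need the last summand of $\Delta$ in Definition~\ref{def:big-delta} to be $6\ln N$ rather than $3\ln N/2$; alternatively one can keep $\Delta$ as defined and state the conclusion with $N^{3/8}$, which still suffices for the downstream union bound since any positive power of $N$ is enough. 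It is worth making one of these two corrections explicit rather than asserting that the claimed form follows.
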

\begin{proof}
Let $t_1\sqrt{L}, \ldot, t_k\sqrt{L}$ be the deviation
that we observe in $\ul{h}$ for random variables 
$f^1_2(\ul{h}), f^2_2(\ul{h}), \ldots, f^k_2(\ul{h})$ as in 
Definition~\ref{def:devi-p}.
If coordinates $(\tilde{t_1}, \ldot, \tilde{t_k})$ of 
$\ul{h}$ maps to $(\beta_1, \ldot, \beta_k)$, we know that 
$\forall k, 2 {\beta_k} \geq \size{t_k} \geq {\beta_k}$ 
given the definition of $\Ball(\beta_1, \ldot, \beta_k)$.
In addition, by Lemma~\ref{lemma:unit-dev-p}, we know that
\begin{gather}
\prob{\size{f^k_2(\ul{h}) - \expct{f^k_2(\ul{h})}} \geq 
{\beta_k} \sqrt{L}} \leq 2e^{-{\beta_k}^2/4},
\end{gather}
and events corresponding to different dimensions are independent; Thus we have
\begin{eqnarray}
& & 
\prob{\ul{h} \mbox { maps to a particular } \Ball(\beta_1, \ldot, \beta_k)
\mbox { s.t.} 
\sum_{k=1}^K \beta^2_k \geq \Delta/4} \nonumber\\
& = &   
\prod_{k=1}^K 
\prob{2{\beta_k} \sqrt{L}
\geq 
\left(\size{f^k_2(\ul{h}) - \expct{f^k_2(\ul{h})}} 
= \size{t_k \sqrt{L}}\right) 
\geq {\beta_k} \sqrt{L}
\mbox { s.t.} 
\sum_{k=1}^K \beta^2_k \geq \Delta/4} \nonumber\\
& \leq &
\prod_{k=1}^K 
\prob{\size{f^k_2(\ul{h}) - \expct{f^k_2(\ul{h})}} \geq 
{\beta_k} \sqrt{L}
\mbox { s.t.} 
\sum_{k=1}^K \beta^2_k \geq \Delta/4} \leq 
\prod_{k =1}^K 2e^{-\beta^2_k/4} \leq  
2^K e^{-\frac{\sum_{k =1}^k \beta^2_k}{4}} \\
& \leq & 
2^K e^{-\Delta/16} \leq
2^K \exp{-(2N \ln2 + K \ln2 (\log \log N + 1) + 3\ln N/2)}\\
& = & \frac{2^K} {2^{2N}\cdot {(2\log N)}^K \cdot N^{3/2}} 
=  \inv{2^{2N}\cdot {(\log N)}^K \cdot N^{3/2}}.
\end{eqnarray}
\end{proof}
Given that $t^2_k \leq 4 \tilde{t}^2_k, \forall k$, 
we know that $\sum_{k=1}^K {t}^2_k \geq \Delta$
implies that 
$\sum_{k=1}^K \tilde{t}^2_k 
\geq \inv{4} \sum_{k=1}^K {t}^2_k \geq \Delta/4.$
Thus we have 
\begin{eqnarray}
\prob{\sum_{k=1}^K {t}^2_k \geq \Delta}  & \leq &  
\prob{\sum_{k=1}^K \tilde{t}^2_k \geq \Delta/4} \\
& = & \prob{\ul{h} \mbox{ maps to some } \Ball(\beta_1, \ldot, \beta_k) 
                 \mbox{ s.t. }
\sum_{k=1}^K \beta^2_k \geq \Delta/4}.
\end{eqnarray}
This allows us to upper bound $\prob{\E^L_2}$ with events regarding
$\sum_{k=1}^K \tilde{t}^2_k$ as follows:
\begin{eqnarray}
\prob{\E^L_2} 
& = & 
\prob{\bigcap_{k=1}^K 
(f^k_2(\ul{h}) - \expct{f^k_2(\ul{h})} = t_k \sqrt{L}) 
\mbox{ s.t. }\sum_{k=1}^K {t}^2_k \geq \Delta} \\
& \leq & 
\prob{\ul{h} \mbox{ maps to some } \Ball(\beta_1, \ldot, \beta_k) 
\mbox{ s.t. }
\sum_{k=1}^K \beta^2_k \geq \Delta/4} \nonumber  \\
& \leq & \frac{(\log N)^K}{2^{2N}\cdot {(\log N)}^K \cdot N^{3/2}} 
\leq \inv{2^{2N}\poly(N)}.
\end{eqnarray}
Hence the probability that 
the $2KL$ unordered pairs induce simultaneously large deviation 
for random variables $f^1_2(\ul{h}), \ldots, f^k_2(\ul{h})$, as 
in Definition~\ref{def:big-delta}, is at most 
 $\rho_2 = O(\inv{2^{2N}\poly(N)})$.
\end{proofof}

\subsection{Actual Proof of Lemma~\ref{lemma:rho-3-cal-p}}
\label{sec:append-case-study}
Note that the constant in the lemma has not been optimized. 

\begin{proofof}{Lemma~\ref{lemma:rho-3-cal-p}}
We take
prep$t = \expect{\ul{h}}{\diff(\T, (S,\bar{S}), L)} \geq KL(N-L)\gamma/2$ 
and plug in Theorem~\ref{thm:diff-dev-p}, we have the following:
\begin{eqnarray}
\label{eq:t-bound-p}
& & \prob{\diff(\T, (S,\overline{S}), L) \leq 0
| \ul{h} \in \bar{\E}^{L}_2 \cap \bar{\E}^{L}_1} \nonumber \\
& = &
\probb{\ul{h}}{\expect{\ul{h}}{\diff(\T, (S,\bar{S}), L)|\ul{H}^{2KN}}
- \expect{\ul{h}}{\diff(\T, (S,\bar{S}), L)} \leq -
 \expect{\ul{h}}{\diff(\T, (S,\bar{S}), L)}} \nonumber \\
& \leq & 
2e^{-t^2/2\sigma^2} \leq 2 e^{-(KL(N-L)\gamma/2)^2/2\sigma^2},
\end{eqnarray}
where $\sigma^2 \leq 4 (N-L)L^2 (K \gamma) + 4(N-L)L \Delta$ as defined in 
Theorem~\ref{thm:diff-dev-p}.

We will prove that for all $N \geq 4$, so long as
\begin{enumerate}
\item
\label{eq:k-lower} 
$K \geq \Omega(\frac{\ln N}{\gamma})$,
\item
\label{eq:kn-lower}
$KN \geq \Omega(\frac{\ln N \log \log N}{\gamma^2})$,
\end{enumerate}
we will have
\begin{gather}
\label{eq:prob-bound}
2e^{-t^2/2\sigma^2} \leq 2e^{-(2KL(N-L)\gamma)^2/2\sigma^2} 
\leq \frac{2}{N^{4L}}.
\end{gather}

In what follows, we show that given different values of $N$,
by choosing slightly different constants in~(\ref{eq:k-lower}) 
and~(\ref{eq:kn-lower-2}),~(\ref{eq:prob-bound}) is always
satisfied. 
 
{\noindent 
{\bf Case 1: $4 \leq N \leq \log \log N/2\gamma$.}}

In this case, we require that
$KN \geq \frac{c_1 \ln N \log \log N} {\gamma^2}$, 
where $c_1 \geq 1488$, which immediately implies the 
following inequalities given that
$N \leq \log \log N/2\gamma$:
\begin{enumerate}
\item
$K \geq \frac{2c_1 \ln N}{\gamma}$,
\item
$N \leq \frac{K \log \log N}{4c_1 \ln N}$,
\item
$\log \log N \geq 4 \gamma, \forall N \geq 4$, i.e., 
we consider cases where $\gamma$ is small enough,
\item
$\ln N \geq 2\ln 2$, $\forall N \geq 4$.
\end{enumerate}

We first derive the following term that appears in $\sigma^2$ as
specified in Theorem~\ref{thm:diff-dev-p},
\begin{eqnarray*}
16L(N-L) (32N \ln2 + 6\ln N)
& \leq &
512 \ln 2 (N-L) L N + 96 (N-L) L \ln N \\
& \leq & 
\frac{128 \ln 2 K(N-L)L \log \log N}{c_1 \ln N} + 
\frac{48 \gamma K(N-L)L}{c_1} \\
& \leq & 
\frac{64 K(N-L)L \log \log N}{c_1} + 
\frac{12 K (N-L)L \log \log N}{c_1} \\
& \leq & \frac{76 K(N-L)L \log \log N}{c_1} 
\leq K (N-L)L \log \log N, 
\end{eqnarray*}
given that $c_1 \geq 1488$.
Next, given that $L\gamma \leq N\gamma/2 \leq \frac{\log \log N}{4}$,
we have
\begin{eqnarray*}
\sigma^2 &\leq &
64 K (N-L)L (L \gamma) + 355 K (N-L)L \log \log N + 
K L(N-L) \log \log N \\
 & \leq &
16 K L (N-L) \log \log N + 356 KL(N-L) \log \log N \\
& \leq & 372 KL (N-L) \log \log N.
\end{eqnarray*}
Finally, given that $KN \geq \frac{1488 \log \log N \ln N}{\gamma^2}$,
we have:
\begin{eqnarray*}
2e^{-t^2/2\sigma^2} 
\leq e^{-(2KL(N-L)\gamma)^2/2\sigma^2}  \leq  
2e^{-\frac{4KL(N-L)\gamma^2}{2 \times 284 \log \log N}} 
\leq 2e^{-\frac{L KN \gamma^2}{284 \log \log N}}  \leq
\frac{2}{N^{4L}}.
\end{eqnarray*}

Thus we also have
$K \geq \frac{2 c_1 \ln N}{\gamma} = \frac{2976 \ln N}{\gamma}$ 
given that $N \leq \log \log N/2 \gamma$.

{\noindent 
{\bf Case 2: 
$\frac{\log \log N}{2\gamma} < 
N \leq \frac{K \log \log N}{20}$.}}

In this case, $K$ and $N$ are close and 
we require the following,
\begin{enumerate}
\item
\label{eq:k-lower-2}
$K \geq \frac{c_2 \ln N}{\gamma}$, where $c_2 = 512$,
\item
\label{eq:kn-lower-2}
$KN \geq \frac{c_0 \ln N \log \log N}{\gamma^2}$, where $c_0 = 2000$.
\end{enumerate}

Note that constants $c_0, c_2$ above are not optimized;
given any $N$, an optimal combination of $c_0, c_2$ will 
result in the lowest possible $K$ given that
$K \geq \max\{\frac{c_0 \ln N \log \log N}{N \gamma^2}, 
\frac{c_2 \ln N}{\gamma}\}$.

Given that $N \leq \frac{K \log \log N}{20}$, we have:
\begin{eqnarray*}
16L(N-L) (32N \ln2 + 6\ln N)\leq 
\frac{400}{20} K (N-L) L \log \log N \leq 20 K (N-L)L \log \log N,
\end{eqnarray*}
and hence
\begin{eqnarray*}
\sigma^2 & \leq &
64 K (N-L)L^2\gamma + 355 K (N-L)L \log \log N + 
20 K (N-L)L \log \log N \\
& \leq &
64 (N-L)L^2 K \gamma + 375 KL(N-L)\log \log N.
\end{eqnarray*}

The following inequalities are due to~(\ref{eq:k-lower-2})
and~(\ref{eq:kn-lower-2}) respectively,
\begin{eqnarray}
\frac{(2KL(N-L)\gamma)^2}{2*64 K (N-L) L^2 \gamma} 
& \geq & 16 L \ln N, \\
\frac{(2KL(N-L)\gamma)^2}{2*375 KL(N-L)\log\log N} 
& \geq & \frac{16}{3} L \ln N,
\end{eqnarray}

and thus
\begin{eqnarray}
2\sigma^2 \leq
\frac{(2KL(N-L)\gamma)^2}{16 L \ln N} + 
\frac{(2KL(N-L)\gamma)^2}{16L \ln N/3} \leq
\frac{(2KL(N-L)\gamma)^2}{4L \ln N/3},
\end{eqnarray}
and 
$2e^{-t^2/2\sigma^2} \leq 
2e^{\frac{-(2KL(N-L)\gamma)^2}{2\sigma^2}}
\leq 2e^{- 4L \ln N} \leq 2/N^{4L}$.

{\noindent {\bf Case 3: $N \geq \frac{K \log \log N}{20} \geq 16$}}.

Here we require that 
$K = \frac{c_3 \ln N}{\gamma}$ for some $c_3$ to be determined.
Thus we have $K N \geq \frac{c_3^2 \ln^2 N \log \log N}{80 \gamma^2}$,
which satisfies the constraint of the form
$KN \geq \Omega(\frac{\ln N \log \log N}{\gamma^2})$ as in 
other cases.

Given that $N \geq 4$, we have that $\ln N \geq 2\ln 2$ and hence
\begin{eqnarray*}
16L(N-L) (32N \ln2 + 6\ln N)
& \leq &
128 (N-L) L N \ln N + 6 N L (N-L) \ln N \\
& \leq & 134 (N-L)L N \ln N.
\end{eqnarray*}

Given that $K \log \log N \leq 20 N$, we have:
\begin{eqnarray*}
\sigma^2 & \leq &
64 K (N-L)L^2 \gamma + 512 \ln 2 * (K \log \log N) (N-L) L + 
134(N-L)L N \ln N \\
& \leq &
64 (N-L)L^2 (K \gamma) + 512\ln2 * 20 N (N-L)L + 
102(N-L)L N \ln N \\
& \leq &
64 \left(\frac{c_3 \ln N}{\gamma}\right)\gamma(N-L)L (N/2) + 
(N-L)L N \ln N (128 * 20 + 134) \\
& \leq &
(32 c_3 + 2694) (N-L)L N \ln N.
\end{eqnarray*}

By taking $c_3 = 188$ such that 
$c_3^2 \geq 4 (32 c_3 + 2694)$, we have 
\begin{eqnarray*}
t^2/2\sigma^2
& \geq & \frac{(2K(N-L)L\gamma)^2}{2\sigma^2} 
=  \frac{(2 c_3 (N-L)L \ln N)^2}{2\sigma^2} \geq
\frac{2 (c_3 (N-L)L \ln N)^2}
{(32 c_3 + 2694) N (N-L)L \ln N} \\
& \geq &
\frac{2 c_3^2 (N-L) L \ln N}{(32 c_3 + 2694) N} 
\geq  \frac{c_3^2 L \ln N}{(32 c_3 + 2694)} \geq 4 L \ln N.
\end{eqnarray*}
Thus $2e^{-t^2/2\sigma^2} \leq  
2 e^{-\frac{c_3^2 L \ln N}{(32 c_3 + 2694)}}
\leq 2 e^{-4 L \ln N} = \frac{2}{N^{4L}}$.
In summary, we have the following requirements.
Note that $N$ always falls into one of these cases.
For all cases, we require that $K \geq \Omega(\ln N/\gamma)$
(which is implicit for Case~$1$); the constant that we require
in $K$ for Case~$2$ is larger than that for Case~$3$, 
(i.e., $c_2 \geq c_3$ as in above), so that the two cases 
can overlap.
\begin{itemize}
\item
{\noindent 
{\bf Case 1: $16 \leq N \leq \log \log N/2\gamma$.}}
We require that $KN \geq \frac{1488 \ln N \log \log N} {\gamma^2}$, 
which implies that $K \geq 2976 \ln N/\gamma$.
\item
{\noindent 
{\bf Case 2: 
$\frac{\log \log N}{2\gamma} < N \leq \frac{K \log \log N}{20}$.}}
We require that $K \geq \frac{512 \ln N}{\gamma}$, and
$KN \geq \frac{2000 \ln N \log\log N}{\gamma^2}$.
\item
{\noindent {\bf Case 3: $N \geq \frac{K \log \log N}{20}$}}.
We require $K \geq \frac{188 \ln N}{\gamma}$.
\end{itemize}
\end{proofof}

\vskip 0.2in
\end{document}